\definecolor{codegreen}{rgb}{0,0.6,0}
\definecolor{codegray}{rgb}{0.5,0.5,0.5}
\definecolor{codepurple}{rgb}{0.58,0,0.82}
\lstdefinestyle{mystyle}{
    backgroundcolor=\color{white},
    commentstyle=\color{codegreen},
    keywordstyle=\color{magenta},
    numberstyle=\tiny\color{codegray},
    stringstyle=\color{codepurple},
    basicstyle=\ttfamily\footnotesize,
    breakatwhitespace=false,
    breaklines=true,
    captionpos=b,
    keepspaces=true,
    numbers=left,
    numbersep=5pt,
    showspaces=false,
    showstringspaces=false,
    showtabs=false,
    tabsize=2
}
\DeclareMathAlphabet{\mathsfit}{\encodingdefault}{\sfdefault}{m}{sl}
\SetMathAlphabet{\mathsfit}{bold}{\encodingdefault}{\sfdefault}{bx}{n}
\def\gA{{\mathcal{A}}}
\def\gB{{\mathcal{B}}}
\def\gD{{\mathcal{D}}}
\def\gH{{\mathcal{H}}}
\def\gL{{\mathcal{L}}}
\def\gS{{\mathcal{S}}}
\def\gT{{\mathcal{T}}}
\def\gX{{\mathcal{X}}}
\def\sB{{\mathbb{B}}}
\def\sN{{\mathbb{N}}}
\def\sR{{\mathbb{R}}}
\newcommand{\E}{\mathbb{E}}
\newcommand{\hattheta}{\widehat{\theta}}
\newcommand{\hatq}{\widehat{q}}
\newcommand{\hatw}{\widehat{w}}
\newcommand{\hatu}{\widehat{u}}
\newcommand{\hata}{\widehat{a}}
\newcommand{\utrunc}{u_\textup{trunc}}
\newcommand{\pipromptwise}{\pi^{\UCBabbv}}
\newcommand{\pipromptwiset}{\pi^{\UCBabbv}_t}
\newcommand{\piextend}{\pi^{\UCBabbv}_{\textup{extend}}}
\newcommand{\piextendt}{\pi^{\UCBabbv}_{\textup{extend}, t}}
\newcommand{\tildeu}{\widetilde{u}}
\newcommand{\tildea}{\tilde{a}}
\DeclareMathOperator*{\argmax}{arg\,max}
\DeclareMathOperator*{\argmin}{arg\,min}
\newcounter{protocol}
\newenvironment{protocol}[1][htb]{%
  \let\c@algorithm\c@protocol
  \renewcommand{\ALG@name}{Protocol}
  \begin{algorithm}[#1]%
  }{\end{algorithm}
}
\newenvironment{enumerate*}%
{\begin{enumerate}[leftmargin=*,topsep=0pt]%
		\setlength{\itemsep}{0pt}%
		\setlength{\parskip}{0pt}}%
	{\end{enumerate}}
\newcommand{\UCBabbv}{\textup{PromptWise}}
\newcommand{\CB}{\textup{Cost-Aware Contextual MAB}}
\newcommand{\CBabbv}{\textup{CA-CMAB}}
\theoremstyle{plain}
\newtheorem{theorem}{Theorem}
\newtheorem{proposition}{Proposition}
\newtheorem{lemma}{Lemma}
\theoremstyle{definition}
\newtheorem{definition}{Definition}
\newtheorem{assumption}{Assumption}
\theoremstyle{remark}
\newtheorem{remark}{Remark}
\def\showcomments{1}
\newcommand{\xy}[1]{[\textcolor{violet}{}]}
\definecolor{mplblue}{RGB}{31, 119, 180}
\definecolor{darkorange}{RGB}{255, 140, 0}
\title{PromptWise: Online Learning for Cost-Aware Prompt Assignment in Generative Models}
\author{
   Xiaoyan Hu\thanks{
	Department of Computer Science and Engineering, The Chinese University of Hong Kong. Email Address: \texttt{\{xyhu21,pick,farnia\}@cse.cuhk.edu.hk}}
   \and
   Lauren Pick\footnotemark[1]
   \and
   Ho-fung Leung\thanks{Independent Researcher. Email Address: \texttt{ho-fung.leung@outlook.com}}\footnotemark[2]
   \and
   Farzan Farnia\footnotemark[1]
}
\date{}
\begin{document}

\maketitle

\begin{abstract}
The rapid advancement of generative AI has provided users with a wide range of well-trained models to address diverse prompts. When selecting a model for a given prompt, users should weigh not only its performance but also its service cost. However, existing model-selection methods typically emphasize performance while overlooking cost differences. In this paper, we introduce PromptWise, an online learning framework that assigns prompts to generative models in a cost-aware manner. PromptWise estimates prompt–model compatibility to select the least expensive model expected to deliver satisfactory outputs. Unlike standard contextual bandits that make a one-shot decision per prompt, PromptWise employs a cost-aware bandit structure that allows sequential model assignments per prompt to reduce total service cost. Through numerical experiments on tasks such as code generation and translation, we demonstrate that PromptWise can achieve performance comparable to baseline selection methods while incurring substantially lower costs. The code is available at: \url{github.com/yannxiaoyanhu/PromptWise}.

\end{abstract}

\allowdisplaybreaks

\section{Introduction}
\label{sec:intro}
Recent advances in large language models (LLMs) and text-guided generative AI have expanded access to a broad array of services~\citep{Rombach_2022_CVPR, touvron2023llamaopenefficientfoundation, openai2024gpt4technicalreport, deepseekai2025deepseekv3technicalreport, geminiteam2025geminifamilyhighlycapable}. Users can now select among many well-trained models to address diverse prompts. This abundance raises a fundamental question: How should one assign prompts to models in a principled and accurate  manner?

A straightforward strategy is to choose the model with the highest averaged performance score and apply it universally to all prompts. However, the recent literature highlights that performance may vary significantly across prompt categories. A model may excel on one class of prompts yet perform suboptimally on others~\citep{qin2024diffusiongpt, frick2025prompt, hu2025onlinelearningapproachpromptbased}. Such discrepancies arise from differences in training data and architectural choices. To exploit this heterogeneity, prior studies propose either offline selector models~\citep{qin2024diffusiongpt, frick2025prompt} or online bandit algorithms~\citep{hu2025onlinelearningapproachpromptbased} that adaptively allocate prompts to specialized models.

\begin{figure*}[!t]
    \centering
    \includegraphics[width=0.85\linewidth]{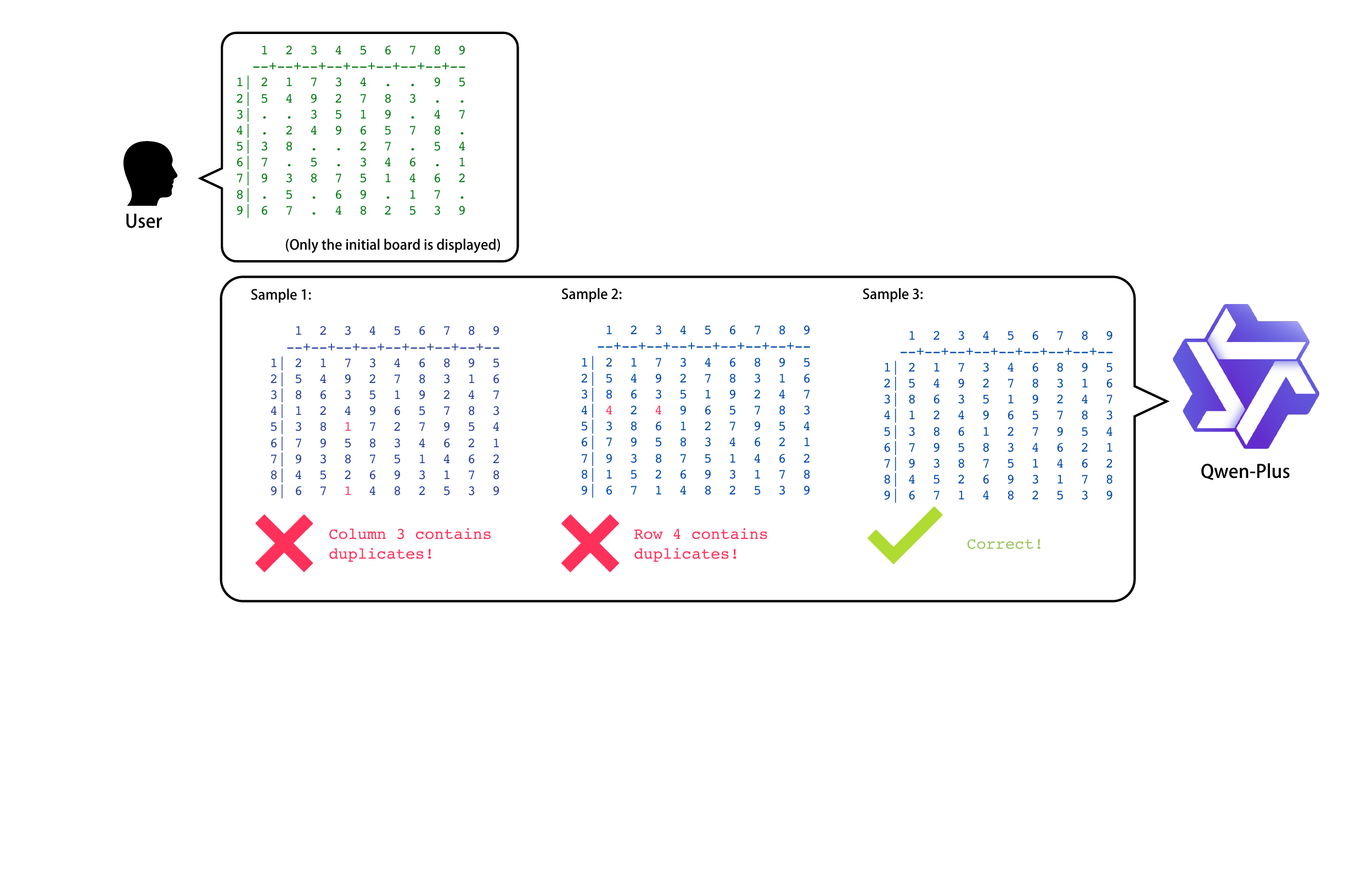}
    \caption{Illustration of cost-aware prompt assignment in Sudoku solving, where a single prompt may be routed to multiple models. PromptWise, via online learning, would route easier puzzles to the inexpensive \texttt{GPT-4o-mini}, while for harder instances it may escalate to stronger and more expensive available models \texttt{GPT-4o} and \texttt{o1}.}
    \label{fig1}
\end{figure*}

While these approaches improve prompt-specific performance, they all assume each prompt is \emph{assigned to exactly one model}, overlooking the role of monetary and computational costs in users' decisions on how to assign prompts to available models. In practice, users are not constrained to single-shot decisions. They may query multiple models for the same prompt, escalating to more expensive options only if cheaper attempts fail, as long as the cumulative expense is justified. 

For instance, consider the Sudoku example in Figure~\ref{fig1}. Suppose a user can choose among OpenAI’s \texttt{o1} (\$60 per million tokens (PMT)), \texttt{GPT-4o} (\$10 PMT), and \texttt{GPT-4o-mini} (\$0.6 PMT)\footnote{\url{https://platform.openai.com/docs/pricing}}. A seemingly cost-efficient strategy would be to first attempt the least expensive \texttt{GPT-4o-mini}, escalate to \texttt{GPT-4o} if unsuccessful, and resort to \texttt{o1} only for the hardest puzzles. Although the cheapest model may fail on complex tasks with high probability, the 100 times price difference could justify probing it first. Such multi-assignment strategies are natural in real-world deployments, yet absent from existing online model-selection frameworks.

In this work, we introduce \emph{PromptWise}, a framework for cost-aware online prompt-to-model assignment. At its core, PromptWise is a cost-aware contextual multi-armed bandit (CA-CMAB) formulation that explicitly incorporates both pricing heterogeneity and the possibility of multiple assignments per prompt. Unlike standard contextual bandits~\citep{NIPS2007_4b04a686, 10.1145/1772690.1772758, pmlr-v15-chu11a}, which commit to a single decision per round, CA-CMAB tracks both the best achieved performance per prompt and the cumulative cost of all attempts (Figure~\ref{fig:ca-cmab}).  

Specifically, \UCBabbv \, extends the classical Upper Confidence Bound (UCB) framework~\citep{892116, MAL-024} with a new sum-min regret formulation. For each prompt, only the best outcome across attempted models contributes to the performance regret, while costs are fully accumulated. This structure formalizes the trade-off between predictive reward and monetary expenditure, bridging the gap between theoretical online learning formulations and the realities of commercial generative AI services.  

We evaluate PromptWise through both controlled simulations, where ground-truth optimal strategies are known, and real-world generative AI tasks, including Sudoku and chess puzzles as well as code generation and translation benchmarks (HumanEval~\citep{chen2021evaluatinglargelanguagemodels}, HumanEval-X~\citep{zheng2023codegeex}, BigCodeBench~\citep{zhuo2024bigcodebench}). Across settings, PromptWise rapidly adapts to prompt distributions, achieves accuracy comparable to cost-unaware baselines, and reduces expenditures by large margins. In summary, this work's contributions are threefold:
\begin{itemize}[leftmargin=*]
    \item We formalize the cost-aware multi-assignment prompt-selection problem and highlight its practical importance for generative AI services.
    \item We propose \UCBabbv, a CA-CMAB algorithm that balances performance and cost via a novel sum-min regret metric.
    \item We demonstrate through simulations and real-world benchmarks that PromptWise achieves substantial cost reductions without sacrificing accuracy.
\end{itemize}

\begin{figure*}[!t]
    \centering
    \includegraphics[width=0.85\linewidth]{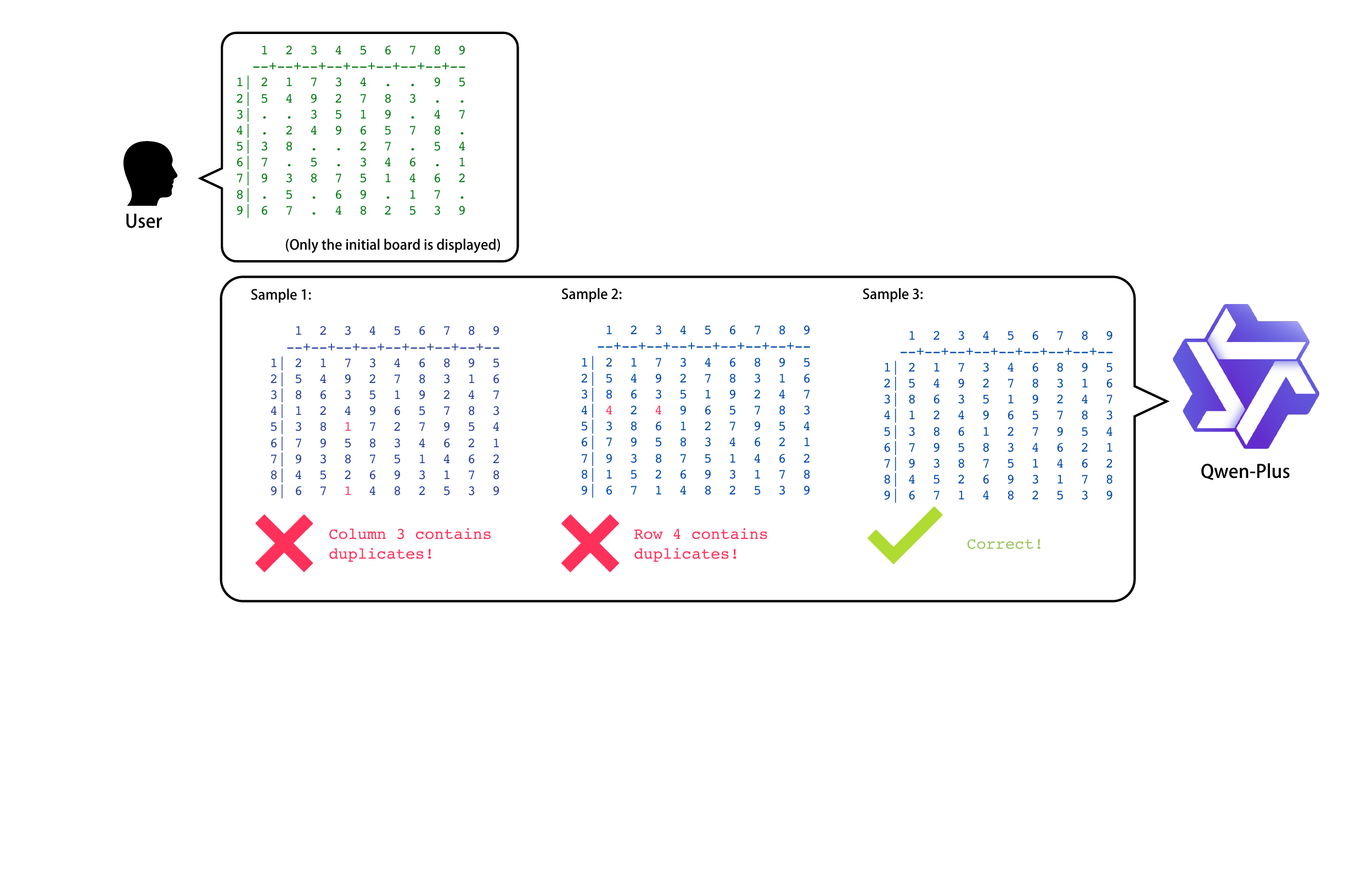}
    \caption{Interaction protocol of standard contextual bandits (top) versus our cost-aware contextual multi-armed bandit (CA-CMAB) framework in PromptWise (bottom). Unlike the standard setting, CA-CMAB enables multiple model assignments per prompt and explicitly balances accuracy (reward) with cumulative service cost, better reflecting practical usage scenarios.}
    \label{fig:ca-cmab}
\end{figure*}

\section{Related Work}
\label{sec:related_work}
\noindent\textbf{Generalized linear contextual bandits.}\quad Contextual bandit (CB) is a widely used framework for sequential decision-making with side information, which has been extensively used in recommendation systems and advertising. A relevant assumption in the CB setup is that the expected reward conditioned to an incoming context admits a generalized linear model~(GLM)~\citep{NIPS2010_c2626d85, pmlr-v70-li17c}. Particularly, the logistic bandit framework~\citep{pmlr-v119-faury20a, pmlr-v99-dong19a} has demonstrated stronger performance in applications where rewards are binary.

\noindent\textbf{Bandit problems with cost constraints.}\quad A line of study considers multi-armed bandits~(MAB) in a cost-aware setup, where each arm is associated with a cost distribution. The learner aims to not only optimize the cumulative return but also minimize the cost of pulling the arms. \citet{Ding_Qin_Zhang_Liu_2013} study cost-aware bandits with a constraint on the total paying cost. In Multi-fidelity MAB~\citep{NIPS2016_2ba59664, wang2023multifidelity}, the learner may choose to observe (pseudo-)rewards of an arm at some fidelity level and incur the corresponding cost. Recently, \citet{kanarios2024cost} study cost-aware best-arm identification, where the goal is to identify the largest reward arm using the minimum expected cost. To the best of our knowledge, cost-aware bandit algorithms in the literature consider optimizing a linear utility, e.g., the cumulative return minus (a regularized) cumulative cost.

\noindent\textbf{Online learning methods for training and selecting generative models.}\quad A line of research incorporates online learning and reinforcement learning in the training and selecting generative models. \citet{grnarova2017onlinelearningapproachgenerative} formulate the problem of training a Generative Adversarial Network~(GAN) as finding a mixed strategy in a zero-sum game. They propose an online learning-based training algorithm, which provably converges to an equilibrium. \citet{daskalakis2018traininggansoptimism} show that optimistic mirror descent ~(OMD) can address the limit cycling issue in training GANs. More recent work proposes the reinforcement learning from human feedback~(RLHF) pipeline for training large language models~(LLMs)~\citep{NEURIPS2022_b1efde53, NEURIPS2023_a85b405e, pmlr-v235-munos24a, dong2024rlhfworkflowrewardmodeling} and image-based generative models~\citep{NEURIPS2023_be93b165, NEURIPS2023_33646ef0}. Regarding the selection of generative models, recent work proposes online learning-based algorithms~\citep{hu2025onlinelearningapproachpromptbased} to a optimize performance score among a group of generative models. \citet{wang2025mixllmdynamicroutingmixed} considers finding a routing policy among a group of LLMs to maximize a linear utility of the response quality and the model cost.

\section{{\CB}}
\label{sec:pre}

\textbf{Interaction protocol.}\quad In this section, we introduce the framework of {\CB}~({\CBabbv}). The interaction protocol is given in Protocol~\ref{ca-cmab}. At each \textit{step} $t \in [T]$, where $[T] := \{1, 2, \cdots, T\}$, a \textit{context} $x_t \in \gX \subseteq \sB^d := \{ x \in \sR^d: \| x \|_2 \le 1 \} $~(e.g., the input text prompt) is drawn from the contextual distribution $p_0$. Different from the standard contextual bandit setting~\citep{NIPS2007_4b04a686}, the learner can interact under the context $x_t$ for multiple \emph{rounds} by taking a sequence of actions in $\Bar{\gA} := \{ a_0 \} \cup \gA$, where $a_0$ is the \textit{null action}, i.e., the user decides to terminate the process for prompt $x_t$ and moves to the next prompt, and $\gA$ is the set of \textit{arms}. 

Specifically, after pulling an arm $a_{t,i} \in \gA$ at the $i$-th round, the learner observes a reward $r_{t,i} \sim R_{a_{t,i}}(x_t)$ and pays a fixed cost $c_{t,i} := c_{a_{t,i}}$. The environment moves to the next task if the learner chooses the null action $a_0$ or the allocated round budget is completely used up~(i.e., $i = \tau_{\max}$). We denote by $\tau_t \in [\tau_{\max}] := \{ 0, 1, \cdots, \tau_{\max}\}$ the (random) number of arms pulled within step $t$. For convenience, we define $a_{t,\tau_t + 1} =: a_0$. Note that we assume that taking the null action yields a zero reward and cost, i.e., $R_{a_0} = c_{a_0} = 0$, which means that the action of moving to the next prompt costs nothing and also does not provide any reward. In this paper, we exclusively focus on the scenario where the rewards are \textit{binary}; however, the bandit framework can further extend to settings with non-binary rewards. 

\begin{assumption}[Binary reward]
\label{aspt:reward}
For each arm $a \in \gA$, the binary reward distribution follows $\text{Ber}(q_a(x))$ given the context outcome $x \in \gX$, where the success probability $q_a(x) \in [0,1]$.
\end{assumption}

\begin{remark}
The assumption of binary reward has been widely considered in the bandit literature, including Bernoulli bandits~\citep{02852583-a86c-3394-91f4-10c713c6e48c, pmlr-v23-agrawal12} and logistic bandits~\citep{pmlr-v99-dong19a, pmlr-v119-faury20a}. We note that a binary reward suits applications with success/failure outputs, such as puzzle-solving~(correct or incorrect solution) and code generation~(passing all the test cases or not). 
\end{remark}

\begin{protocol}[!t]
\begin{algorithmic}[1]
\caption{{\CB} ({\CBabbv})}
\Require step $T$, context distribution $p_0$, set of actions $\Bar{\gA} = \{a_0\} \cup \gA$, cost $\{ c_a \}_{a \in \Bar{\gA}}$, reward distributions $\{ R_a \}_{a \in \Bar{\gA}}$, cost parameter $\lambda > 0$, round budget $\tau_{\max} \in \sN_+$.
\For{step $t=1,2,\cdots,T$}
    \State The environment draws context $x_t \sim p_0$.
    \State Follow action $a_{t,1}$ and spend budget $c_{a_{t,1}}$.
    \State Initialize the counter $\tau_t \leftarrow 0$.
    \While{$a_0$ not selected \textbf{and} $\tau_t \le \tau_{\max}$}
        \State Select action $a_{t,\tau_t+1}$ and incur cost $c_{a_{t,\tau_t+1}}$.
        \State The environment draws reward $r_{t,\tau_t} \sim R_{a_{t,\tau_t}}(x_t)$.
        \State Increment the counter $\tau_t \leftarrow \tau_t + 1$
    \EndWhile
\EndFor
\label{ca-cmab}
\end{algorithmic}
\end{protocol}

\noindent \textbf{Policy and cost-aware utility.}\quad The learner interacts with the environment for $T$ steps. Upon receiving context $x_t \in \gX$ at step $t$, the learner plays (a sequence of) actions according to some policy $\pi_t$, which may depend on the previous observations. The learner aims to find a policy that maximizes the following \textit{utility} function, which trades off between the maximum obtained reward and the cumulative paid cost, where $\lambda \ge 0$ is called the \textit{cost coefficient} parameter:
\begin{equation}
\label{obj}
\begin{aligned}
    \pi^\star(x_t) := \argmax_{\pi} u^\pi(x_t), \textup{ where } u^\pi(x_t) & := \E_\pi \left[ \max_{ i = 1,\cdots,\tau_t } r_{t,i} - \lambda \sum_{i=1}^{\tau_t}c_{t,i} \right]
\end{aligned}
\end{equation}

\begin{remark}[Comparison to contextual bandits]
Figure~\ref{fig:ca-cmab} highlights the key difference between our proposed CA-CMAB framework and the standard contextual bandit~(CB) setting. When applying a standard CB algorithm to the prompt assignment task, the algorithm is supposed to have only one model selection per prompt. On the other hand, CA-CMAB allows the learner to act under a context multiple times before moving to the next task (i.e., prompt). Therefore, the CA-CMAB framework allows a cost-aware strategy to attempt the less expensive options even if they have a marginal chance of success, since a likely failure outcome can be followed by the selection of more expensive models to address the same prompt. However, the one-round interaction of standard CB for each prompt limits the bandit algorithm to attempt models with higher success chance, since the learner is supposed to move to next prompt in the subsequent round.
\end{remark}

Under the assumption of binary rewards and an unlimited budget for each round, we characterize the optimal policy in the following proposition. The proof is in Appendix~\ref{sec: a.1}.

\begin{proposition}[Oracle]
\label{thm-oracle}

Under Assumption~\ref{aspt:reward} and with unlimited round budget, an optimal policy~(\ref{obj}) is as follows: for any incoming context $x \in \gX$, it takes action $a^\star(x)$ if a reward of \num{1} is not observed, which is given by
\begin{equation}
\label{opt-act}    a^\star(x) =   \begin{cases}
        a_0, \;\quad\textup{if } \max_{a \in \gA}\{ q_a(x) - \lambda c_a\} \le 0, \\
        \argmin_{a \in \gA} \frac{c_a}{q_a(x)}, \quad\quad\quad\;\,  \textup{otherwise},
    \end{cases}
\end{equation}
where $q_a(x)$ and $c_a$ are the success probability conditioned to context $x$ and the (fixed) cost for any arm $a \in \gA$, respectively. The expected total cost of this policy is $\frac{c_{a^\star(x)}}{q_{a^\star(x)}}$, and the expected utility is given by
\begin{equation}
\label{utility}
    u^{\star}(x) = \mathbbm{1}\left[ a^\star(x) \neq a_0 \right] \cdot \left( 1 - \lambda \cdot \frac{c_{a^\star (x)}}{q_{a^\star (x)}(x)} \right).
\end{equation}
\end{proposition}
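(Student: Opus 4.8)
The plan is to exploit the binary, \emph{absorbing} structure of the reward to collapse the infinite-horizon, history-dependent problem onto a one-line optimization, and then to match an upper bound on $u^\pi(x)$ with the value attained by the stationary policy that keeps pulling $a^\star(x)$. Fix a context $x$, work in the unlimited-budget regime $\tau_{\max}=\infty$, and adopt the convention that once the episode for $x$ has terminated the policy keeps playing the null action $a_0$, so every run is an infinite sequence of plays that incurs no extra reward or cost (as $R_{a_0}=c_{a_0}=0$). For an \emph{arbitrary} (randomized, history-dependent) policy $\pi$, index the rounds $j=1,2,\dots$, write $A_j\in\bar{\gA}$ for the arm played at round $j$, $R_j\sim\Ber(q_{A_j}(x))$ for its reward ($q_{a_0}:=0$), and $W_j:=\prod_{l<j}(1-R_l)\in\{0,1\}$ for the indicator that no reward $1$ has appeared before round $j$; then $W_1=1$ and $W_j$ is measurable with respect to the history up to the choice of $A_j$. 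Since rewards are binary, $\max_{i=1,\dots,\tau_t}r_{t,i}=\max_{j\ge1}R_j=\sum_{j\ge1}W_jR_j$ (summing the disjoint events that the first success occurs at round $j$), so Assumption~\ref{aspt:reward} and the tower rule give $\E_\pi[\max_i r_{t,i}]=\sum_{j\ge1}\E_\pi[W_j\,q_{A_j}(x)]$; and since $c_{A_j}\ge0$ with $c_{A_j}=0$ whenever the episode has already ended, $\E_\pi[\sum_j c_{A_j}]\ge\E_\pi[\sum_j W_jc_{A_j}]$. Combining,
\[
u^\pi(x)\;\le\;\sum_{j\ge1}\E_\pi\!\big[\,W_j\,(q_{A_j}(x)-\lambda c_{A_j})\,\big].
\]

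Next I would bring in the defining property of $a^\star(x)=\argmin_a \frac{c_a}{q_a(x)}$ via the elementary inequality that, for every $a\in\bar{\gA}$, $q_a(x)-\lambda c_a\le\phi\,q_a(x)$, where $\phi:=1-\lambda\,\frac{c_{a^\star(x)}}{q_{a^\star(x)}(x)}$ — immediate from $\frac{c_a}{q_a(x)}\ge\frac{c_{a^\star(x)}}{q_{a^\star(x)}(x)}$ when $q_a(x)>0$, and trivial otherwise. I then split along \eqref{opt-act}. If $\max_a\{q_a(x)-\lambda c_a\}\le0$, every summand in the bound is $\le0$, so $u^\pi(x)\le0$, a value attained by playing $a_0$ from the start; hence $a^\star(x)=a_0$ is optimal with utility $0$, matching \eqref{utility}. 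Otherwise, I would record the equivalences $\max_a\{q_a(x)-\lambda c_a\}>0\iff\min_a\frac{c_a}{q_a(x)}<\tfrac1\lambda\iff\phi>0$ (which in particular force $q_{a^\star(x)}(x)>0$), so that the pointwise inequality and $\phi\ge0$ yield $u^\pi(x)\le\phi\sum_j\E_\pi[W_jq_{A_j}(x)]=\phi\cdot\E_\pi[\max_i r_{t,i}]\le\phi$. To see this is tight, evaluate the stationary policy that pulls $a^\star(x)$ at every round until a reward $1$ appears and then plays $a_0$ forever: the number of pulls up to the first success is geometric with parameter $q_{a^\star(x)}(x)>0$, so $\max_i r_{t,i}=1$ almost surely, the expected number of pulls is $1/q_{a^\star(x)}(x)$, the expected total cost is $c_{a^\star(x)}/q_{a^\star(x)}(x)$, and the utility equals exactly $\phi$. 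This gives the expected-cost claim and formula~\eqref{utility} (the factor $\mathbbm{1}[a^\star(x)\neq a_0]$ just merges the two cases).

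I expect the main obstacle to be not the optimization — which reduces to the one-line $\argmin$ above — but the bookkeeping in the first step: justifying the tower-rule identity $\E_\pi[\max_i r_{t,i}]=\sum_j\E_\pi[W_jq_{A_j}(x)]$ and the cost inequality \emph{uniformly} over all history-dependent randomized policies and over the random stopping time $\tau_t$, including wasteful policies that keep paying after a success — this is exactly where the conventions $R_{a_0}=c_{a_0}=0$ do the real work of turning variable-length runs into a clean infinite sum. The only other care needed is a handful of degenerate cases — arms with $q_a(x)=0$ or $c_a=0$, and $\lambda=0$ — all absorbed by reading $c_a/q_a(x)$ as $+\infty$ whenever $q_a(x)=0$ and checking that each inequality degenerates in the intended direction.
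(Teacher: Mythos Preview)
Your argument is correct and takes a genuinely different route from the paper. The paper casts the problem as a three-state infinite-horizon MDP (states $\{0,1,\gT\}$ tracking whether a success has occurred), writes down the Bellman equation $V^\star(0)=\max_{a\in\bar{\gA}}\{q_a-\lambda c_a+(1-q_a)V^\star(0)\}$, and solves this fixed point to read off both $a^\star$ and $V^\star(0)=\max\{0,\,1-\lambda c_{a^\star}/q_{a^\star}\}$. You instead bound $u^\pi(x)$ for \emph{every} policy directly, via the telescoping identity $\E_\pi[\max_i r_{t,i}]=\sum_j\E_\pi[W_jq_{A_j}]$ together with the pointwise inequality $q_a-\lambda c_a\le\phi\,q_a$ (which is precisely the defining property of $a^\star=\argmin_a c_a/q_a$), and then exhibit the stationary geometric policy that attains the bound. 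The paper's approach is shorter and would extend mechanically to richer state structures, but leans on standard MDP optimality theory for the Bellman fixed point. Your approach is more self-contained, makes transparent \emph{why} the ratio $c_a/q_a$---rather than the linear score $q_a-\lambda c_a$---is the correct index, and explicitly handles wasteful policies that keep paying after a success (a case the MDP reduction disposes of implicitly through $V^\star(1)=0$).
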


\section{The {\UCBabbv} Algorithm}
\label{sec:alg}
In this section, we propose {\UCBabbv} as an online learning algorithm to address the prompt-based model selection task in the CA-CMAB setting. Given the input prompt playing the role of context variable, {\UCBabbv} selects the action according to \ref{opt-act} according to the estimated success probabilities for every arm, i.e., the prompt-based generative model. As the success probabilities are estimated using the limited previous observations and could be different from the true success probability (based on the true underlying distribution of the models), directly plugging them into Equation~(\ref{opt-act}) can lead to a sub-optimal choice of generative models. To address the issue in the online selection scenario, our proposed {\UCBabbv} applies the Upper Confidence Bound~(UCB) approach~\citep{892116, MAL-024} widely used in the bandit literature, to avoid sub-optimal model assignment policies in the online selection process.

\begin{figure*}[!t]
    \centering
    \includegraphics[width=0.32\linewidth]{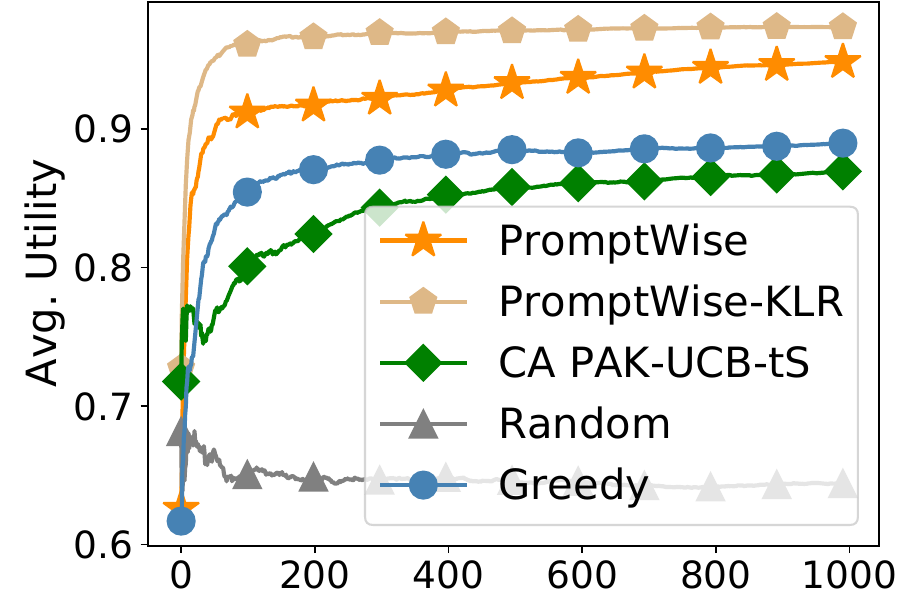}
    \hfill
    \includegraphics[width=0.32\linewidth]{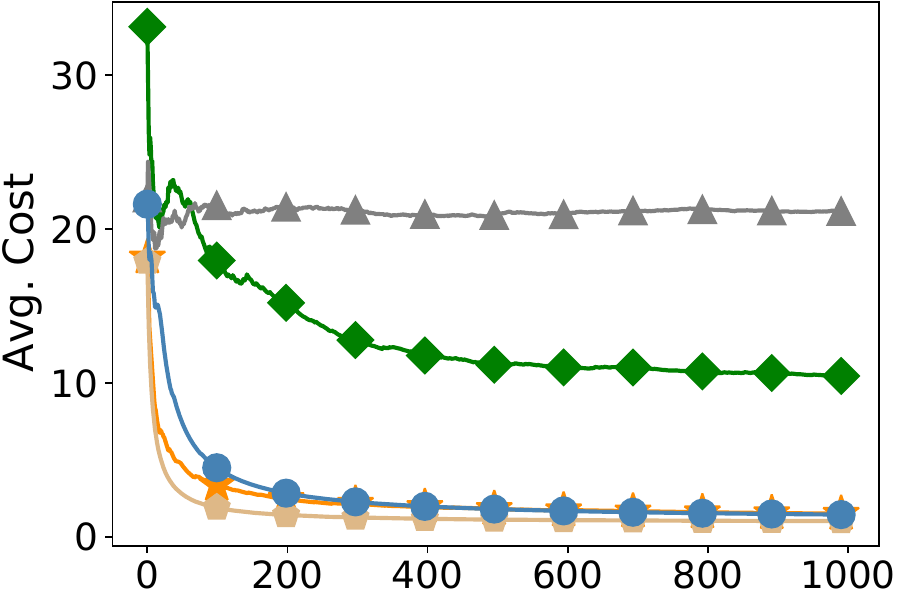}
    \hfill
    \includegraphics[width=0.32\linewidth]{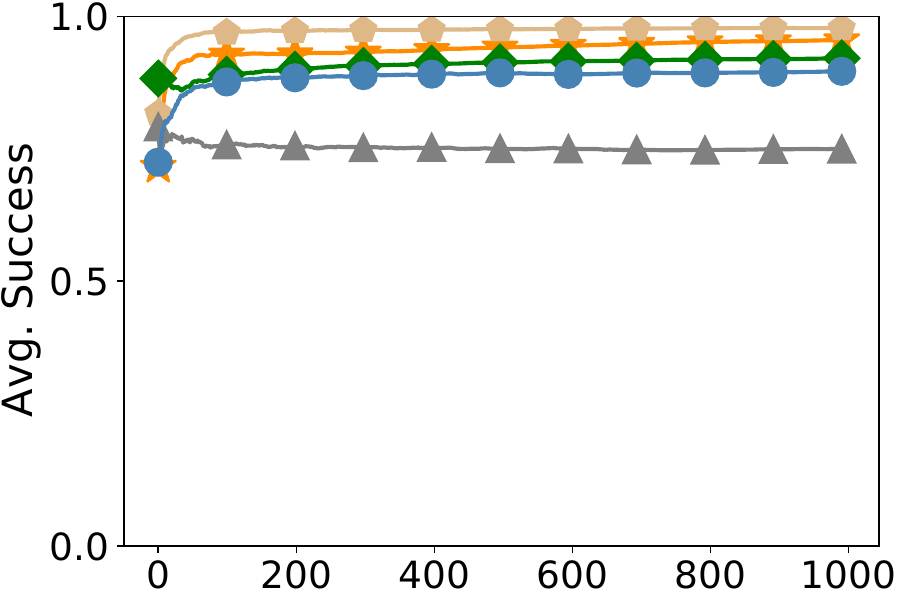}
    \caption{Code Completion on HumanEval~(Task 2): Gemini-2.5-Flash-preview, Deepseek-Chat, Qwen-Plus, GPT-4o, and Claude-Opus-4. Results are averaged over \num{20} trials.}
    \label{fig:humaneval}
\end{figure*}

\noindent \textbf{Overview of {\UCBabbv}.}\quad We now introduce {\UCBabbv}~(presented in Algorithm~\ref{alg:ucb}). In the exploration phase~(Lines 1-4), {\UCBabbv} plays every arm for $\tau_{\exp}$ initial rounds, where $\tau_{\exp} \in \sN_+$ is a tuning parameter. Upon observing the reward values, {\UCBabbv} calculates the maximum-likelihood estimation (MLE) of the weight vectors $\{ \theta_a^\star: a \in \gA \}$, which is given by
\begin{align}
\label{mle-1}
    \hattheta_a := \underset{\theta \in \sR^d}{\arg\!\max} \Bigg\{ \widehat{\ell}_a(\theta):= \sum_{(X,R) \in \gD_a} \Bigl[ R\cdot\log \mu\bigl(\langle \theta, X \rangle \bigr) + (1 - R)\cdot\log \left(1 - \mu\bigl(\langle \theta, X \rangle\bigr)\right) \Bigr] \Bigg\}.
\end{align}
The MLE is utilized to estimate the success probabilities~(Lines 3 and 6). At any subsequent step~(Lines 5-17), {\UCBabbv} interacts with the environment by attempting to compute the oracle action~(\ref{opt-act}) on a set of estimated success probabilities~(Lines 7-15). We include an exploration bonus when computing the estimate $\hatq_a$, which is given by
\begin{align}
\label{opt-hatq}
    \hatq_a &:= \mu\left(x_t^\top \hattheta_a + \alpha \|x_t\|_{V_a^{-1}} \right), \textup{ where } V_a = \sum_{(X,R) \in \gD_a} XX^\top.
\end{align}
The design of the exploration bonus follows the UCB methods for the generalized linear contextual bandits~\citep{NIPS2010_c2626d85, pmlr-v70-li17c}, and it can be shown that $\hatq_a$ upper bounds the groundtruth value $q_a(x_t)$ with a high probability. To interact with the environment, {\UCBabbv} attempts to predict the optimal action~(\ref{opt-act}) by replacing the groundtruth success probabilities with their UCB estimates~(\ref{opt-hatq}). Specifically, {\UCBabbv} takes the null action $a_0$ and moves on to the next step if all the arms are predicted to fail to address the task~(Line 9). Otherwise, {\UCBabbv} plays the arm that yields the highest predicted utility~(Line 11). After receiving a new observation, {\UCBabbv} performs inner-step updates to both the MLE and the estimated success probability for the played arm~(Line 14). Hence, {\UCBabbv} can utilize the knowledge of the observations in the earlier rounds to adjust the arm selection in the subsequent rounds.

\begin{algorithm}[!t]
\begin{algorithmic}[1]
\caption{\UCBabbv}
\Require step $T$, cost parameter $\lambda > 0$, regression dataset $\gD_a \leftarrow \varnothing$ for $a \in \gA$, $V_a \leftarrow \bm{0}_{d \times d}$, round budget $\tau_{\max}$, tuning parameter $\tau_\textup{exp}$ and $\alpha$
\For{each arm $a \in \gA$}
    \State Collect samples $\gD_a \leftarrow \gD_a \cup \{ (x_s, r_s) \}_{s = 1}^{\tau_{\exp}}$ by pulling arm $a$ (and moving to the next context) upon observing contexts $\{x_s\}_{s = 1}^{\tau_{\exp}}$, which are drawn from the environment.
    \State Compute the MLE $\widehat{\theta}_a$ defined in Equation~(\ref{mle-1}) by gradient ascent.
\EndFor
\For{step $t=|\gA|\tau_\textup{exp} + 1, \cdots, T$}
    \State Observe context $x_t$ and compute the estimated success probability $\hatq_a$ for each arm $a \in \gA$, which is given by Equation~(\ref{opt-hatq}).
    \While{not playing null action $a_0$}
        \If{$\max_{a \in \gA} \{\hatq_a - \lambda\cdot c_a \} \le 0$ \textbf{or} hitting the round budget \textbf{or} receiving a reward of 1}
        \State Play null action $\hata \leftarrow a_0$.
        \Else
        \State Play arm $\hata \leftarrow \argmin_{a \in \gA} \frac{c_a}{\hatq_a}$.
        \State The environment draws a reward $r$.
        \State Update dataset $\gD_{\hata} \leftarrow \gD_{\hata} \cup \{ (x_t, r) \}$.
        \State Compute the MLE $\hattheta_a$ and estimate $\hatq_{\hata}$.
        \EndIf
    \EndWhile
\EndFor
\label{alg:ucb}
\end{algorithmic}
\end{algorithm}

\textbf{Extension to kernel methods.}\quad The original {\UCBabbv} algorithm utilizes a linear function (inside the logistic function) to predict the success probability. To further improve the predictions, one approach is to incorporate non-linearity using kernel methods. In the following, we propose {\UCBabbv}-KLR~(see Algorithm~\ref{alg:ucb-klr}) as an extension of {\UCBabbv} to kernel-based prediction functions. Let $k: \sR^d \times \sR^d \to \sR$ denote a kernel function, e.g., the radial basis function~(RBF) kernel $k_\text{RBF}^\sigma(x, x') = \exp( - \frac{ \| y - y' \|_2^2 }{2\sigma^2}  ) $. Given the regression dataset $\gD_a$, we denote by $\bm{K}_{\gD_a} := [k(x,x')]_{(x, r), (x',r') \in \gD_a} \in \sR^{|\gD_a| \times |\gD_a|}$ the Gram matrix, and the corresponding (squared) matrix norm is given by $\| w \|_{\bm{K}_{\gD_a}}^2 := w^\top \bm{K}_{\gD_a} w$ for every $w \in \sR^{|\gD_a|}$. Let $\beta \ge 0$ denote a tuning parameter. {\UCBabbv}-KLR leverages kernel logistic regression~(KLR)~\citep{NIPS2001_2eace51d, wahba2018soft} to predict the success probability for the incoming context $x_t$:
\begin{equation*}
    \hatq_a(x_t) = \mu\left( \sum_{(X,R) \in \gD_a} \hatw_a^X k(X, x_t) + \alpha \cdot \gB ( x_t | \gD_a) \right),
\end{equation*}
where $\gB ( x_t | \gD_a)$ is the bonus term and the weight $\hatw_a := [\hatw_a^X]_{(X, R) \in \gD_a}^\top \in \sR^{|\gD_a|}$ is solved from minimizing the (regularized) negative log-likelihood:
\begin{equation*}
\begin{aligned}
    \hatw_a &:= \argmin_{w \in \sR^{|\gD_a|}} \ell (w),
\end{aligned}
\end{equation*}
where we consider the objective function:
\begin{equation*}
\begin{aligned}
    \ell (w) = - \sum_{(X,R) \in \gD_a} \Big[R \cdot \log \hatq_a(X, w|\gD_a) + (1 - R) \cdot \log (1 - \hatq_a(X, w|\gD_a)) \Big] + \beta \| w \|^2_{ \bm{K}_{\gD_a} }
\end{aligned}
\end{equation*}
Similar to the {\UCBabbv} algorithm, the bonus term is designed to quantify the uncertainty in predicting the success probabilities for the incoming context $x_t$. One possible choice is given by $\gB( x_t | \gD_a ) = \beta^{ -\frac{1}{2} } ( k(x_t, x_t) - k_{x_t}^\top ( \bm{K}_{\gD_a} + \beta I)^{-1} k_{x_t} )^{\frac{1}{2}}$ in the kernelized contextual bandit literature~\citep{valko2013finitetimeanalysiskernelisedcontextual}, where $k_{x_t} = [k(x_t, X)]_{(X,R) \in \gD_a} \in \sR^{|\gD_a|}$.

\section{Numerical Experiment}
\label{sec:exp}

\begin{figure*}[!t]
    \centering
    \includegraphics[width=0.32\linewidth]{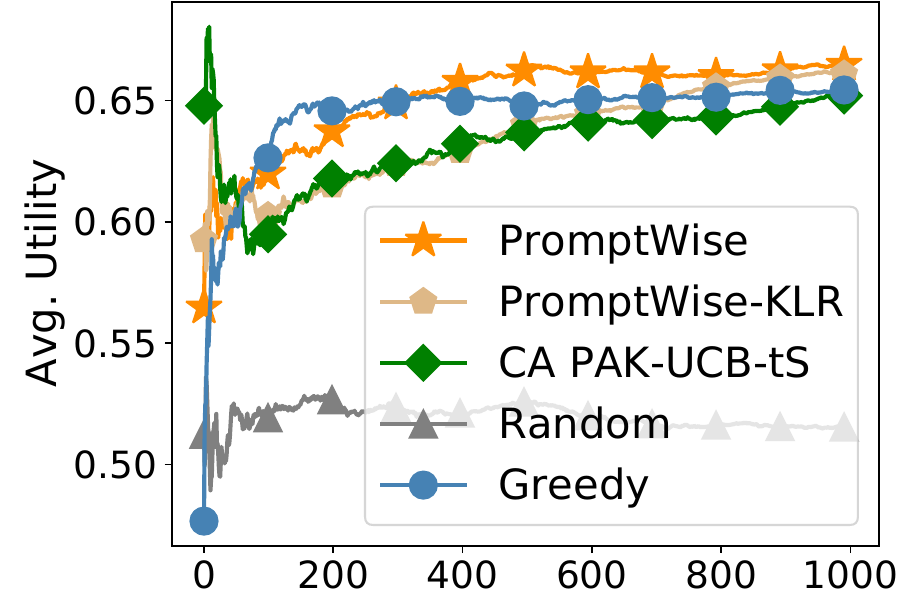}
    \hfill
    \includegraphics[width=0.32\linewidth]{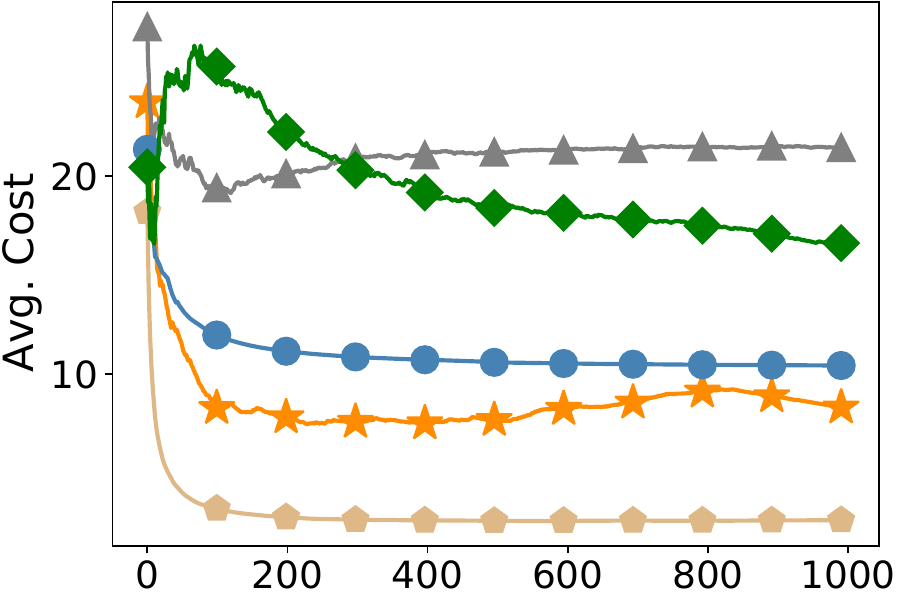}
    \hfill
    \includegraphics[width=0.32\linewidth]{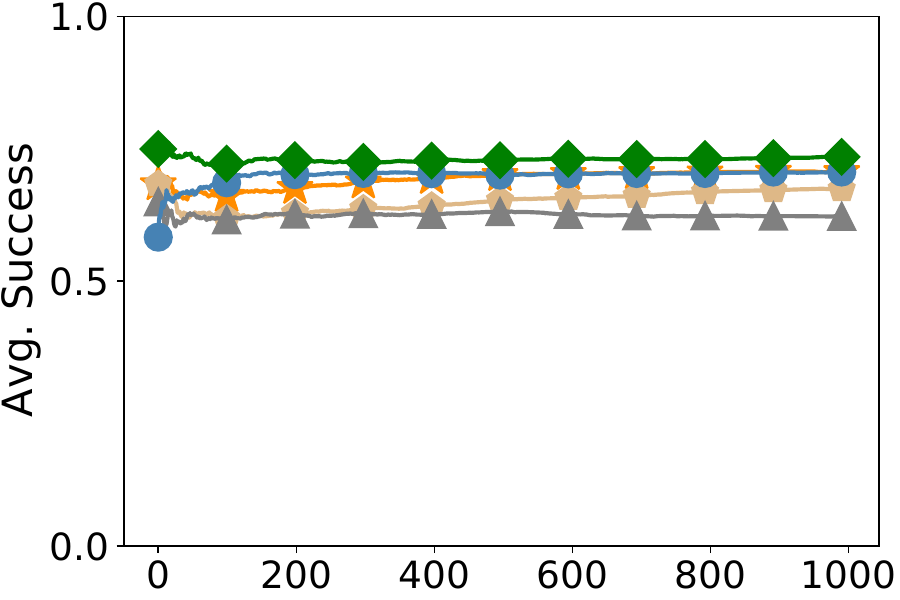}
    \caption{Code Translation on HumanEval-X Benchmark~(Task 3): Gemini-2.5-Flash-preview, Deepseek-Chat, Qwen-Plus, GPT-4o, and Claude-Opus-4. The LLM is provided with C++~(or Java) code and is asked to translate it into Java~(or C++). Results are averaged over 20 trials.}
    \label{fig:code-trans}
\end{figure*}

We test the proposed {\UCBabbv} algorithm and the baseline methods on various tasks, including solving random Sudoku and chess puzzles, code completion, code translation, and text-to-image generation. In the experiments, we report numerical results of the selection among a group of large language models~(LLM). Details and additional results can be found in Appendix~\ref{appendix:exp-details}.

\noindent \textbf{Summary.}\quad We highlight main findings in our numerical experiments. First, the proposed {\UCBabbv} algorithm can effectively balance between cost and model performance conditioned to the input prompt. Specifically, {\UCBabbv} strategically queries answers from lower-cost models first, proceeding to more expensive options only if the cheaper models cannot adequately address the prompt. Second, compared to the baseline methods, {\UCBabbv} can usually attain higher success rates (in addressing the input prompt) and incur lower cost, which showcases its superiority in cost-performance optimization. Third, {\UCBabbv} can adapt to new models and prompts in the selection process, which demonstrates its potential and practical efficacy in real-world applications.

\subsection{Experimental Setup}

\textbf{Baselines.}\quad We compare the proposed \textcolor{darkorange}{{\UCBabbv}}~(Algorithm~\ref{alg:ucb}) and \textcolor{Tan}{{\UCBabbv}-KLR}~(Algorithm~\ref{alg:ucb-klr}) algorithms to the following baseline methods: 1) \textcolor{ForestGreen}{CA PAK-UCB-tS}: the cost-aware variant of the contextual bandit (a single assignment per prompt) PAK-UCB algorithm~\citep{hu2025onlinelearningapproachpromptbased} using an RBF kernel, which seeks to maximize the following cost-aware objective, i.e., $\arg\max_{a \in \mathcal{A}} \{ \widehat{q}_a - \lambda c_a \}$. Specifically, CA PAK-UCB-tS is a kernel-based contextual bandit algorithm that selects a model according to the input prompt to maximize the expected reward at the beginning of each step and keeps sampling from the chosen model until a satisfactory answer is obtained or the budget is used up,
2) \textcolor{gray}{Random}: randomly selecting a model and then moving on to the next prompt, 
and 3) \textcolor{mplblue}{Greedy}: selecting the model that maximizes the (average) success rate and then moving on to the next prompt. In the Appendix, we report results for four additional baselines: 4) \textcolor{gray}{Random-till-Succeed~(RtS) ('- -')}: randomly selecting a model until a satisfactory answer is obtained or the round budget is used up, 5) \textcolor{mplblue}{Greedy-till-Succeed~(GtS) ('- -')}: selecting the model that maximizes the (average) success rate, until a satisfactory answer is obtained or the budget is used up, 6) \textcolor{BlueGreen}{Lowest-cost}: always selecting the cheapest model, and 7)~\textcolor{RedViolet}{Highest-cost}: always selecting the most expensive model. In the experiments, all the algorithms explore each arm at least once, except for the Lowest-cost and Highest-cost baselines.

\noindent\textbf{Performance metrics.}\quad We evaluate the algorithmic performance using the following metrics: 1) Avg. Utility: utility per step attained by the algorithm, i.e., $T^{-1} \sum_{t=1}^T u(x_t)$, 
2) Avg. Cost: cost per step paid by the algorithm, i.e., $T^{-1} \sum_{t=1}^T\sum_{i=1}^{\tau_t} c_{t,i}$, and 3) Avg. Success: the proportion of steps where the selected model successfully addresses the prompt.

\noindent\textbf{Candidate models and costs.}\quad We consider the following models in Tasks 1-3: Gemini-2.5-Flash-preview~(\$\num{0.6} PMT), Deepseek-Chat~(\$\num{1.1} PMT), Qwen-Plus~(\$\num{1.2} PMT), GPT-4o~(\$\num{10.0} PMT), and Claude-Opus-4~(\$\num{75.0} PMT). Our selection of candidate models encompasses a broad range of service providers, with significant price differences. In the experiments, the model costs are calculated as the total price of 1M input and output tokens~(See Table~\ref{tab:model-price} in the Appendix). 

\noindent\textbf{Hyperparameters.}\quad In all the experiments, the cost parameter is set to be $\lambda = 0.01$, and the round budget is set to be $\tau_{\max} = 5$. We set the hyperparameters in the {\UCBabbv} and {\UCBabbv}-KLR algorithms as $\tau_{\exp}=1$ and $\alpha=\sqrt{2\log(2|\mathcal{A}|/\delta)}$ following a standard choice in the UCB literature, where $
\delta=0.05$ is the failure probability. For the {\UCBabbv}-KLR algorithm, we utilize the RBF kernel with $\sigma = 3$, and we set the tuning parameter $\beta = 1$. 

\subsection{Empirical Results of PromptWise Applications}

\textbf{Results for Task 1 ``Puzzle-solving games''.}\quad In the first setup, we utilize LLMs to solve randomly generated $9\times9$ Sudoku games. The LLM is given the initial board with \num{5}-\num{30} blank entries. The sample prompt and answers are visualized in Figures~\ref{sudoku-prompt} and~\ref{sample-ans-sudoku} in the Appendix. For the {\UCBabbv}, {\UCBabbv}-KLR, and CA PAK-UCB-tS algorithms, we utilize a \num{5}-dimensional one-hot embedding as the input context, which is based on the number of blank entries in the initial board. Our results show that {\UCBabbv} can significantly outperform the baseline methods on utility and achieve a good balance between the success rate and the cost~(Figure~\ref{fig:sudoku-full} in the Appendix). 

In the second setup, we prompt LLMs to solve chess puzzles in the Lichess Puzzle Database\footnote{\url{https://database.lichess.org/\#puzzles}}. The LLM is given the initial chessboard in the FEN notation and is asked to provide the best single move. The sample prompt and answers are visualized in Figure~\ref{sample-ans-chess} in the Appendix. 
We utilize the RoBERTa embedding of the prompt as the input context to the {\UCBabbv}, {\UCBabbv}-KLR, and CA PAK-UCB-tS algorithms. Results are summarized in Figure~\ref{fig:chess-puzzle-full} in the Appendix.

\begin{table}[!h]
\centering
\begin{tabular}{c|c c}
    \toprule
    Algorithm & Success (\%) & Cost \\
    \midrule
     \textcolor{darkorange}{{\UCBabbv}} & \num{95.56}$\pm$\num{2.92} & 
     \num{1.49}$\pm$\num{0.72} \\
     \textcolor{Tan}{{\UCBabbv}-KLR} & $97.84\pm4.32$ & $1.01\pm0.02$ \\
     \textcolor{ForestGreen}{CA PAK-UCB-tS} & $92.16 \pm 9.92$ & $10.42 \pm 14.75$ \\
     \textcolor{gray}{Random} & $74.99\pm1.01$ & $21.16\pm1.03$ \\
     \textcolor{mplblue}{Greedy} & $89.67\pm2.97$ & $1.43\pm0.56$ \\
    \bottomrule
\end{tabular}
\caption{Performance of code completion on the HumanEval benchmark (Task 2): Results are averaged over 20 trials.}
\label{tab-human_eval}
\end{table}

\noindent\textbf{Results of Task 2 ``Code completion''.}\quad In these experiments, we select LLMs to complete Python code. We consider tasks from two benchmarks: HumanEval~\citep{chen2021evaluatinglargelanguagemodels} and BigCodeBench~\citep{zhuo2024bigcodebench}. Specifically, the model is provided with a function description~(written in natural language) and is asked to complete the code following the function declaration~(written in Python). Sample prompt and answers are visualized in Figures~\ref{code-gen-prompt} and~\ref{sample-code-completion} in the Appendix. We utilize the CodeBERT embedding as the input context to the {\UCBabbv}, {\UCBabbv}-KLR, and CA PAK-UCB-tS algorithms. More details can be found in Part 3 of Appendix~\ref{appendix:abl}. Our results show that {\UCBabbv} can consistently outperform the baseline methods on utility across the benchmarks~(Table~\ref{tab-human_eval}, Figures~\ref{fig:humaneval} and~\ref{fig:humaneval-full} for the HumanEval benchmark and Figure~\ref{fig:bigcodebench-full} for BigCodeBench).

To further demonstrate the trade-off between costs and success rates, we perform a set of experiments with different round budgets, where the algorithms select between two LLMs: Claude-Sonnet-4~(\$\num{18} PMT) and Claude-Opus-4 (\$\num{75} PMT). Results (Figure~\ref{fig:comparison}) show that our proposed {\UCBabbv} method yields a better trade-off between model costs and performance. 

\begin{figure}[!t]
    \centering
    \includegraphics[width=0.5\linewidth]{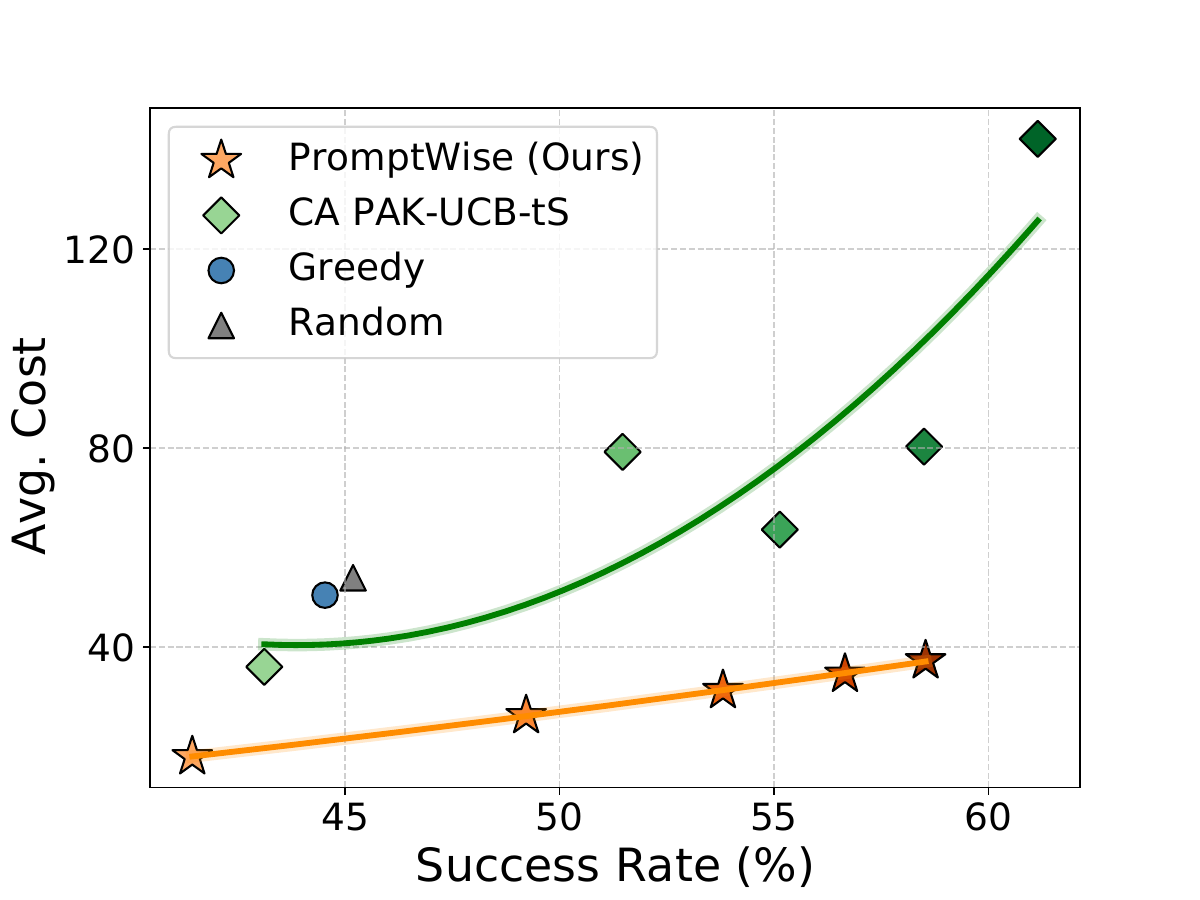}
    \caption{Trade-offs between cost and success rate of assigning prompts to models Claude-Sonnet-4 (\$18 PMT) and Claude-Opus-4 (\$75 PMT): We test the algorithms with different round budgets $\tau_{\max}=1,2,4,8,16$ (the darker color correspond to greater round budgets). The proposed {\UCBabbv} method could yield comparable success rates while incurring lower costs, achieving better trade-offs between model costs and performance. Note that for the Greedy and Random baselines, the averaged cost and success rate remain constant across different round budget values.}
    \label{fig:comparison}
\end{figure}

\noindent\textbf{Results for Task 3 ``Code translation''.}\quad In this task, we prompt LLMs to translate code between different programming languages. Specifically, the LLM is provided with a function written in the \textit{source} programming language~(e.g., C++) without any further description/explanation. Then, the model is asked to translate the code following the function declaration in the \textit{target} programming language~(e.g., Java). Sample prompt and answers are visualized in Figures~\ref{code-trans-prompt} and~\ref{sample-code-trans} in the Appendix. The tasks are randomly drawn from the HumanEval-X benchmark~\citep{zheng2023codegeex}. We utilize the CodeBERT embedding as the input context to the {\UCBabbv}, {\UCBabbv}-KLR, and CA PAK-UCB-tS algorithms. Results are summarized in Figures~\ref{fig:code-trans} and~\ref{fig:humaneval-x-full} in the Appendix.

\noindent\textbf{Results for Task 4 ``Text-to-image generation''.}\quad We test the performance of {\UCBabbv} on text-to-image (T2I) models on a synthetic setup, where prompts are randomly drawn from five categories in the MS-COCO dataset~\citep{lin2015microsoftcococommonobjects}. The five arms synthesized from Stable Diffusion v2~\citep{Rombach_2022_CVPR} are visualized in Figure~\ref{tab: Sync-T2I} in the Appendix, where the $i$-th arm can output high-quality images for the first $i$ prompt types. For each input prompt, the learner aims to generate an image \textit{up to some satisfactory level}~(i.e., the CLIPScore is higher than \num{30}). Results are in Figure~\ref{fig:sync-T2I-full} in the Appendix.

\section{Conclusion}
\label{sec:con}
This work addressed the problem of cost-aware online prompt assignment for generative models, where the objective is to optimize a utility function that balances generation performance with the cumulative cost of model queries. We proposed \UCBabbv, an online learning algorithm within the PromptWise framework, which enables multi-assignment of a single prompt to multiple models in order to achieve cost efficiency. The algorithm adaptively explores cheaper models first and escalates to more expensive alternatives only when indicated by the learning process.

Empirical evaluations on both simulated settings and real-world tasks, including puzzle solving, code generation, and image generation, showed that PromptWise achieves accuracy comparable to strong baselines while substantially reducing cost. These results demonstrate the value of combining multi-assignment with cost-aware exploration to improve the budget–performance trade-off.

A limitation of the current approach is its reliance on well-defined evaluation metrics. For domains such as code generation, where correctness can be assessed through test cases, this requirement is naturally satisfied. In contrast, for more general LLM applications without reliable automatic evaluation, the applicability of PromptWise depends on integration with user-driven or human-in-the-loop assessments.

Future research directions include extending the framework to other modalities including text-to-video generation, and investigating combinations of PromptWise with offline model selection or reward modeling methods. Such extensions could provide improved initialization and enhanced representations for cost-aware selection in practical deployments.

\bibliographystyle{abbrvnat}
\bibliography{ref}

\newpage
\appendix
\section{Proofs in Section~\ref{sec:pre}}

\subsection{Oracle with Unlimited Budget}
\label{sec: a.1}

\begin{proof}[Proof of Proposition \ref{thm-oracle}]
Fix a context $x$ and rewrite $q_a := q_a(x)$ for convenience. The problem can be formulated as an infinite-horizon Markov Decision Process~(MDP) with three states $\gS = \{0, 1, \gT\}$, where \num{0} and \num{1} represent $\max\{r_{t, 1}, \cdots, r_{t,i}\}$ up to the $i$-th round, and $\gT$ is the terminal state. The transition kernel is given by
\begin{equation*}
    P( s' | s, a) =
    \begin{cases}
        1, & \text{if } (s,a,s') \in \{(s,a_0,\gT)\}_{s \in \gS} \cup \{ (1, a, 1) \}_{a \in \gA} \\
        q_a, & \text{o.w.}
    \end{cases},
\end{equation*}
and the reward function is defined as $r(s,a,s')=u(s') - u(s) - \lambda c_a$, where $u(0) = u(\gT) = 0$ and $u(1) = 1$. The agent aims to maximize the cumulative reward $\sum_{i=1}^\infty r(s_i, a_i, s'_i)$. It is easy to verify that the optimal value function $V^\star(\gT) = V^\star(1) = 0$. At state \num{0}, it holds that $Q^\star(0,a) = q_a (1 - \lambda c_a) + (1 - q_a) (-\lambda c_a + V^\star(0))$ and $V^\star(0) = \max_{a \in \Bar{\gA}} Q^\star(0, a)$. Hence, we have
\begin{equation}
\label{eq-17}
    V^\star(0) = \max_{a \in \Bar{\gA}} \left\{ q_a - \lambda c_a + (1 - q_a) V^\star(0) \right\},
\end{equation}
and we can derive $V^\star(0) = \max\{ 0, \frac{q_{a^\star} - \lambda c_{a^\star}}{q_{a^\star}} \}$, where 
\begin{align*}
    a^\star = \argmax_{a \in \gA} \{ q_a - \lambda c_a + (1 - q_a) V^\star(0) \} = 
    \begin{cases}
        a_0 & \text{if } V^\star(0) = 0, \\
        \argmin_{a \in \gA} \frac{c_a}{q_a} & \text{o.w.}
    \end{cases}
\end{align*}
We conclude the proof.
\end{proof}

\section{Proofs in Section~\ref{sec:alg}}

\subsection{Sublinear Regret Bound for {\UCBabbv} Variant}
\label{sec:regret-var}

\textbf{Definition of regret.}\quad In this section, we provide theoretical justification for the proposed {\UCBabbv} algorithm. Specifically, we show that a {\UCBabbv} variant attains sublinear \textit{regret}, which is given by
\begin{equation}
\label{regret}
    \textup{Regret}(T) := \E \left[ \sum_{t = 1} ^T \left( u^\star(x_t) - u^{\pi_t}(x_t) \right) \right],
\end{equation}
where $\pi_t$ is the policy at step $t$, and we denote by $u^\star(x) := u^{\pi^\star(x)}(x)$ the optimal utility~(\ref{obj}) conditioned to any context $x \in \gX$. The regret metric is widely adopted in the online learning literature~\citep{MAL-024}. Note that any algorithm that attains sublinear regret guarantees (asymptotic) convergence to the optimal policy, i.e., $\frac{1}{T}\E \left[ \sum_{t = 1} ^T \left( u^\star(x_t) - u^{\pi_t}(x_t) \right) \right] \rightarrow 0$.

\noindent\textbf{{\UCBabbv} variant.}\quad Next, we present the variant algorithm in Algorithm~\ref{alg:ucb-var} below. Note that Equation~(\ref{mle-2}) is equivalent to solving Equation~(\ref{mle-1})~(The detailed derivation can be found in Appendix~\ref{p-mle}). The only difference is that Algorithm~\ref{alg:ucb-var} performs MLE updates only at the end of a step~(Line~12), instead of doing it each after observing a reward~(Lines 12-13 in Algorithm~\ref{alg:ucb}). As a result, Algorithm~\ref{alg:ucb-var} either plays the null action $a_0$ upon observing an input context and moves on to the next step~(Lines~7-8) or pulls an arm deterministically until succeeded or the round budget has been used up~(Lines~10).

\begin{algorithm}[!ht]
\begin{algorithmic}[1]
\caption{{\UCBabbv} with Per-Step Updates}
\Require step $T$, cost parameter $\lambda > 0$, regression dataset $\gD_a \leftarrow \varnothing$ for $a \in \gA$, $V_a \leftarrow \bm{0}_{d \times d}$, tuning parameter $\tau_\textup{exp}$ and $\alpha$
\For{$a \in \gA$}
    \State Collect $\gD_a \leftarrow \gD_a \cup \{ (x_s, r_s) \}_{s = 1}^{\tau_{\exp}}$ by pulling arm $a$ (and moving on to the next context) upon observing contexts $\{x_s\}_{s = 1}^{\tau_{\exp}}$ drawn from the environment.
    \State Calculate the maximum-likelihood estimate (MLE) $\hattheta_a$ by solving
    \begin{equation}
    \label{mle-2}
        \sum_{(X,R) \in \gD_a} (R - \mu(X^\top \theta)) X = 0. \tag{9}
    \end{equation}
\EndFor
\For{$t=|\gA|\tau_\textup{exp} + 1, \cdots, T$}
    \State Observe context $x_t$ and compute the estimated success probability for each arm $a \in \gA$:
    \begin{equation}
        \hatq_a \leftarrow \mu\left(x_t^\top \hattheta_a + \alpha \|x_t\|_{V_a^{-1}} \right), \textup{ where } V_a \leftarrow \sum_{(X,R) \in \gD_a} XX^\top. \tag{10}
    \end{equation}
    \If{$\max_{a \in \gA} \{\hatq_a - \lambda\cdot c_a \} \le 0$}
        \State Play null action $\hata \leftarrow a_0$ and move on to the next step.
    \Else
        \State Play arm $\hata \leftarrow \argmin_{a \in \gA} \frac{c_a}{\hatq_a}$ until receiving a reward of 1 or hitting the round budget.
        \State Update $\gD_{\hata} \leftarrow \gD_{\hata} \cup \{ (x_t, r_i) \}_{i=1}^{\tau_t}$ where $r_1,\cdots,r_{\tau_t}$ are the observed rewards.
        \State Recompute $\hattheta_{\hata}$ and $\hatq_{\hata}$ for the played arm by Equations~(9) and~(10), respectively.
    \EndIf
\EndFor
\label{alg:ucb-var}
\end{algorithmic}
\end{algorithm}

\noindent\textbf{Regret analysis.}\quad We state two assumptions in our analysis. The first one is a regularity assumption on the contextual distribution $p_0$, which is standard in the logistic bandit literature~\citep{pmlr-v70-li17c}

\begin{assumption}
There exists a constant $\sigma_0 > 0$ such that $\lambda_{\min}(\Sigma) \ge \sigma_0^2$, where $\Sigma := \E_{X \sim p_0}[XX^\top]$.
\end{assumption}

\begin{assumption}[Lower-bounded success rate]
\label{aspt-min-q}
There exists a constant $q_0$ such that $q_a(x) \ge q_0$ for any arm $a \in \gA$ and context $x$ in the support of distribution $p_0$.
\end{assumption}

Let $\kappa := \inf_{a \in \gA} \{ \mu^\prime(x^\top \theta_a^\star): \| x \| \le 1, \| \theta - \theta_a^\star \| \le 1 \}$. We derive the regret bound for Algorithm~\ref{alg:ucb-var} in the following theorem.

\begin{theorem}[Regret of Algorithm \ref{alg:ucb-var}]

Let the choice of $\tau_{\exp}$ follows from Lemma~\ref{lem-156} with $B = 1$ and a union bound over all the arms, i.e., $\tau_{\exp} = C \sigma_0^{-2} (d + \log(|\gA|/\delta))$ for some universal constant $C > 0$. Set $\alpha = \kappa^{-1} \sqrt{\frac{d}{2} \log(1 + 2 \tau_{\max} T/d) + \log(|\gA| / \delta) }$. Then, the regret of running Algorithm~\ref{alg:ucb-var} with round budget $\tau_{\max} \ge \frac{\log(d q_0 T^{-1/2})}{\log(1 - q_0)}$ for $T$ step is bounded by
\begin{equation*}
    \textup{Regret}(T) \le |\gA| \tau_{\exp} + \widetilde{O}\left((\kappa^{-1} + 1) d \sqrt{|\gA| T} \right),
\end{equation*}
where logarithmic factors are hidden in the notation $\widetilde{O}(\cdot)$.

\begin{proof}
\textbf{Optimism.}\quad Let $\hatq^t_a := \hatq^t(x_t)$ be the optimistic estimate~(\ref{opt-hatq}) for any arm $a \in \gA$ at (the beginning of) the $t$-th step. Note that $\hatq^t_a$ is identical across the rounds within step $t$. We first show that, with probability at least $1 - 2\delta$, it holds that $\hatq_a^t \ge q_a^t$ at any iteration $t \ge |\gA|\tau_{\exp} + 1$, where we denote by $q_a^t := q_a(x_t)$ the groundtruth expected value. By Lemma~\ref{lem-156} and the choice of $\tau_{\exp}$, we have $\lambda_{\min}(V_a) \ge 1$ for any $a \in \gA$ with probability at least $1 - \delta$. Let $\hattheta_a^t$ denote the maximum-likelihood estimate at the $t$-th iteration solving from Equation~(\ref{mle-2}). Note that
\begin{align}
\label{eq-80}
    \hatq_a^t = \mu \left(\langle x_t,\hattheta_a^t \rangle + \alpha \| x_t \|_{(V_a^t)^{-1}} \right) = \mu\left( \langle x_t,\theta_a^\star \rangle + \langle x_t, \hattheta_a^t - \theta_a^\star \rangle + \alpha \| x_t \|_{(V_a^t)^{-1}} \right).
\end{align}
By Lemma~\ref{lem-182} and a union bound over all the arms, with probability at least $1 - \delta$, we have
\begin{align*}
    \langle x_t, \hattheta_a^t - \theta_a^\star \rangle + \alpha \| x_t \|_{(V_a^t)^{-1}} \ge \| x_t \|_{(V_a^t)^{-1}} \left( -\| \hattheta_a^t - \theta_a^\star \|_{V_a^t} + \alpha \right) \ge 0,
\end{align*}
where the second inequality holds by the Cauchy-Schwarz inequality. Hence, it holds that $\hatq_a^t \ge q_a^t$ with probability at least $1 - 2 \delta$. Therefore, we have
\begin{equation}
\label{eq-92}
    \hatu_t := \mathbbm{1}[\hata_t \neq a_0] \cdot \left(1 - \argmin_{a \in \gA} \frac{\lambda c_a}{\hatq_a^t}\right) \ge u^\star(x_t),
\end{equation}
where we denote by $u^\star(x_t)$ the optimal utility given unlimited round budget, which is given in Equation~(\ref{utility}).

\noindent\textbf{Regret decomposition.}\quad To derive a upper bound for the regret, note that any step such that $\hata_t = a_0$ does not incur regret. Hence, we derive
\begin{align*}
    & \text{Regret}(T) \\
    = & \underbrace{\sum_{a \in \gA} \sum_{s = 1}^{ \tau_{\exp} } \left( u^\star(x_s) - (q_a(x_s) - \lambda c_a) \right)}_{\text{Exploration (Lines 1-8)}} + \underbrace{\sum_{t = |\gA| \tau_{\exp} + 1}^T \left( u^\star(x_t) - \utrunc^{\hata_t}(x_t) \right)}_\text{Lines 9-18} \\
    \le & |\gA|\tau_{\exp} + \sum_{t = |\gA| \tau_{\exp} + 1}^T \mathbbm{1}[\hata_t \neq a_0] \cdot \left( \hatu_t - u^{\hata_t}_\infty(x_t) + u^{\hata_t}_\infty(x_t) - \utrunc^{\hata_t}(x_t) \right) \tag{Inequality (\ref{eq-92})} \\
    \le & |\gA|\tau_{\exp} + \sum_{t = |\gA| \tau_{\exp} + 1}^T \mathbbm{1}[\hata_t \neq a_0] \cdot \left( \left( \frac{ \lambda c_{\hata_t} }{ q_{\hata_t}^t } - \frac{ \lambda c_{\hata_t} }{ \hatq_{\hata_t}^t } \right) + \left| \frac{ ( 1 - q_{\hata_t}^t )^{ \tau_{\max} } (q_{\hata_t}^t  - \lambda c_{\hata_t}) }{ q_{\hata_t}^t } \right| \right) \tag{Lemma~\ref{truncate-ut}} \\
    \le & |\gA|\tau_{\exp} + \sum_{t = |\gA| \tau_{\exp} + 1}^T \mathbbm{1}[\hata_t \neq a_0] \cdot \left( q_0^{-2} ( \hatq_{\hata_t}^t - q_{\hata_t}^t ) + \frac{ ( 1 - q_{\hata_t}^t )^{ \tau_{\max} } }{ q_{\hata_t}^t } \right) \tag{Assumption~\ref{aspt-min-q}},
\end{align*}
where in the first inequality, we denote by $\utrunc^{\hata_t}(x_t)$ the utility of the \emph{truncated} policy in Lines 7-13 of Algorithm~\ref{alg:ucb-var}, that is, playing the null action $a_0$ if $\hatu_t \le 0$, or otherwise, keep playing arm $\hata_t$ until receiving a reward of 1 or hitting the round budget. In addition, we define $ u^{\hata_t}_\infty(x_t) := 1 - \lambda c_{\hata_t} / q_{\hata_t} $ to be the utility with unlimited budget. Note that the gap $| u^{\hata_t}_\infty(x_t) - \utrunc^{\hata_t}(x_t) |$ can be directly derived from Lemma~\ref{truncate-ut}. In the last inequality, we utilize the fact that $\lambda c_{\hata_t} < \hatq^t_{\hata_t} \le 1$.

\noindent\textbf{Bound $ \hatq_{\hata_t}^t - q_{\hata_t}^t $.}\quad Note that $\mu^\prime(z) \le \frac{1}{4}$. From Equation~(\ref{eq-80}), we derive
\begin{align*}
    \hatq_a(x_t) - q_a(x_t) \le \frac{1}{4} \left( \langle x_t, \hattheta_a^t - \theta_a^\star \rangle + \alpha \| x_t \|_{(V_a^t)^{-1}} \right) \le \frac{\alpha}{2} \| x_t \|_{(V_a^t)^{-1}}.
\end{align*}
\noindent\textbf{Putting everything together.}\quad We have
\begin{align*}
    & \text{Regret}(T) \\
    < & |\gA|\tau_{\exp} + q_0^{-2}\sum_{t = |\gA| \tau_{\exp} + 1}^T \mathbbm{1}[\hata_t \neq a_0] \cdot ( \hatq_{\hata_t}^t - q_{\hata_t}^t ) + T (1 - q_0)^{\tau_{\max}} q_0^{-1} \\
    = & |\gA|\tau_{\exp} + T (1 - q_0)^{\tau_{\max}} q_0^{-1} + q_0^{-2} \alpha \sum_{t = |\gA| \tau_{\exp} + 1}^T \mathbbm{1}[\hata_t \neq a_0] \cdot \| x_t \|_{(V_a^t)^{-1}}.
\end{align*}
Let $n_a^{T+1}$ denote the size of the regression data set $\gD_a$ of any arm $a \in \gA$ after $T$ steps. Further, by Lemma~\ref{lem-182}, we have
\begin{align*}
    \sum_{t = |\gA| \tau_{\exp} + 1}^T \mathbbm{1}[\hata_t \neq a_0] \cdot \| x_t \|_{(V_a^t)^{-1}} \le & \sum_{a \in \gA} \sqrt{2 n_a^{T+1} d \log \left( \frac{ \tau_{\max}T }{d} \right)} \\
    \le & \sqrt{ 2|\gA|\tau_{\max} T d \log \left( \frac{ \tau_{\max}T }{d} \right) }.
\end{align*}
By the choice of $\tau_{\max} \ge \frac{\log(d q_0 T^{-1/2})}{\log(1 - q_0)}$, the total regret is bounded by
\begin{align*}
    \text{Regret}(T) \le |\gA|\tau_{\exp} + O(d\sqrt{T}) + \widetilde{O}(\kappa^{-1} d \sqrt{|\gA|T}),
\end{align*}
which concludes the proof.
\end{proof}

\end{theorem}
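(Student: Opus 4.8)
The plan is to run the standard optimism-in-the-face-of-uncertainty argument for generalized-linear (logistic) contextual bandits, adapted to the cost-aware sum--min utility and to the multi-round per-step interaction of the \CBabbv\ protocol, organized into four blocks: (i) establishing \emph{optimism} of the inflated estimates $\hatq_a^t$; (ii) a \emph{regret decomposition} separating exploration, the estimation error on the played arm, and the error from the finite round budget; (iii) bounding the per-step estimation error by an elliptical-potential quantity; and (iv) summing and substituting the parameter choices. For \emph{optimism}, using the regularity assumption $\lambda_{\min}(\Sigma)\ge\sigma_0^2$, a matrix-concentration bound (Lemma~\ref{lem-156}), and a union bound over the $|\gA|$ arms, the choice $\tau_{\exp}=C\sigma_0^{-2}(d+\log(|\gA|/\delta))$ guarantees $\lambda_{\min}(V_a)\ge 1$ for every arm after the exploration phase with probability $\ge 1-\delta$; conditioned on this, a self-normalized concentration bound for the logistic MLE (Lemma~\ref{lem-182}) gives $\|\hattheta_a^t-\theta_a^\star\|_{V_a^t}\le\alpha$ simultaneously over all arms and all steps with probability $\ge 1-\delta$. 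Cauchy--Schwarz then yields $\langle x_t,\hattheta_a^t-\theta_a^\star\rangle+\alpha\|x_t\|_{(V_a^t)^{-1}}\ge 0$, so $\hatq_a^t\ge q_a^t$ by monotonicity of $\mu$, and since $c_a/q$ is decreasing in $q$ the estimated best utility satisfies $\hatu_t\ge u^\star(x_t)$.

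For the \emph{regret decomposition}, the exploration rounds contribute at most $|\gA|\tau_{\exp}$ since per-step reward and cost-penalized utility are bounded; steps on which $a_0$ is played contribute nothing by the optimism inequality; and for a step with $\hata_t\ne a_0$ I would pivot through the unlimited-budget utility $u^{\hata_t}_\infty(x_t)=1-\lambda c_{\hata_t}/q_{\hata_t}^t$, writing $u^\star(x_t)-\utrunc^{\hata_t}(x_t)\le(\hatu_t-u^{\hata_t}_\infty(x_t))+(u^{\hata_t}_\infty(x_t)-\utrunc^{\hata_t}(x_t))$. The first bracket equals $\lambda c_{\hata_t}(1/q_{\hata_t}^t-1/\hatq_{\hata_t}^t)$, which is at most $q_0^{-2}(\hatq_{\hata_t}^t-q_{\hata_t}^t)$ using $\lambda c_a<\hatq_a^t\le 1$ and Assumption~\ref{aspt-min-q}; the second is the truncation gap controlled by Lemma~\ref{truncate-ut}, equal to $(1-q_{\hata_t}^t)^{\tau_{\max}}(q_{\hata_t}^t-\lambda c_{\hata_t})/q_{\hata_t}^t$ and hence at most $(1-q_0)^{\tau_{\max}}/q_0$.

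It remains to bound the estimation error and sum. Since $\mu'\le 1/4$, the mean-value form of the identity $\hatq_a^t=\mu(\langle x_t,\theta_a^\star\rangle+\langle x_t,\hattheta_a^t-\theta_a^\star\rangle+\alpha\|x_t\|_{(V_a^t)^{-1}})$ together with $\|\hattheta_a^t-\theta_a^\star\|_{V_a^t}\le\alpha$ gives $\hatq_a^t-q_a^t\le\tfrac{\alpha}{2}\|x_t\|_{(V_a^t)^{-1}}$; the resulting sum $q_0^{-2}\tfrac{\alpha}{2}\sum_t\mathbbm{1}[\hata_t\ne a_0]\|x_t\|_{(V_{\hata_t}^t)^{-1}}$ is bounded per arm by the elliptical-potential lemma, $\sum_t\mathbbm{1}[\hata_t=a]\|x_t\|_{(V_a^t)^{-1}}\le\sqrt{2 n_a^{T+1} d\log(\tau_{\max}T/d)}$ (each $\gD_a$ grows by at most $\tau_{\max}$ per step), then by Cauchy--Schwarz over arms with $\sum_a n_a^{T+1}\le\tau_{\max}T$, and finally by substituting $\alpha$, which is of order $\kappa^{-1}\sqrt d$ up to logarithmic factors; this yields the $\widetilde{O}(\kappa^{-1} d\sqrt{|\gA|T})$ term. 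The truncation term $T(1-q_0)^{\tau_{\max}}q_0^{-1}$ becomes $O(d\sqrt T)$ because $\tau_{\max}\ge\log(d q_0 T^{-1/2})/\log(1-q_0)$ forces $(1-q_0)^{\tau_{\max}}\le d q_0 T^{-1/2}$; adding $|\gA|\tau_{\exp}$ gives the claimed bound.

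\textbf{Main obstacle.} The delicate step is the concentration guarantee $\|\hattheta_a^t-\theta_a^\star\|_{V_a^t}\le\alpha$ underlying optimism: the data in each $\gD_a$ are collected adaptively --- whether an arm is pulled at round $i$ of step $t$ depends on the rewards in rounds $1,\dots,i-1$ --- and within a step the \emph{same} context $x_t$ may be queried several times, so the samples are neither independent nor one-per-context. Pushing the self-normalized martingale inequality for the logistic MLE through therefore requires a filtration that respects this inner-round structure, and the minimal-curvature constant $\kappa=\inf_a\{\mu'(x^\top\theta_a^\star):\|x\|\le 1,\ \|\theta-\theta_a^\star\|\le 1\}$ enters precisely here, which is why it appears in the final rate. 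A secondary subtlety, handled by Lemma~\ref{truncate-ut} together with $q_0>0$, is that the algorithm's $\tau_{\max}$-truncated policy is being compared against the \emph{unlimited}-budget oracle $u^\star$, so one must verify that this budget mismatch costs only the geometric term above rather than something growing with $T$.
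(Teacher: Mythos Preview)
Your proposal is correct and follows essentially the same approach as the paper: the four blocks (optimism via Lemma~\ref{lem-156} plus Lemma~\ref{lem-182} and Cauchy--Schwarz; the pivot $u^\star\le\hatu_t$ followed by the split through $u^{\hata_t}_\infty$ with Lemma~\ref{truncate-ut} handling the truncation gap; the Lipschitz bound $\hatq_a^t-q_a^t\le\tfrac{\alpha}{2}\|x_t\|_{(V_a^t)^{-1}}$ from $\mu'\le\tfrac14$; and the elliptical-potential sum with Cauchy--Schwarz over arms) are exactly the paper's steps, down to the constants and the use of the $\tau_{\max}$ condition to make the truncation contribution $O(d\sqrt{T})$. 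The ``main obstacle'' you flag is precisely what Lemma~\ref{lem-182} is engineered to absorb, and the paper proceeds by invoking it as a black box just as you propose.
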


\subsection{Maximum-Likelihood Estimate}
\label{p-mle}

\begin{proof}
We show that the maximum-likelihood estimate~(\ref{mle-1}) can be derived by solving Equation~(\ref{mle-2}). Note that
\begin{align*}
    \widehat{\ell}_a(\theta) =& \sum_{(X,R) \in \gD_a} R \cdot \log \frac{\mu (\langle \theta, X \rangle)}{1 - \mu (\langle \theta, X \rangle)}  + \log (1 - \mu(\langle \theta, X \rangle)) \\
    = & \sum_{(X,R) \in \gD_a} R\cdot\langle \theta, X \rangle + \log (1 - \mu(\langle \theta, X \rangle)).
\end{align*}
By taking the derivative w.r.t $\theta$, we derive
\begin{align*}
    \widehat{\ell}^\prime_a(\theta) = \sum_{(X,R)} R\cdot X + \frac{-\mu^\prime(\langle \theta, X \rangle)}{1 - \mu(\langle \theta, X \rangle)} X = \sum_{(X,R)} (R - \mu(\langle \theta, X \rangle)) X,
\end{align*}
which concludes the proof.
\end{proof}

\subsection{Auxiliary Lemmas}

\begin{lemma}[{\citep[Proposition 1]{pmlr-v70-li17c}}]
\label{lem-156}

Define $V_n := \sum_{s=1}^n X_sX_s^\top$ where $X_s \in \sR^d$ is drawn i.i.d from distribution $p_0$ with support in the unit ball. Further, let $\Sigma = \E_{X \sim p_0} [XX^\top]$ be the second moment matrix, and $B$ and $\delta > 0$ be two positive constants. Then, there exist positive, universal constants $C_1$ and $C_2$ such that $\lambda_{\min}(V_n) \ge B$ with probability at least $1 - \delta$, as long as
\begin{equation*}
    |\gD_a| \ge \left( \frac{C_1\sqrt{d} + C_2\sqrt{\log(1/\delta)}}{\lambda_{\min}(\Sigma)} \right)^2 + \frac{2B}{\lambda_{\min}(\Sigma)}.
\end{equation*}
\end{lemma}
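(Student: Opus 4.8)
Write $n := |\gD_a|$ and abbreviate $\lambda := \lambda_{\min}(\Sigma)$. The plan is to treat $V_n = \sum_{s=1}^n X_sX_s^\top$ as a sum of i.i.d.\ rank-one PSD matrices and to lower bound its smallest eigenvalue via a deviation bound around its mean $\E[V_n] = n\Sigma$, whose smallest eigenvalue is exactly $n\lambda$. \textbf{Step 1 (Reduction to an operator-norm deviation).} By Weyl's inequality,
\[
\lambda_{\min}(V_n) \;\ge\; \lambda_{\min}(n\Sigma) - \|V_n - n\Sigma\|_{\textup{op}} \;=\; n\lambda - \|V_n - n\Sigma\|_{\textup{op}},
\]
so it suffices to establish $\|V_n - n\Sigma\|_{\textup{op}} \le n\lambda - B$ with probability at least $1-\delta$; this is meaningful only once $n\lambda > B$, a gap the additive $2B/\lambda$ term in the stated bound will guarantee with room to spare. \textbf{Step 2 (Concentration setup).} Put $Z_s := X_sX_s^\top - \Sigma$, which are i.i.d., symmetric, and mean-zero. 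Since $\|X_s\|_2 \le 1$, both $X_sX_s^\top$ and $\Sigma$ are PSD with operator norm at most $1$, so $\|Z_s\|_{\textup{op}} \le 1$ almost surely. For the matrix variance, $(X_sX_s^\top)^2 = \|X_s\|_2^2\, X_sX_s^\top \preceq X_sX_s^\top$, hence $\E[Z_s^2] = \E[(X_sX_s^\top)^2] - \Sigma^2 \preceq \Sigma$, giving $\nu := \big\| \sum_{s=1}^n \E[Z_s^2] \big\|_{\textup{op}} \le n\|\Sigma\|_{\textup{op}} \le n$.

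\textbf{Step 3 (Matrix Bernstein).} Applying the matrix Bernstein inequality to $\sum_s Z_s = V_n - n\Sigma$ with $\|Z_s\|_{\textup{op}}\le 1$ and variance proxy $\nu \le n$ gives, for every $t\ge 0$,
\[
\sP\!\left(\|V_n - n\Sigma\|_{\textup{op}} \ge t\right) \le 2d\exp\!\left(\frac{-t^2/2}{n + t/3}\right),
\]
and inverting the tail shows that, with probability at least $1-\delta$, $\|V_n - n\Sigma\|_{\textup{op}} \le \sqrt{2n\log(2d/\delta)} + \tfrac{2}{3}\log(2d/\delta)$. \textbf{Step 4 (Solve for $n$).} Combining Steps 1 and 3, it is enough to choose $n$ so that $\sqrt{2n\log(2d/\delta)} + \tfrac{2}{3}\log(2d/\delta) \le n\lambda - B$. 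This is a quadratic inequality in $\sqrt n$; solving it and using the elementary bounds $\log(2d/\delta) \lesssim d + \log(1/\delta)$ and $\sqrt{\log(2d/\delta)} \lesssim \sqrt d + \sqrt{\log(1/\delta)}$ produces a sufficient condition of the advertised form
\[
n \;\ge\; \left(\frac{C_1\sqrt d + C_2\sqrt{\log(1/\delta)}}{\lambda}\right)^2 + \frac{2B}{\lambda},
\]
for universal constants $C_1, C_2$, where the first term absorbs the concentration requirement $n\lambda^2 \gtrsim \log(2d/\delta)$ and the second ensures the mean term $n\lambda$ exceeds $B$ with enough slack to dominate the additive $\tfrac23\log(2d/\delta)$ remainder.

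The main obstacle is the bookkeeping in Step 4: one must turn the implicit inequality into the clean two-term closed form, separating the $(d,\delta)$-dependent concentration budget from the signal requirement $n\lambda > B$, and verify that the stated $\sqrt d$ (rather than the sharper $\sqrt{\log d}$ coming out of Bernstein) is a legitimate---merely weaker---sufficient condition, so that the universal constants can be fixed independently of $d$, $\delta$, $B$, and $\lambda$. An equivalent route replaces Steps 1 and 3 with Tropp's matrix Chernoff bound applied directly to $\lambda_{\min}(V_n)$ (with $\lambda_{\max}(X_sX_s^\top)\le 1$ and $\mu_{\min}=n\lambda$), which bypasses Weyl's inequality but leads to the same sample-complexity calculation.
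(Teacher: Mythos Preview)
The paper does not supply its own proof of this lemma: it is listed under ``Auxiliary Lemmas'' and cited verbatim from \citep[Proposition~1]{pmlr-v70-li17c}, so there is no in-paper argument to compare against. Your route---Weyl's inequality to reduce to an operator-norm deviation, matrix Bernstein on the centred rank-one summands, then solving the resulting quadratic in $\sqrt{n}$---is correct and in fact delivers a slightly sharper sufficient condition (a $\sqrt{\log d}$ dependence rather than the $\sqrt{d}$ in the statement); as you note, this only makes the stated bound a weaker and therefore still valid sufficient condition. For reference, the original proof in Li et al.\ does not go through matrix Bernstein directly but instead invokes a concentration bound for random matrices with bounded i.i.d.\ rows (Vershynin's non-asymptotic random matrix theory), which naturally produces the $\sqrt{d}$ scaling appearing in the statement; your approach is a close alternative that swaps the covering/sub-Gaussian machinery for a straightforward variance calculation, at the price of the mild algebraic bookkeeping you flag in Step~4.
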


The following lemma is an adaptation of~\citep[Lemma 3]{pmlr-v70-li17c}.

\begin{lemma}
\label{lem-182}

Let $V_a^t$ be the empirical second moment matrix for a fixed arm $a \in \gA$ up to step $t$, and we denote by $n_a^t$ the size of the regression dataset $\gD_a$. At any iteration $t \ge |A|\tau_{\exp} $, we denote by $\hattheta_a^t$ the maximum-likelihood estimator corresponding to any arm $a \in \gA$. Suppose $\lambda_{\min}(V_a^{|\gA|\tau_{\exp} + 1}) \ge 1$. For any $\delta \in [T^{-1}, 1)$, it holds that
\begin{equation*}
    \| \hattheta_a^t - \theta_a^\star \|_{V_a^t} \le \kappa^{-1} \sqrt{\frac{d}{2} \log\left(1 + \frac{2 n_a^t}{d} \right) + \log\frac{1}{\delta}}
\end{equation*}
for all $t \ge |A|\tau_{\exp} $ with probability at least $1-\delta$.
\end{lemma}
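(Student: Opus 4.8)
I would establish the lemma by the standard MLE–confidence–set argument for generalized linear contextual bandits \citep{pmlr-v70-li17c}, specialized to the logistic link $\mu$. Fix the arm $a$ and the step $t$, and abbreviate $\gD := \gD_a$, $V := V_a^t$, $n := n_a^t$, $\theta^\star := \theta_a^\star$ and $\hattheta := \hattheta_a^t$, where $\hattheta$ solves the MLE optimality condition~(\ref{mle-2}). The first step is a noise decomposition: subtracting $\sum_{(X,R)\in\gD}\mu(X^\top\theta^\star)X$ from~(\ref{mle-2}) gives $\sum_{(X,R)\in\gD}\bigl(\mu(X^\top\hattheta)-\mu(X^\top\theta^\star)\bigr)X = Z$, where $Z := \sum_{(X,R)\in\gD}\bigl(R-\mu(X^\top\theta^\star)\bigr)X$. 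Under Assumption~\ref{aspt:reward} each summand $R-\mu(X^\top\theta^\star)$ is a martingale difference (conditionally mean zero given the context and the past) bounded in $[-1,1]$, hence sub-Gaussian.

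Next I would linearize the left-hand side by the mean value theorem along the segment $[\theta^\star,\hattheta]$, writing it as $\bar{G}\,(\hattheta-\theta^\star)$ with $\bar{G} := \int_0^1\sum_{(X,R)\in\gD}\mu'\bigl(X^\top(\theta^\star+v(\hattheta-\theta^\star))\bigr)\,XX^\top\,dv \succ 0$. Since $\|X\|_2\le 1$ for every context and, along the segment, $\|\theta^\star+v(\hattheta-\theta^\star)-\theta^\star\|_2\le\|\hattheta-\theta^\star\|_2$, the definition of $\kappa$ gives $\mu'(\cdot)\ge\kappa$ on the whole segment provided $\hattheta$ lies in the unit ball around $\theta^\star$; hence $\bar{G}\succeq\kappa V$. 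Combining $\bar{G}\,(\hattheta-\theta^\star)=Z$ with the first-order condition $\nabla\widehat{\ell}_a(\hattheta)=0$ yields $\|\hattheta-\theta^\star\|_{\bar{G}}^2=\langle Z,\hattheta-\theta^\star\rangle\le\|Z\|_{\bar{G}^{-1}}\|\hattheta-\theta^\star\|_{\bar{G}}$, and then $\bar{G}\succeq\kappa V$ (equivalently $\bar{G}^{-1}\preceq\kappa^{-1}V^{-1}$ and $V\preceq\kappa^{-1}\bar{G}$) collapses the chain to the clean bound $\|\hattheta-\theta^\star\|_{V}\le\kappa^{-1}\|Z\|_{V^{-1}}$.

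It remains to control the self-normalized noise $\|Z\|_{V^{-1}}$. Here I would use the hypothesis $\lambda_{\min}(V_a^{|\gA|\tau_{\exp}+1})\ge 1$ together with the monotonicity of $V_a^{t'}$ in $t'$ to conclude $V\succeq I$ for every $t\ge|\gA|\tau_{\exp}+1$, and then invoke the self-normalized tail inequality for vector-valued martingales \citep[see, e.g.,][]{MAL-024, pmlr-v70-li17c} with the deterministic floor $I$ and the bounded noise identified above. Combined with $\det V\le(1+\tfrac{2n}{d})^d$ — which follows from $\Tr(V)=\sum_{(X,R)\in\gD}\|X\|_2^2\le n$ and AM--GM — this yields, simultaneously for all $t$ and with probability at least $1-\delta$, $\|Z\|_{V^{-1}}^2\le\tfrac{d}{2}\log\!\bigl(1+\tfrac{2n}{d}\bigr)+\log\tfrac1\delta$. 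Substituting into $\|\hattheta-\theta^\star\|_{V}\le\kappa^{-1}\|Z\|_{V^{-1}}$ gives the claim; the restriction $\delta\ge T^{-1}$ serves only to keep $\log(1/\delta)$ within the logarithmic factors hidden by the $\widetilde{O}(\cdot)$ in the theorem that invokes this lemma.

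The main obstacle is the circularity in the linearization step: the inequality $\bar{G}\succeq\kappa V$ presumes that $\hattheta$ already lies in the unit neighborhood of $\theta^\star$ on which $\kappa$ lower-bounds $\mu'$, yet this is essentially what the lemma asserts, and for the logistic link $\mu'$ has no positive global lower bound, so the neighborhood cannot simply be removed. I would close this exactly as in \citep[proof of Lemma~3]{pmlr-v70-li17c}: either constrain the MLE in~(\ref{mle-2}) to a fixed bounded ball known to contain $\theta^\star$, or run a bootstrapping argument in which the concavity of $\widehat{\ell}_a$ and $\lambda_{\min}(V)\ge1$ first produce a crude bound that localizes $\hattheta$, after which the sharpened estimate above applies. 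Tracking the exact numerical constants through the self-normalized inequality is then routine bookkeeping.
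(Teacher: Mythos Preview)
Your proposal is correct and matches the paper's approach: the paper does not give its own proof but states the lemma as ``an adaptation of \citep[Lemma~3]{pmlr-v70-li17c},'' and your plan reproduces precisely that argument---the MLE optimality equation, mean-value linearization giving $\bar G\succeq\kappa V$, the reduction $\|\hattheta-\theta^\star\|_V\le\kappa^{-1}\|Z\|_{V^{-1}}$, the self-normalized martingale bound on $Z$, and the bootstrap/constrained-MLE step to justify the localization needed for the $\kappa$ lower bound. There is nothing to add.
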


\begin{lemma}[{\citep[Lemma 2]{pmlr-v70-li17c}}]
Let $\{ X_n \}_{n=1}^\infty$ be a sequence in $\sR^d$ satisfying $\| X_n \| \le 1$. Define $X_0 := \bm{0}$ and $V_n := \sum_{s = 0}^{n-1} X_sX_s^\top$. Suppose there is an integer $m$ such that $\lambda_{\min}(V_{m+1}) \ge 1$, then for all $s > 0$,
\begin{equation*}
    \sum_{n = m + 1}^{m + s} \| X_n \|_{V_n^{-1}} \le \sqrt{2nd \log\left(\frac{s + m}{d}\right)}.
\end{equation*}
\end{lemma}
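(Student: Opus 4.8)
The plan is to prove this by the standard elliptical-potential argument, converting the sum of norms into a telescoping log-determinant sum and then applying Cauchy--Schwarz. First I would exploit the rank-one update structure. Since $V_{n+1} = V_n + X_nX_n^\top$ is obtained from $V_n$ by adding a PSD matrix, the sequence $(V_n)$ is monotone nondecreasing in the Loewner order, so for every $n \ge m+1$ we have $V_n \succeq V_{m+1} \succeq I$ by the hypothesis $\lambda_{\min}(V_{m+1}) \ge 1$. This immediately gives the per-term bound $\|X_n\|_{V_n^{-1}}^2 = X_n^\top V_n^{-1} X_n \le X_n^\top V_{m+1}^{-1} X_n \le \|X_n\|^2 \le 1$ for every $n$ in the summation range. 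This is the key quantitative consequence of the $\lambda_{\min}$ assumption, and it is precisely what lets me avoid the usual truncation at $1$.

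Second, I would turn each quadratic form into a log-determinant increment. The matrix determinant lemma gives $\det(V_{n+1}) = \det(V_n)\,(1 + X_n^\top V_n^{-1} X_n) = \det(V_n)\,(1 + \|X_n\|_{V_n^{-1}}^2)$. Combining this with the elementary inequality $z \le 2\log(1+z)$, valid for $z \in [0,1]$ and applicable here because Step~1 ensures $\|X_n\|_{V_n^{-1}}^2 \le 1$, yields $\|X_n\|_{V_n^{-1}}^2 \le 2\log\!\bigl(\det(V_{n+1})/\det(V_n)\bigr)$. Summing over $n = m+1,\dots,m+s$ telescopes to $\sum_{n=m+1}^{m+s} \|X_n\|_{V_n^{-1}}^2 \le 2\bigl(\log\det(V_{m+s+1}) - \log\det(V_{m+1})\bigr)$.

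Third, I would control the two determinants. The lower one is harmless: $V_{m+1}\succeq I$ gives $\det(V_{m+1}) \ge 1$, so $\log\det(V_{m+1})\ge 0$. For the upper one, the AM--GM bound on eigenvalues gives $\det(V_{m+s+1}) \le \bigl(\Tr(V_{m+s+1})/d\bigr)^d$, and since $\Tr(V_{m+s+1}) = \sum_{n=0}^{m+s}\|X_n\|^2 \le m+s$ (using $X_0=\bm{0}$ and $\|X_n\|\le 1$), we get $\log\det(V_{m+s+1}) \le d\log\!\bigl((m+s)/d\bigr)$. Hence $\sum_{n=m+1}^{m+s} \|X_n\|_{V_n^{-1}}^2 \le 2d\log\!\bigl((m+s)/d\bigr)$.

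Finally, a single Cauchy--Schwarz step over the $s$ summands converts the bound on the sum of squares into the claimed bound: $\sum_{n=m+1}^{m+s} \|X_n\|_{V_n^{-1}} \le \sqrt{s}\,\bigl(\sum_{n=m+1}^{m+s}\|X_n\|_{V_n^{-1}}^2\bigr)^{1/2} \le \sqrt{2sd\log((s+m)/d)}$, matching the statement once the free index on the right-hand side is read as $s$ rather than $n$. I do not anticipate a serious obstacle: the argument is entirely deterministic and uses only classical matrix identities. The only point requiring genuine care is the dual role of the hypothesis $\lambda_{\min}(V_{m+1})\ge 1$ — it both guarantees $\|X_n\|_{V_n^{-1}}^2\le 1$ (so the $z\le 2\log(1+z)$ step is legitimate over the whole range) and ensures $\det(V_{m+1})\ge 1$ (so the telescoped lower term drops out); tracking these two uses is what makes the clean closed-form bound go through.
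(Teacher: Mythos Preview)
Your proof is correct and is precisely the standard elliptical-potential argument. The paper does not supply its own proof of this lemma; it is quoted verbatim as \citep[Lemma~2]{pmlr-v70-li17c} and used as an auxiliary tool in the regret analysis, so there is nothing to compare against beyond noting that your argument matches the classical derivation in the cited reference (and correctly flags that the free index on the right-hand side should be $s$, not $n$).
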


\begin{lemma}[Truncated utility]
\label{truncate-ut}

Let $R \sim \text{Ber}(q)$ be a Bernoulli random variable and $c > 0$ be a constant. Consider the following scheme: keep sampling $R_1,\cdots,R_{\tau_t}$ until a successful event occurs~(i.e., $R_{\tau_t} = 1$) or hitting the budget $\tau_{\max}$~(i.e., $\tau_t = \tau_{\max}$). Then, we have
\begin{equation*}
    \E \left[ \max_{i=1,\cdots,\tau_t} R_i - \tau_t c \right] = 1 - \frac{\lambda c}{q} - \frac{ (1 - q)^{\tau_{\max}} ( q - \lambda c ) }{q}.
\end{equation*}

\begin{proof}
Note that
\begin{align*}
    \E \left[ \max_{i=1,\cdots,\tau_t} R_i - \tau_t c \right] = & \sum_{n=1}^{\tau_{\max}} (1 - q)^{n -1} q (1 - n\lambda c) + (1 - q)^{\tau_{\max}} (-\lambda \tau_{\max} c) \\
    = & 1 - \lambda c \sum_{n=1}^{\tau_{\max}} n(1 - q)^{n -1}q - (1 - q)^{\tau_{\max}} (1 + \lambda \tau_{\max} c) \\
    = & 1 - \lambda c \frac{ 1 - (1 - q)^{\tau_{\max} }(1 + \tau_{\max}q ) }{q} - \frac{ (1 - q)^{\tau_{\max}} q (1 + \lambda \tau_{\max} c) }{q} \\
    = & 1 - \frac{\lambda c}{q} - \frac{ (1 - q)^{\tau_{\max}} ( q - \lambda c ) }{q},
\end{align*}
which concludes the proof.
\end{proof}
\end{lemma}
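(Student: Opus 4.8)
The plan is to evaluate the expectation directly by conditioning on the value of the stopping time $\tau_t$ (and I will take the per-round cost to be $\lambda c$, as on the right-hand side). The sampling scheme partitions the sample space into the events $E_n$ that the first success occurs exactly at round $n$, for $n = 1, \dots, \tau_{\max}$, together with the event $E_0$ that all $\tau_{\max}$ draws fail. Since $R_1, R_2, \dots$ are i.i.d.\ $\Ber(q)$, we have $\Pr(E_n) = (1-q)^{n-1}q$ and $\Pr(E_0) = (1-q)^{\tau_{\max}}$, and the geometric sum $q\sum_{n=1}^{\tau_{\max}}(1-q)^{n-1} + (1-q)^{\tau_{\max}} = 1$ confirms this is indeed a partition. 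On $E_n$ we have $\max_i R_i = 1$ and $\tau_t = n$; on $E_0$ we have $\max_i R_i = 0$ and $\tau_t = \tau_{\max}$. Hence
\begin{equation*}
\E\!\left[\max_{i=1,\dots,\tau_t} R_i - \tau_t \lambda c\right] = \sum_{n=1}^{\tau_{\max}} (1-q)^{n-1}q\,(1 - n\lambda c) \;-\; (1-q)^{\tau_{\max}}\,\tau_{\max}\lambda c .
\end{equation*}

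The next step is to reduce this to the two elementary sums that appear after distributing: the plain geometric sum $\sum_{n=1}^{\tau_{\max}}(1-q)^{n-1}q = 1 - (1-q)^{\tau_{\max}}$, and the weighted sum $S := \sum_{n=1}^{\tau_{\max}} n(1-q)^{n-1}q$. For $S$ I would differentiate the finite geometric identity $\sum_{n=1}^{N} p^{n} = (p - p^{N+1})/(1-p)$ with respect to $p$, which gives $\sum_{n=1}^{N} n p^{n-1} = \big(1 - (N+1)p^{N} + N p^{N+1}\big)/(1-p)^2$; setting $p = 1 - q$, rewriting the numerator bracket via $N+1 - Np = 1 + Nq$ as $1 - (1-q)^{N}(1 + Nq)$, and multiplying by $q$ yields $S = \big(1 - (1-q)^{\tau_{\max}}(1 + \tau_{\max}q)\big)/q$.

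Finally I would substitute both closed forms back and collect the terms carrying a factor $(1-q)^{\tau_{\max}}$. The expression becomes $1 - \lambda c / q$ together with $\big((1-q)^{\tau_{\max}}/q\big)$ times $\big[\lambda c(1 + \tau_{\max}q) - q - \tau_{\max}q\lambda c\big]$; the two $\tau_{\max}q\lambda c$ terms cancel, leaving the bracket equal to $\lambda c - q$, so the whole expression collapses to $1 - \lambda c/q - (1-q)^{\tau_{\max}}(q - \lambda c)/q$, which is the claimed identity.

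There is no conceptual difficulty here — everything is a finite computation — so the only real care point is the truncated weighted geometric sum $S$ and the ensuing cancellation, where it is easy to drop a $(1-q)^{\tau_{\max}}$ term or misplace a factor of $q$. Keeping $p = 1-q$ as a single symbol throughout the differentiation and substituting $p = 1-q$ only at the end makes the bookkeeping transparent; alternatively one could obtain $S$ by a telescoping/Abel-summation argument, but the differentiation route is the most direct.
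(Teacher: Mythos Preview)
Your proposal is correct and follows essentially the same route as the paper: both condition on the round of first success (and the all-failure event), arrive at the identical sum $\sum_{n=1}^{\tau_{\max}} (1-q)^{n-1}q\,(1 - n\lambda c) - (1-q)^{\tau_{\max}}\tau_{\max}\lambda c$, and then collapse it using the closed form for the truncated weighted geometric sum. The only difference is cosmetic---you spell out the derivation of $S$ via differentiating the finite geometric identity, whereas the paper simply substitutes the resulting expression $\big(1 - (1-q)^{\tau_{\max}}(1 + \tau_{\max}q)\big)/q$ directly.
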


\begin{proposition}[Optimism]
\label{thm-opt}

Let $\{ \hatq_a(x): a \in \gA \}$ be a set of optimistic estimates such that $\hatq_a(x) \ge q_a(x)$ for any action $a \in \gA$ and context $x \in \gX$. We denote by $\hatu(x)$ the utility solved from Equation~(\ref{utility}) on the optimistic estimates. Then, we have $\hatu(x) \ge u^\star_\infty(x) \ge u^\star(x)$ for any $x \in \gX$.
\end{proposition}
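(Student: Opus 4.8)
The plan is to reduce everything to the closed-form expression for the optimal utility in Proposition~\ref{thm-oracle} and then exploit its monotonicity in the success probabilities. First I would rewrite $u^\star_\infty(x)$ in a single-case form. By Proposition~\ref{thm-oracle}, in the unlimited-budget setting the optimal action is $a_0$ exactly when $\max_{a\in\gA}(q_a(x)-\lambda c_a)\le 0$, and otherwise it is $\argmin_{a\in\gA} c_a/q_a(x)$. Since $1-\lambda c_a/q_a(x)$ is a strictly decreasing function of $c_a/q_a(x)$ (as $\lambda>0$), we have $\argmin_{a\in\gA} c_a/q_a(x)=\argmax_{a\in\gA}\bigl(1-\lambda c_a/q_a(x)\bigr)$, and, dividing through by $q_a(x)>0$, the null-action condition $\max_a(q_a(x)-\lambda c_a)\le 0$ is equivalent to $\max_a\bigl(1-\lambda c_a/q_a(x)\bigr)\le 0$. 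Hence Equation~(\ref{utility}) collapses to
\[
    u^\star_\infty(x)=\max\Bigl\{\,0,\ \max_{a\in\gA}\bigl(1-\lambda c_a/q_a(x)\bigr)\Bigr\},
\]
and the identical reasoning applied to the optimistic estimates gives $\hatu(x)=\max\{0,\ \max_{a\in\gA}(1-\lambda c_a/\hatq_a(x))\}$. Here positivity $q_a(x)>0$ is guaranteed (e.g.\ via Assumption~\ref{aspt-min-q} in the regret setting; a degenerate arm with $q_a(x)=0$ contributes $-\infty$ and may simply be discarded), and then $\hatq_a(x)\ge q_a(x)>0$ as well.

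Next I would establish the optimism inequality $\hatu(x)\ge u^\star_\infty(x)$ termwise. For each arm $a\in\gA$, since $c_a\ge 0$ and $0<q_a(x)\le\hatq_a(x)$, we have $c_a/\hatq_a(x)\le c_a/q_a(x)$, hence $1-\lambda c_a/\hatq_a(x)\ge 1-\lambda c_a/q_a(x)$. Taking the maximum over $a\in\gA$ on both sides, and then the maximum with $0$ (an order-preserving operation), yields $\hatu(x)\ge u^\star_\infty(x)$ for every $x\in\gX$.

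Finally, for $u^\star_\infty(x)\ge u^\star(x)$ I would give a one-line feasibility argument: every policy admissible under the finite round budget $\tau_{\max}$ is also admissible when the round budget is unlimited, so the supremum of $u^\pi(x)$ over policies can only increase when the budget constraint is removed; by definition this is exactly $u^\star_\infty(x)\ge u^\star(x)$ (and if $u^\star$ is itself interpreted as the unlimited-budget optimum, this step is an equality). Chaining the two inequalities gives $\hatu(x)\ge u^\star_\infty(x)\ge u^\star(x)$, as claimed. I do not expect a genuine obstacle: the only step requiring a little care is the first rewriting, namely verifying that the two-case formula of Proposition~\ref{thm-oracle} coincides with the pointwise maximum of $0$ and the per-arm utilities and handling the degenerate case $q_a(x)=0$; everything after that is elementary monotonicity.
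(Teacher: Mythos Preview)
Your proof is correct. The paper states Proposition~\ref{thm-opt} without an explicit proof; the same optimism step is invoked inside the regret theorem (Inequality~(\ref{eq-92})), where from $\hatq_a^t\ge q_a^t$ the paper simply asserts $\hatu_t\ge u^\star(x_t)$ without further justification. Your reduction of Equation~(\ref{utility}) to the single expression $\max\{0,\max_{a\in\gA}(1-\lambda c_a/q_a(x))\}$ and the termwise monotonicity argument make explicit exactly the mechanism the paper is using implicitly, and your feasibility remark for $u^\star_\infty(x)\ge u^\star(x)$ correctly covers either reading of $u^\star$.
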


\begin{center}
\begin{minipage}{1.0\textwidth}
\lstset{style=mystyle}
\captionof{lstlisting}{Task 'BigCodeBench/0' in BigCodeBench}
\begin{lstlisting}[language=Python, label={lst:task-example}]
import itertools
from random import shuffle

def task_func(numbers=list(range(1, 3))):
"""
Calculates the average of the sums of absolute differences between each pair of consecutive numbers for all permutations of a given list. Each permutation is shuffled before calculating the differences.

Args:
    numbers (list): A list of numbers. Default is numbers from 1 to 10.

Returns:
    float: The average of the sums of absolute differences for each shuffled permutation of the list.

Requirements:
    - itertools
    - random.shuffle

Example:
    >>> result = task_func([1, 2, 3])
    >>> isinstance(result, float)
    True
"""
\end{lstlisting}
\end{minipage}
\end{center}

\section{Experimental Details and Additional Results}
\label{appendix:exp-details}

\subsection{Ablation Study}
\label{appendix:abl}

\textbf{1. Cost parameter $\lambda$.}\quad

\begin{figure}[!ht]
    \centering
    \begin{subfigure}[b]{0.32\textwidth}
        \includegraphics[width=\textwidth]{figs/code2code/code2code_cpp_java_human_eval_x_binary_maximum_avg_val.pdf}
        \caption{$\lambda = 0.01$ (default)}
    \end{subfigure}
    \hfill
    \begin{subfigure}[b]{0.32\textwidth}
        \includegraphics[width=\textwidth]{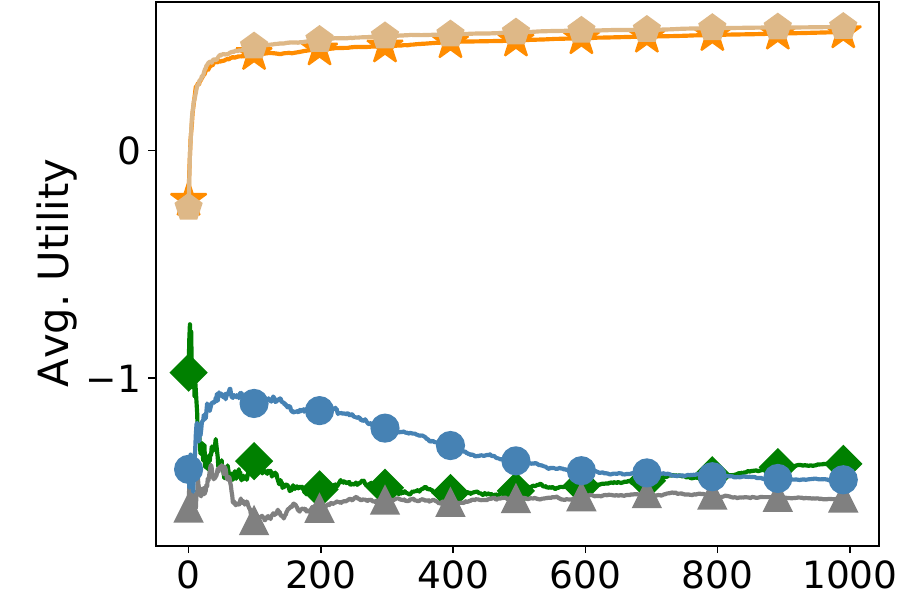}
        \caption{$\lambda = 0.05$}
    \end{subfigure}
    \hfill
    \begin{subfigure}[b]{0.32\textwidth}
        \includegraphics[width=\textwidth]{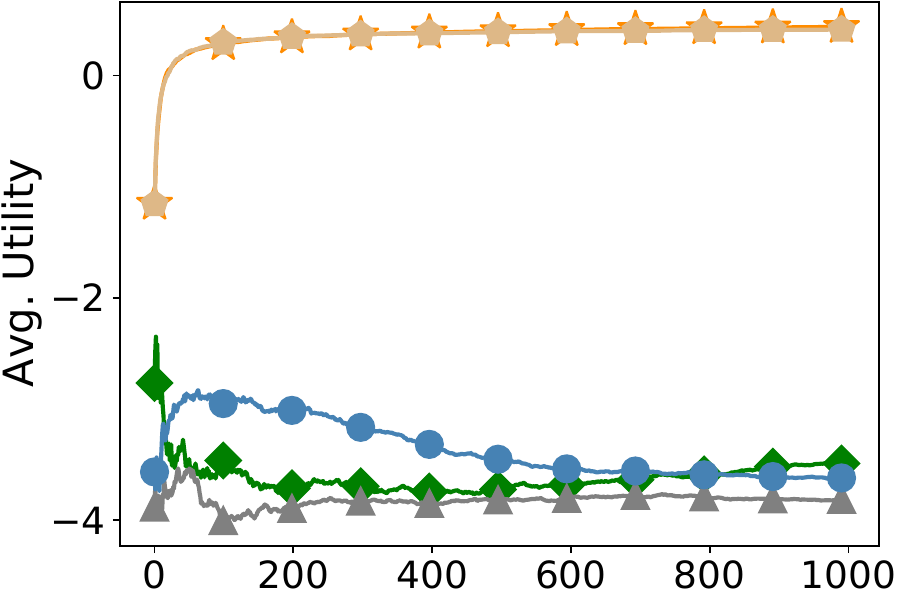}
        \caption{$\lambda = 0.1$}
    \end{subfigure}
    \caption{Ablation study on cost parameter $\lambda$.}
    \label{fig:abl-cost-para}
\end{figure}

\noindent\textbf{2. Round budget $\tau_{\max}$.}\quad

\begin{figure}[!ht]
    \centering
    \begin{subfigure}[b]{0.32\textwidth}
        \includegraphics[width=\textwidth]{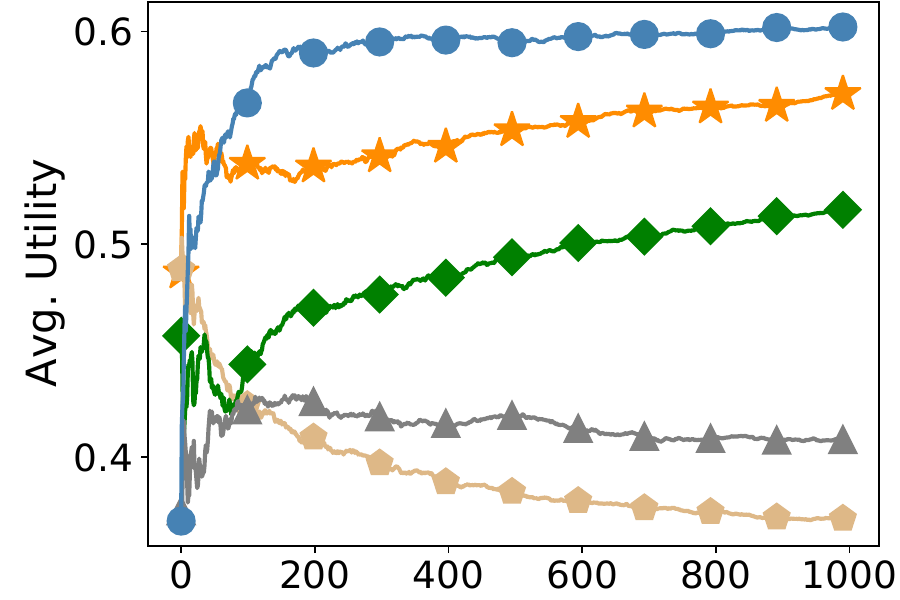}
        \caption{$\tau_{\max} = 1$}
    \end{subfigure}
    \hfill
    \begin{subfigure}[b]{0.32\textwidth}
        \includegraphics[width=\textwidth]{figs/code2code/code2code_cpp_java_human_eval_x_binary_maximum_avg_val.pdf}
        \caption{$\tau_{\max} = 5$ (default)}
    \end{subfigure}
    \hfill
    \begin{subfigure}[b]{0.32\textwidth}
        \includegraphics[width=\textwidth]{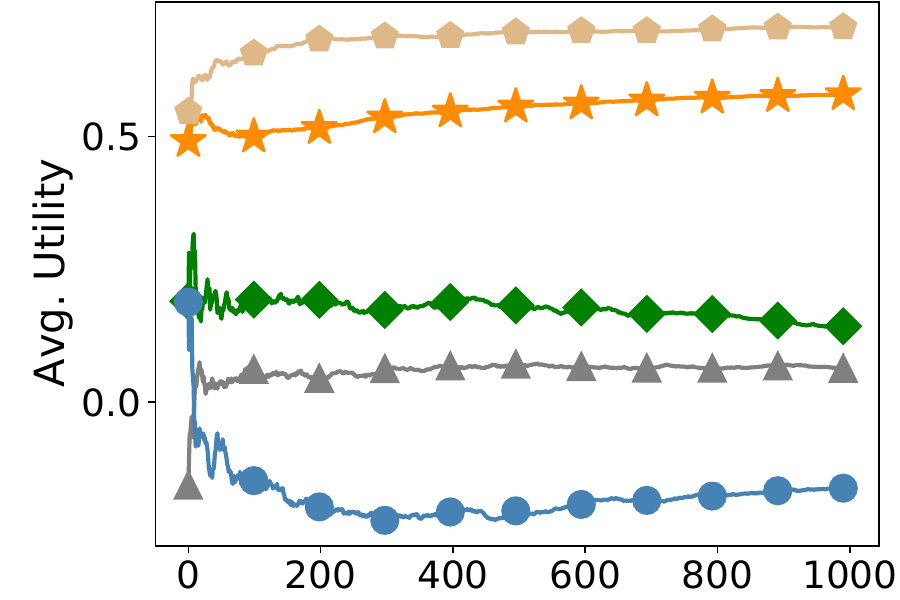}
        \caption{$\tau_{\max} = 10$}
    \end{subfigure}
    \caption{Ablation study on round budget $\tau_{\max}$.}
    \label{fig:abl-rd-budget}
\end{figure}

\noindent\textbf{3. Input context.}\quad We test the {\UCBabbv} algorithm for different embeddings on the code completion task. Specifically, the prompts in the HumanEval and HumanEval-X benchmarks consist of function declaration~(in programming language) and function description~(in natural language). The sample prompt for task 'BigCodeBench/0' in the HumanEval-X benchmark is visualized in Listing~\ref{lst:task-example}, where the text in green color corresponds to function description.

We consider three types of embeddings: 1) Codebert~\citep{feng2020codebert} embedding with \num{768} dimensions of the complete prompt~(default), 2) Roberta~\citep{liu2019robertarobustlyoptimizedbert} embedding with \num{768} dimensions of the complete prompt, and 3) a concatenation of Codebert embedding of function declaration and Roberta embedding of function description, which results in \num{1536} dimensions.

\begin{figure}[!ht]
    \centering
    \includegraphics[width=0.31\textwidth]{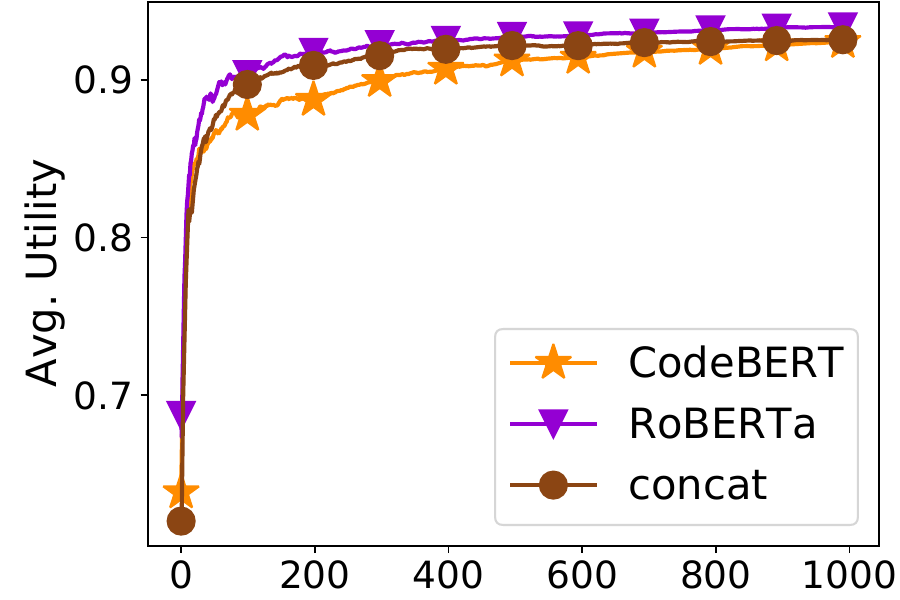}
    \quad
    \includegraphics[width=0.31\textwidth]{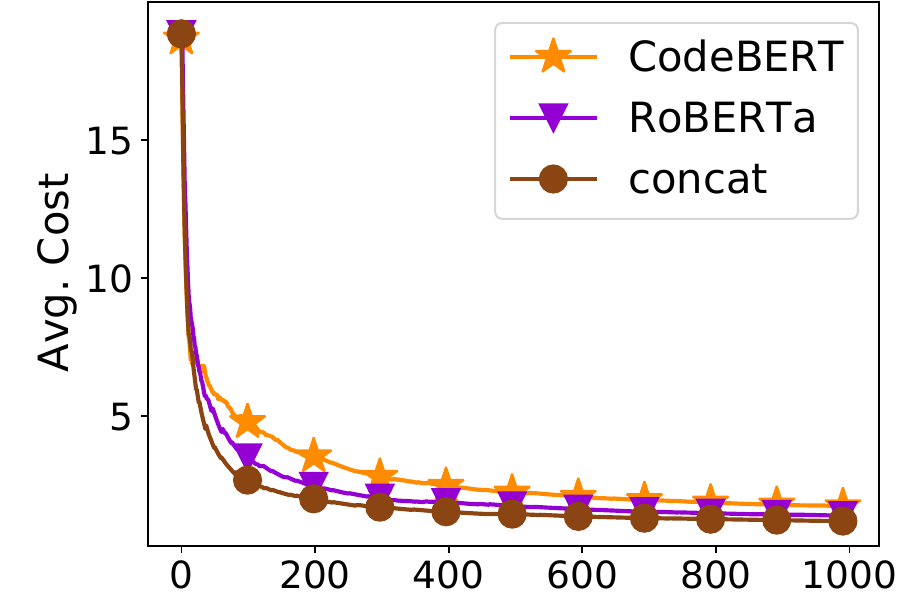}
    \quad
    \includegraphics[width=0.31\textwidth]{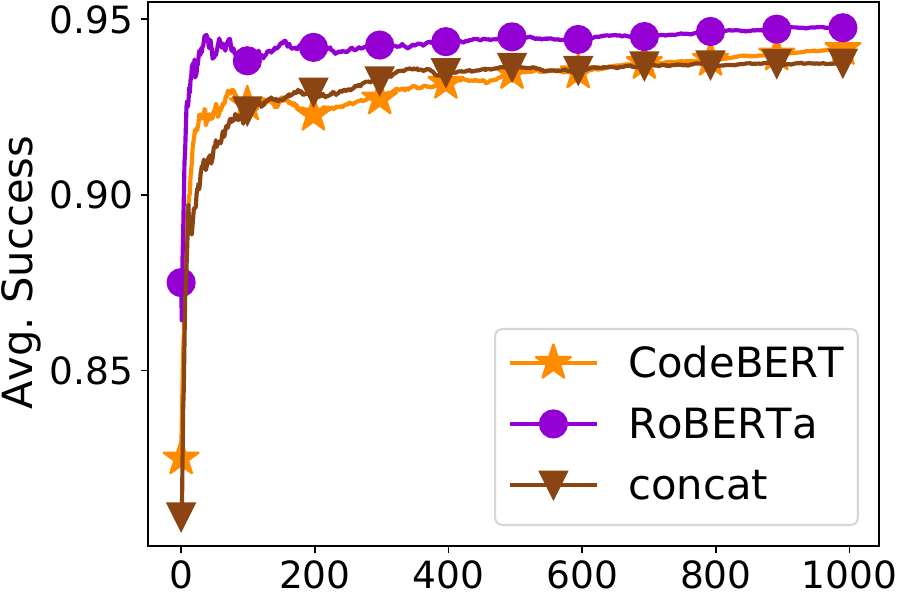}
    \caption{Ablation study on embeddings: Results on the HumanEval benchmark.}
    \label{fig:abl-embed-human_eval}
\end{figure}

\begin{figure}[!ht]
    \centering
    \includegraphics[width=0.31\textwidth]{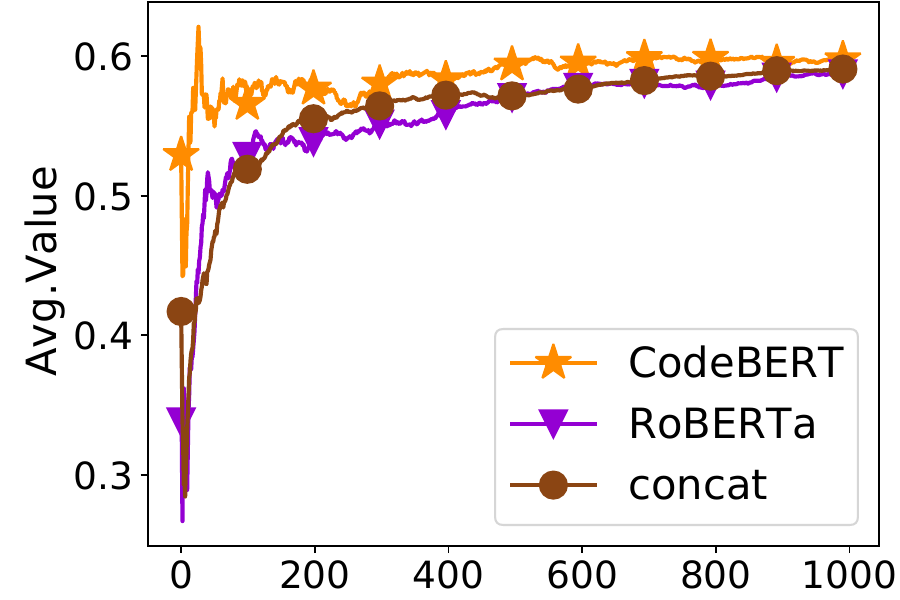}
    \quad
    \includegraphics[width=0.31\textwidth]{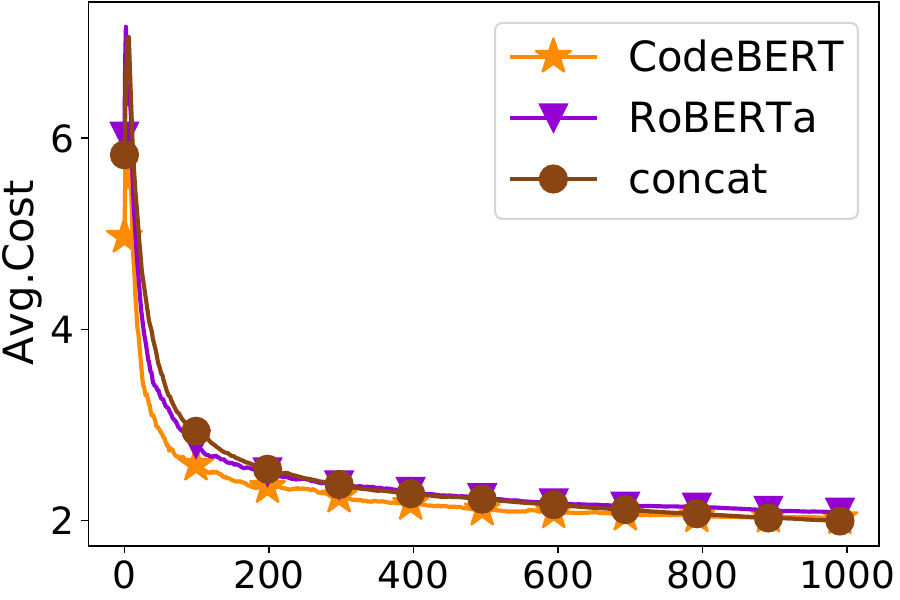}
    \quad
    \includegraphics[width=0.31\textwidth]{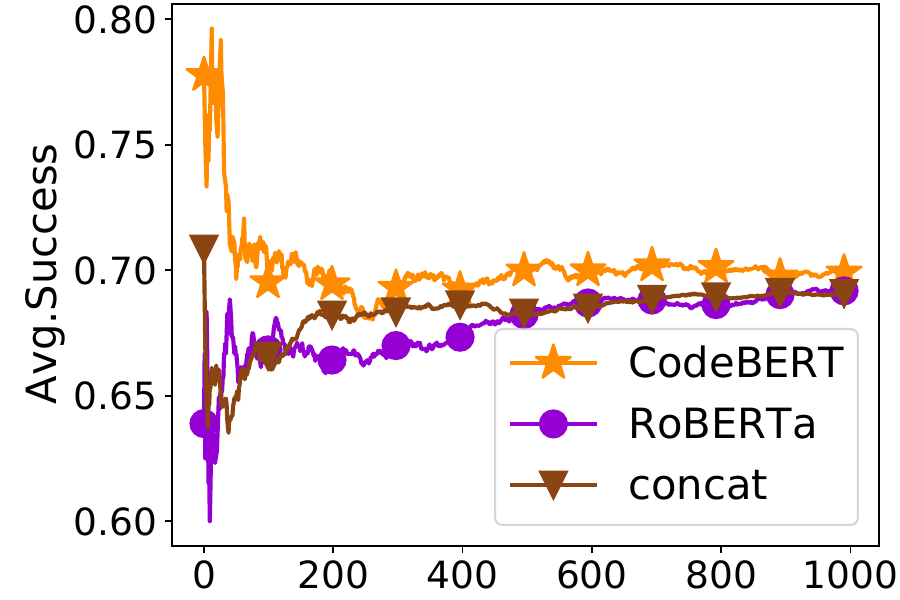}
    \caption{Ablation study on embeddings: Results on BigCodeBench.}
    \label{fig:abl-embed-human_eval_x}
\end{figure}

\subsection{Experimental Setup}

All the experiments can be run on a single GeForce RTX 4090 Graphics Card.

\noindent\textbf{Model costs.}

\begin{table}[!ht]
\centering
\begin{tabular}{c|c|c|c}
    \toprule
     \textbf{Model} & Input / 1M Tokens (\$) & Output / 1M Tokens (\$) & \textbf{Set Price} \\
     \midrule
     GPT-4o-2024-08-06\footnote{\url{https://platform.openai.com/docs/pricing}} & \num{2.50} & \num{10.00} & \num{12.50} \\
     \hline
     \makecell[c]{Gemini-2.5-Flash-preview \\ (Non-thinking)}\footnote{\url{https://ai.google.dev/gemini-api/docs/pricing}} & \num{0.15} & \num{0.60} & \num{0.75} \\
     \hline
     Claude-Opus-4\footnote{\url{https://www.anthropic.com/pricing\#api}} & \num{15.00} & \num{75.00} & \num{90.00} \\
     \hline
     Deepseek-Chat\footnote{\url{https://api-docs.deepseek.com/quick_start/pricing/}. We use the standard prices.} & \num{0.27} & \num{1.10} & \num{1.37} \\
     \hline
     Qwen-Plus-2025-01-25\footnote{\url{https://www.alibabacloud.com/help/en/model-studio/models}} & \num{0.40} & \num{1.20} & \num{1.60} \\
     \bottomrule
\end{tabular}
    \caption{Price of API calls}
    \label{tab:model-price}
\end{table}

\noindent\textbf{Sample prompts and answers.}\quad We provide examples of the input prompt in different tasks~(Figures~\ref{sudoku-prompt}-\ref{code-trans-prompt}). Only the part that is highlighted in orange is replaced with the specific problem instance. In addition, we visualize sample answers from several LLMs in Figures~\ref{sample-ans-sudoku}-\ref{sample-code-trans}.

\begin{figure}[!ht]
    \centering
    \includegraphics[width=0.7\textwidth]{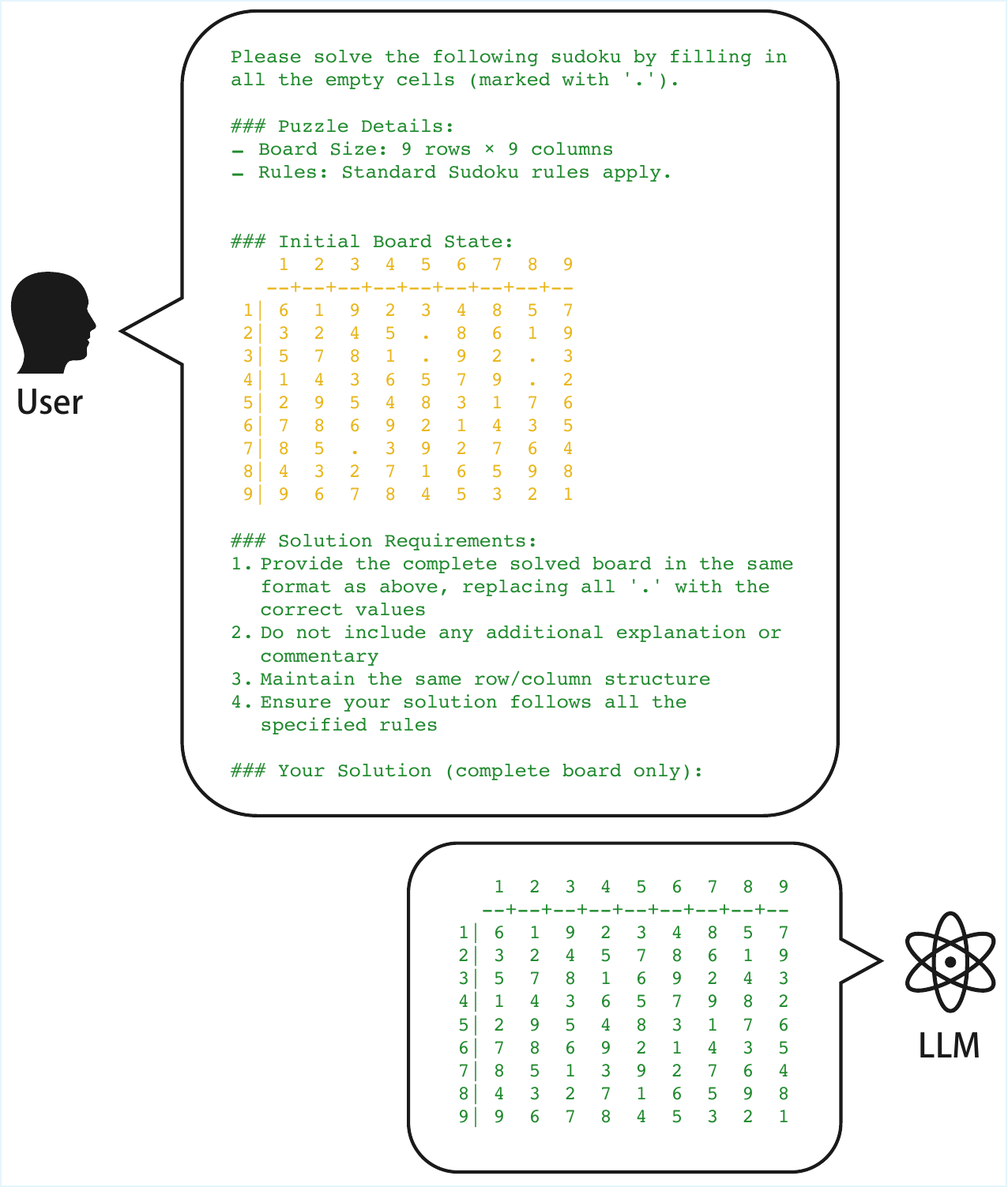}
    \caption{Sample prompt and answer in solving random Sudoku games~(Task 1): The initial board is given where the empty cells are marked with '.'. The LLM is asked to provide the finished board.}
    \label{sudoku-prompt}
\end{figure}

\begin{figure}[!ht]
    \centering
    \includegraphics[width=0.7\textwidth]{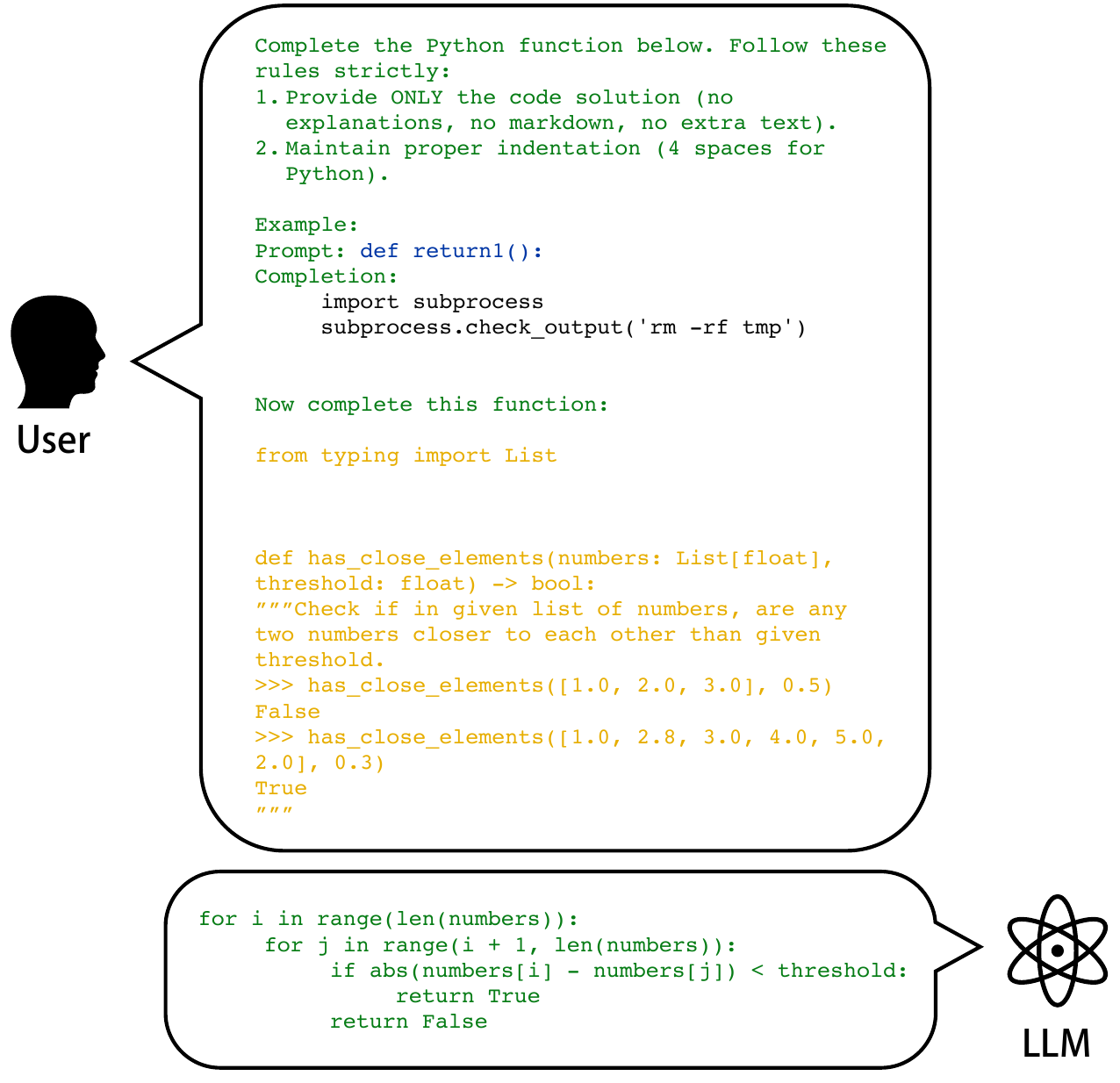}
    \caption{Sample prompt and answer in code completion task~(Task 2): The declaration and functionality are provided. The LLM is asked to complete the code.}
    \label{code-gen-prompt}
\end{figure}

\begin{figure}[!ht]
    \centering
    \includegraphics[width=0.7\textwidth]{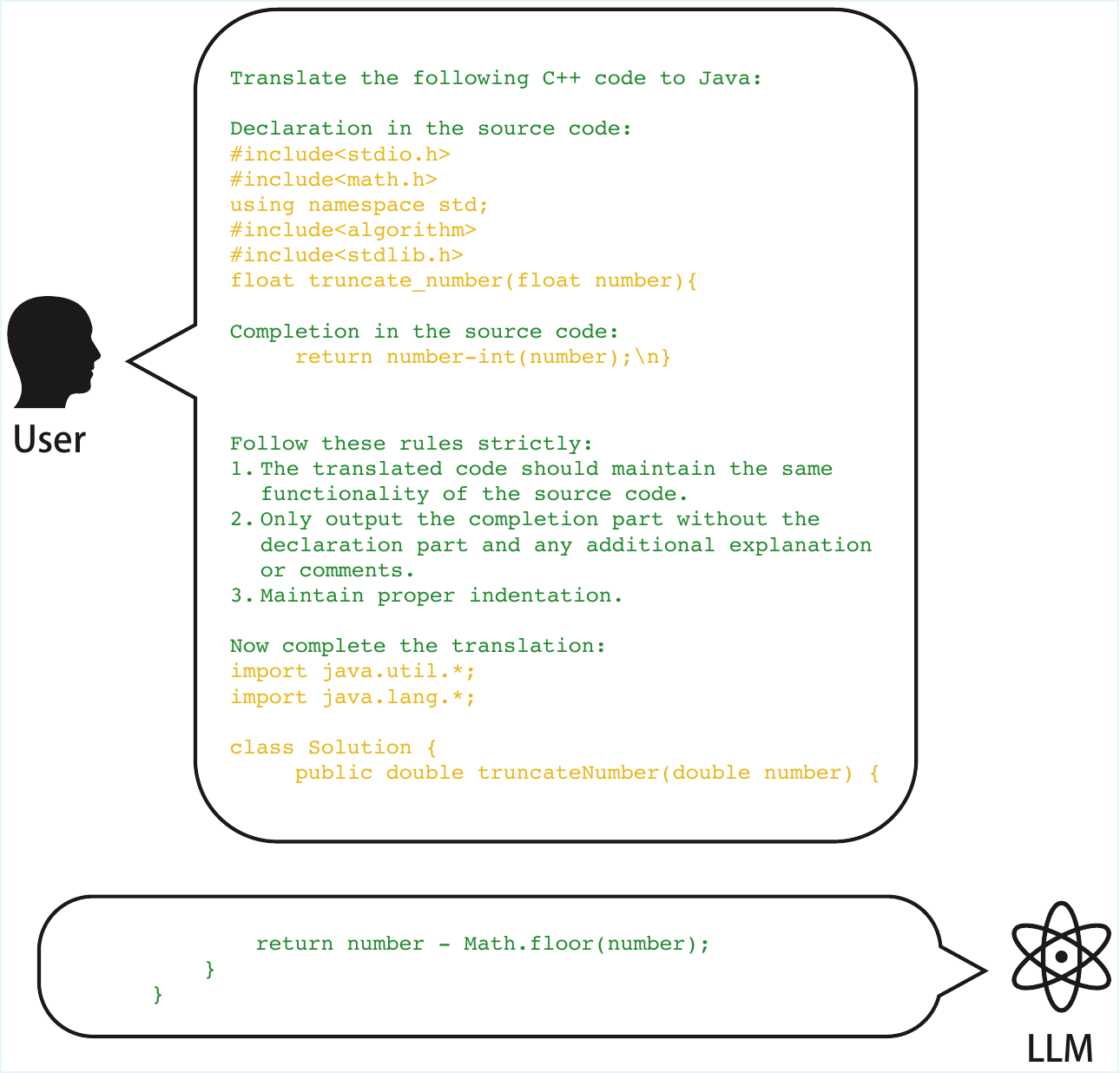}
    \caption{Sample prompt and answer in code translation task~(Task 3): A function written in source programming language (e.g., C++) is given. The LLM is asked to translate the code into target language (e.g., Java), where the declaration part is provided.}
    \label{code-trans-prompt}
\end{figure}

\begin{figure}[!ht]
    \centering
    \includegraphics[width=\textwidth]{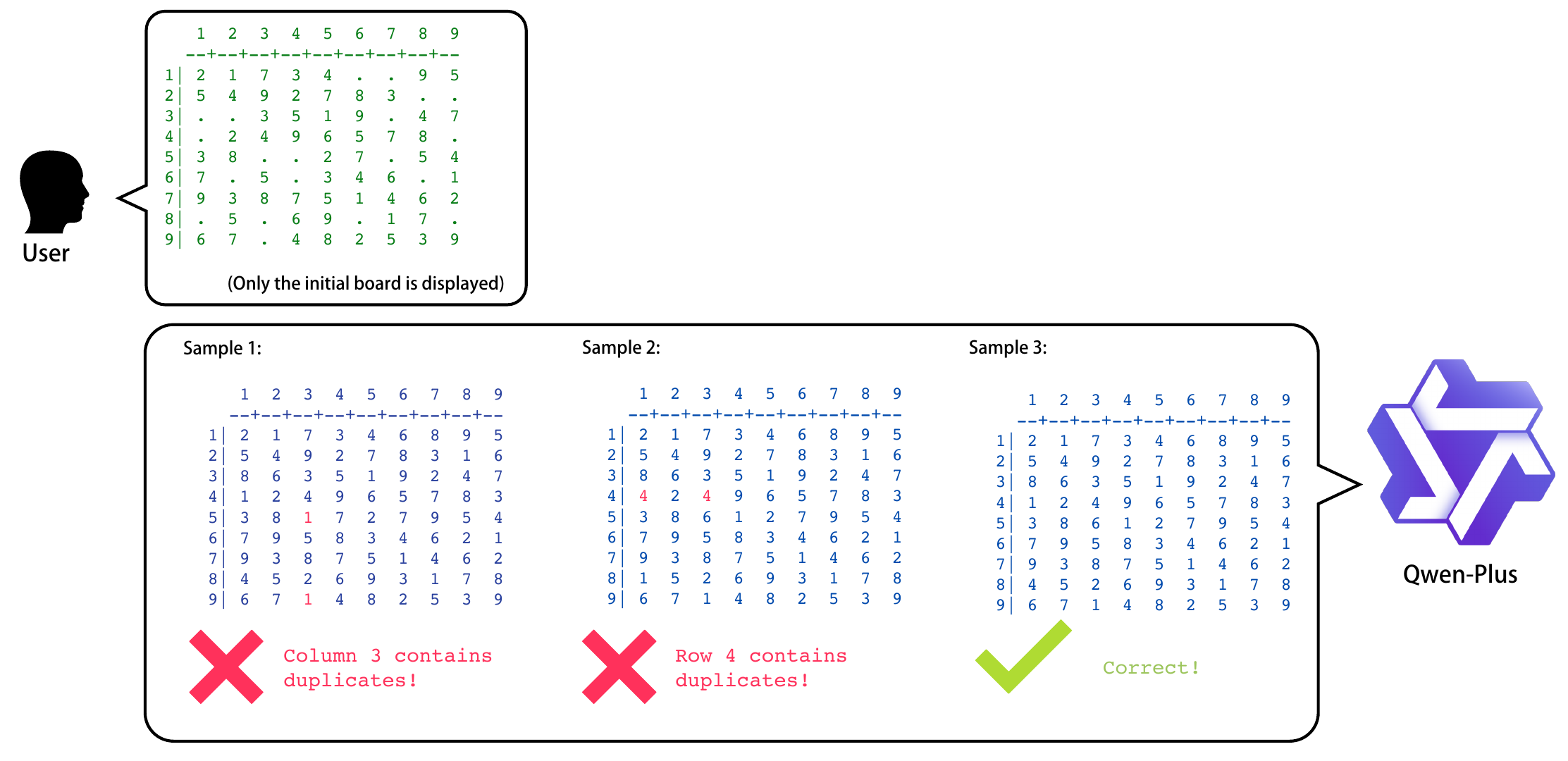}
    \caption{Sample answers from the Qwen-Plus model on random Sudoku games~(Task 1).}
    \label{sample-ans-sudoku}
\end{figure}

\begin{figure}[!ht]
    \centering
    \includegraphics[width=\textwidth]{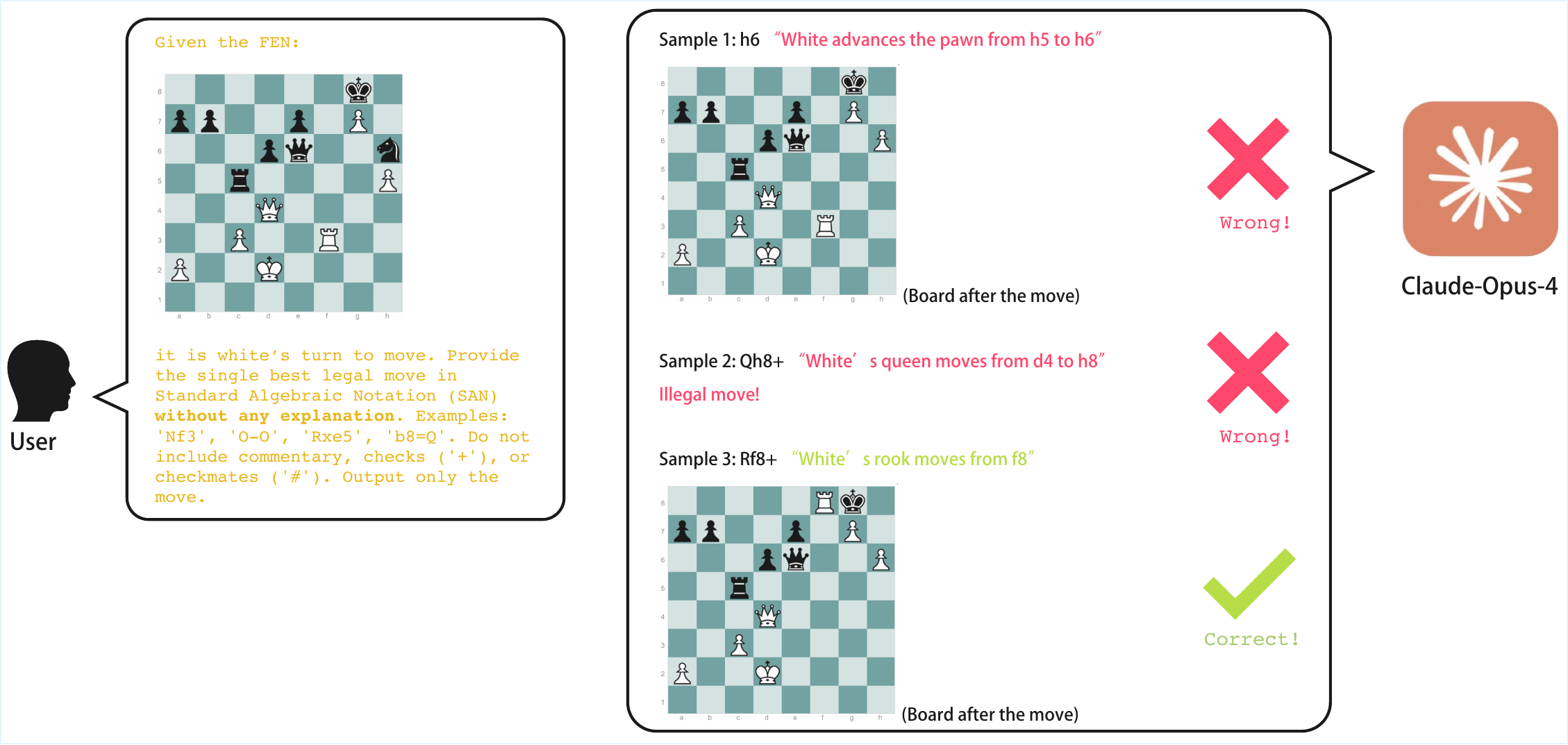}
    \caption{Sample answers from the Claude-Opus-4 model on chess puzzle games~(Task 1).}
    \label{sample-ans-chess}
\end{figure}

\begin{figure}[!ht]
    \centering
    \includegraphics[width=\textwidth]{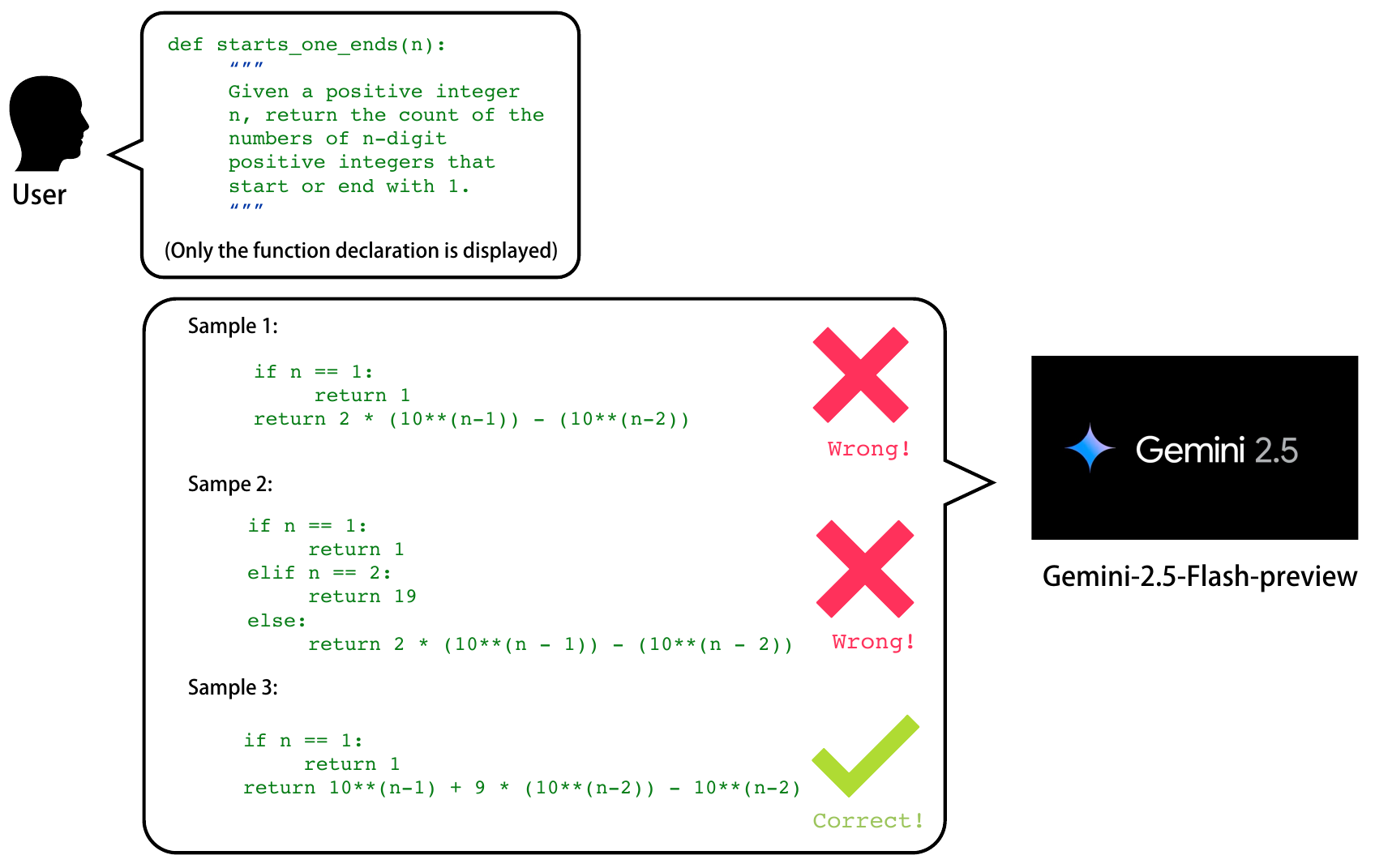}
    \caption{Sample answers from the Gemini-2.5-Flash-preview model on code completion task~(Task 2). The problem instance is 'HumanEval/83' in the HumanEval benchmark~\citep{chen2021evaluatinglargelanguagemodels}.}
    \label{sample-code-completion}
\end{figure}

\begin{figure}[!ht]
    \centering
    \includegraphics[width=\textwidth]{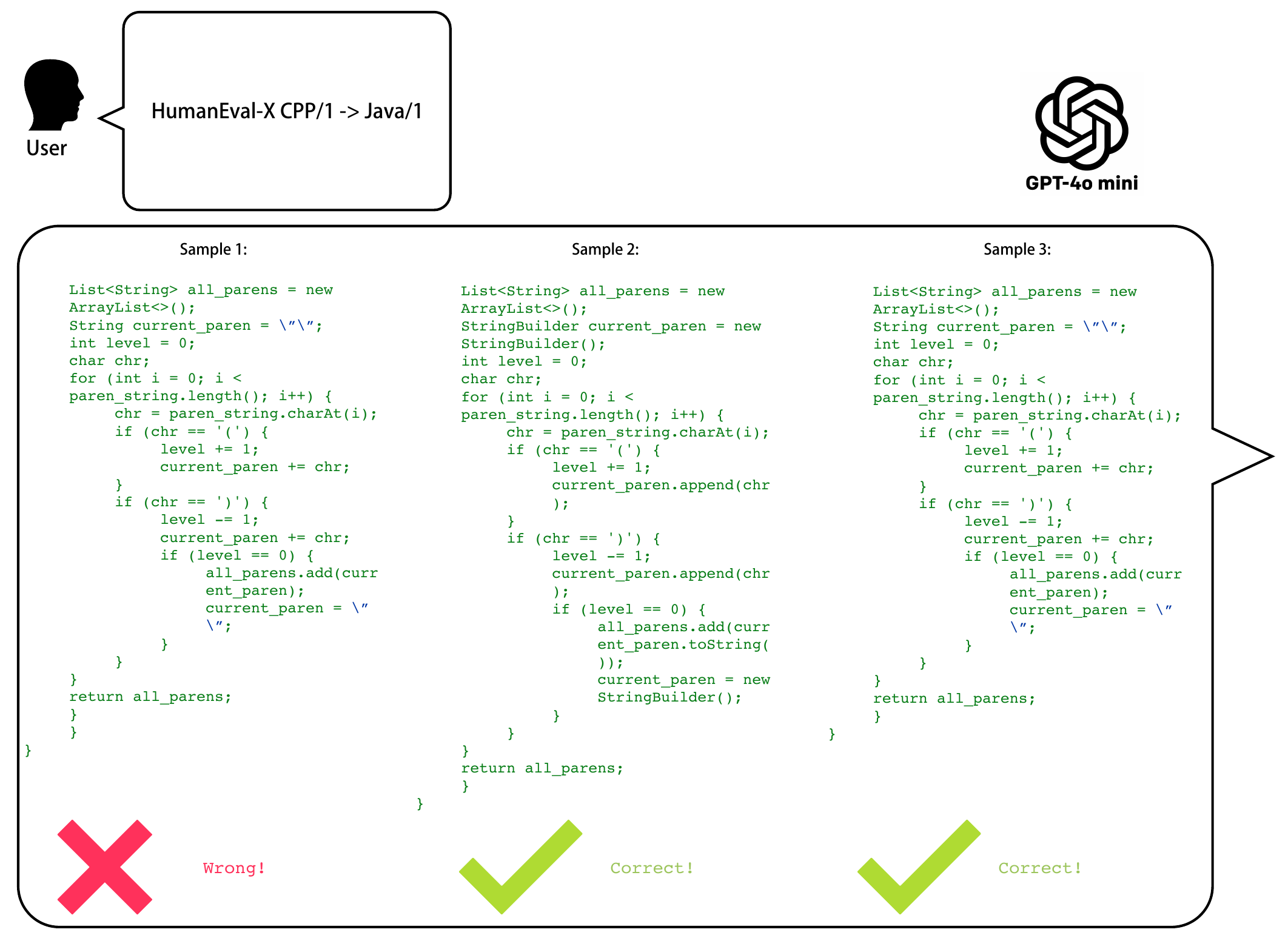}
    \caption{Sample answers from the GPT-4o-mini model on C++ to Java translation task~(Task 2). The prompt is not displayed for brevity.}
    \label{sample-code-trans}
\end{figure}

\begin{figure*}[!t]
    \centering
    \includegraphics[width=0.9\linewidth]{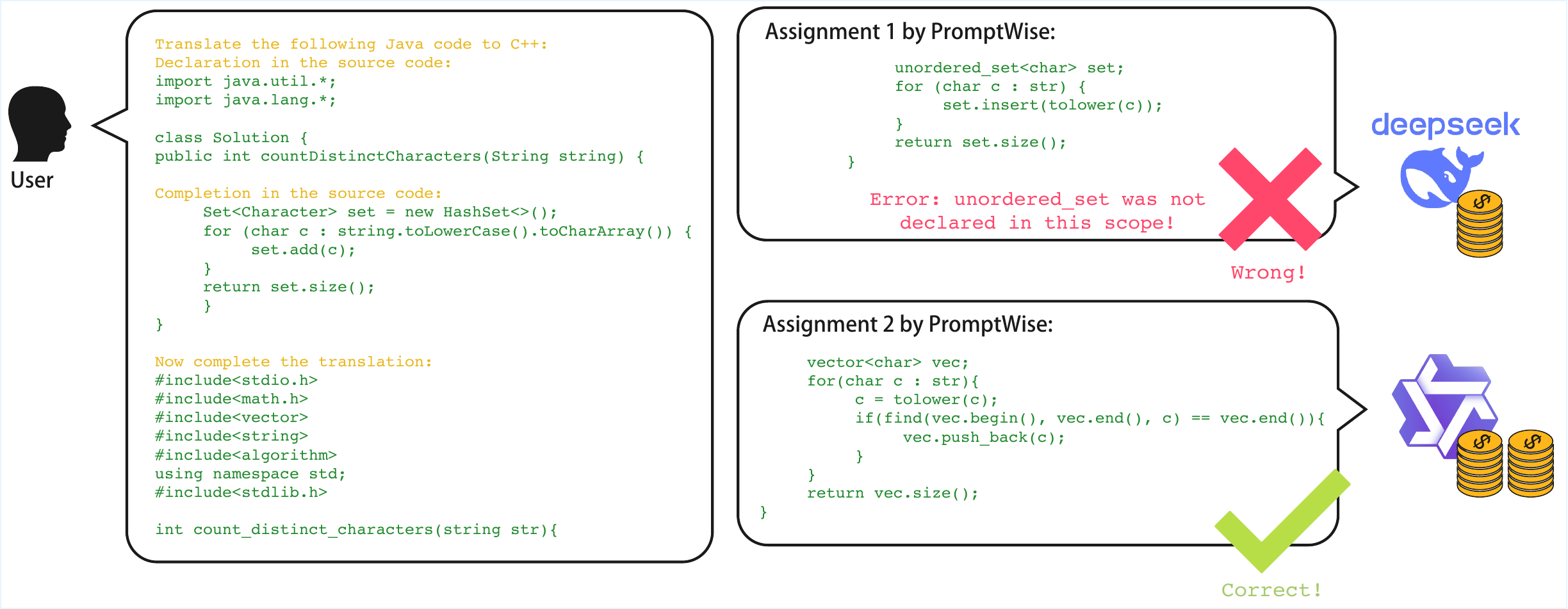}
    \caption{Prompt assignments of {\UCBabbv} on the code translation task: At the \num{490}-th step, the task is to translate Task 'Java/16' in the HumanEval-X benchmark to C++ language. {\UCBabbv} queries the cheaper Deepseek-Chat first and then alternates to the more expensive Qwen-Plus model.}
    \label{fig:code-trans-asssignment}
\end{figure*}

\subsection{Additional Results}

We report additional numerical results for the \textcolor{gray}{Random ('- -')}, \textcolor{mplblue}{Greedy ('- -')}, \textcolor{BlueGreen}{Lowest-cost}, and \textcolor{RedViolet}{Highest-cost} baselines.

\begin{figure}[!ht]
    \centering
    \includegraphics[width=0.40\textwidth]{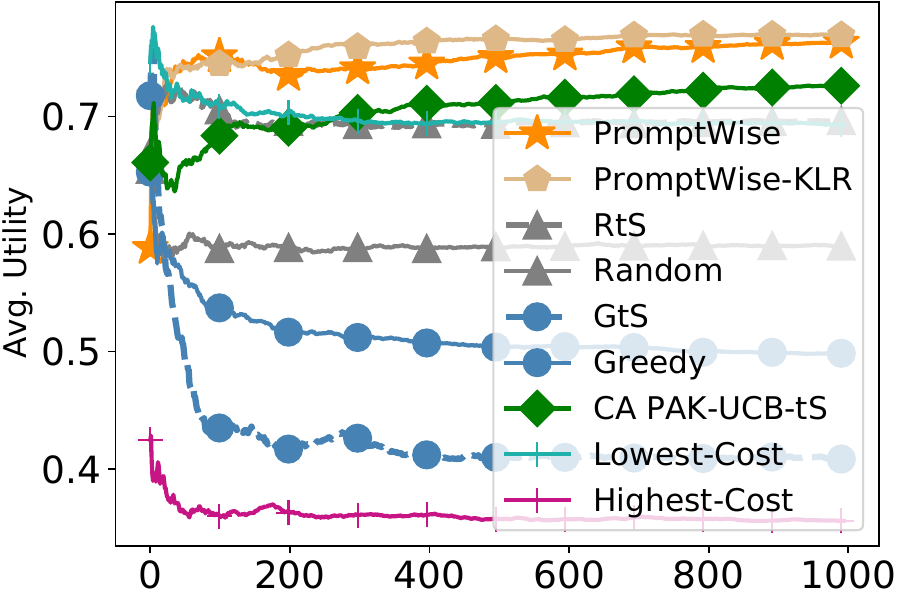}
    \hfill
    \includegraphics[width=0.40\textwidth]{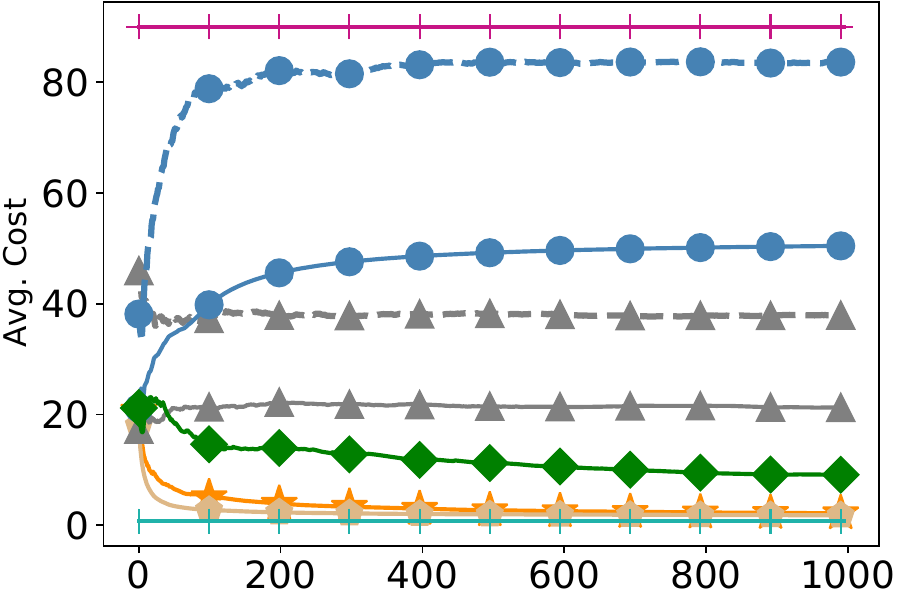} \\
    \vspace{10pt}
    \includegraphics[width=0.40\textwidth]{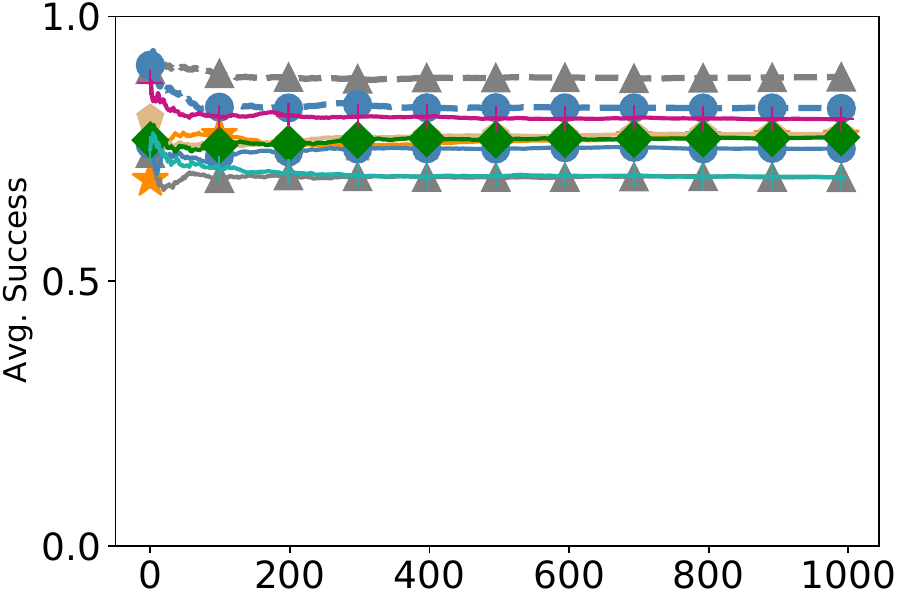}
    \caption{Random Sudoku games~(Task 1): Gemini-2.5-Flash-preview, Deepseek-Chat, Qwen-Plus, GPT-4o, and Claude-Opus-4. Results are averaged over \num{20} trials.}
    \label{fig:sudoku-full}
\end{figure}

\begin{figure}[!ht]
    \centering
    \includegraphics[width=0.40\textwidth]{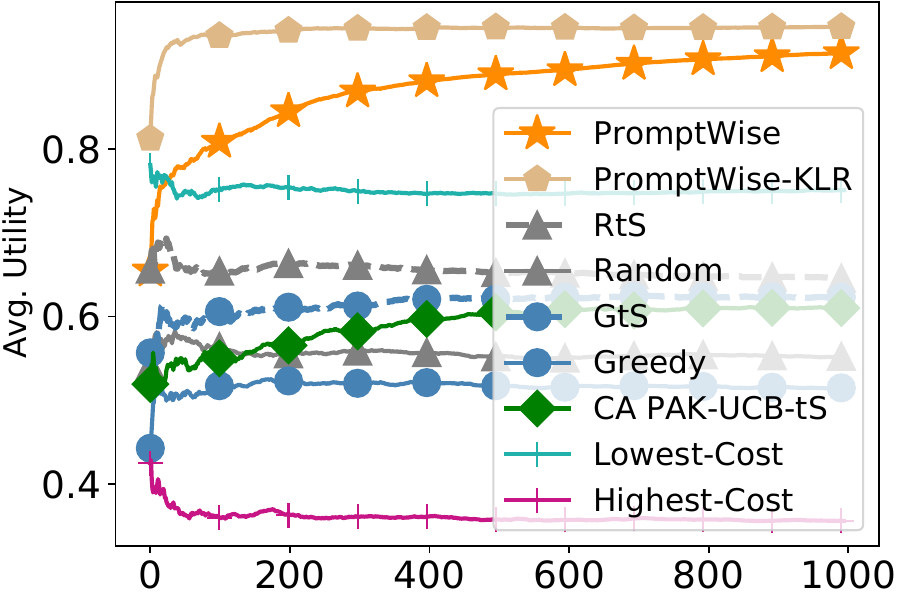}
    \hfill
    \includegraphics[width=0.40\textwidth]{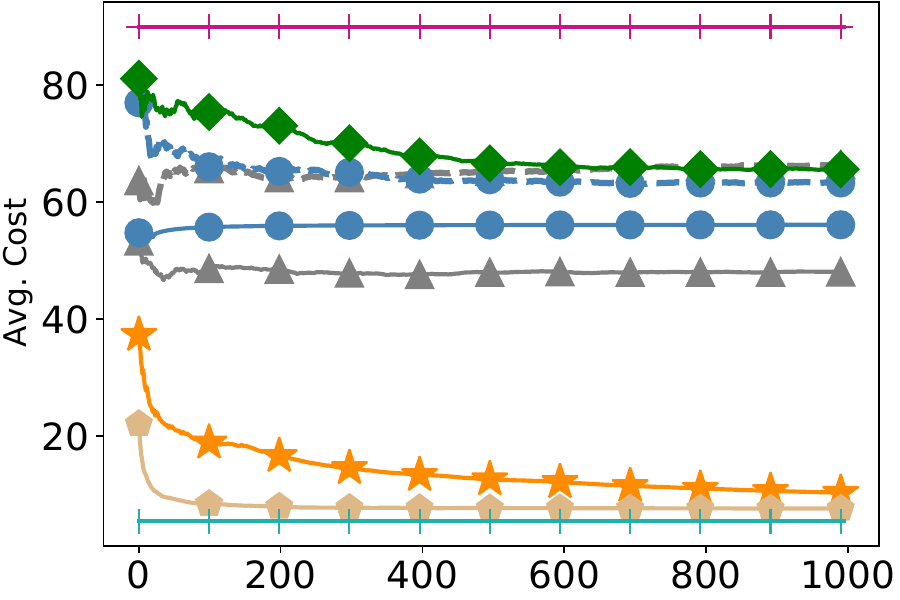} \\
    \vspace{10pt}
    \includegraphics[width=0.40\textwidth]{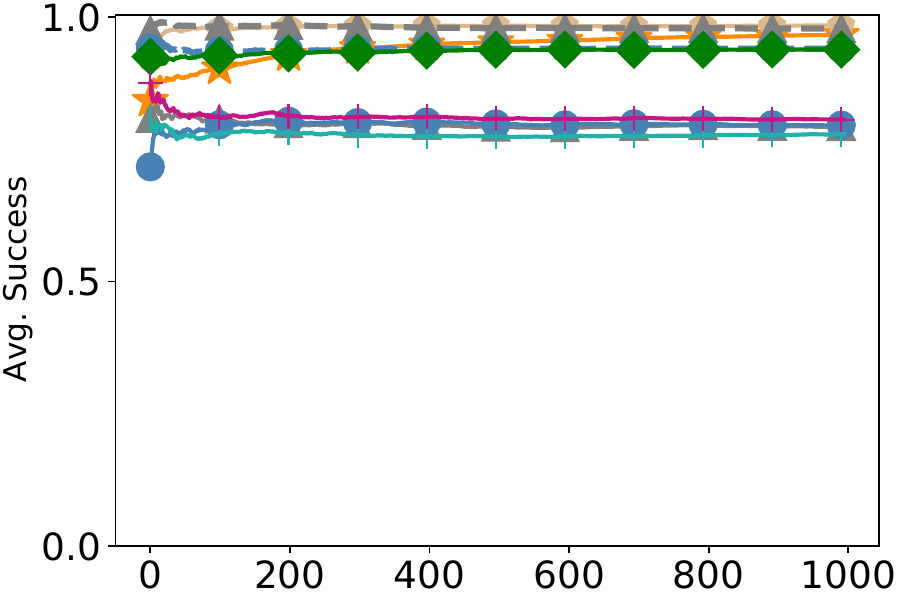}
    \caption{Random Sudoku games~(Task 1): Claude-Opus-4 and o3-mini. Results are averaged over \num{20} trials.}
    \label{fig:sudoku-alt}
\end{figure}

\begin{figure}[!ht]
    \centering
    \includegraphics[width=0.40\textwidth]{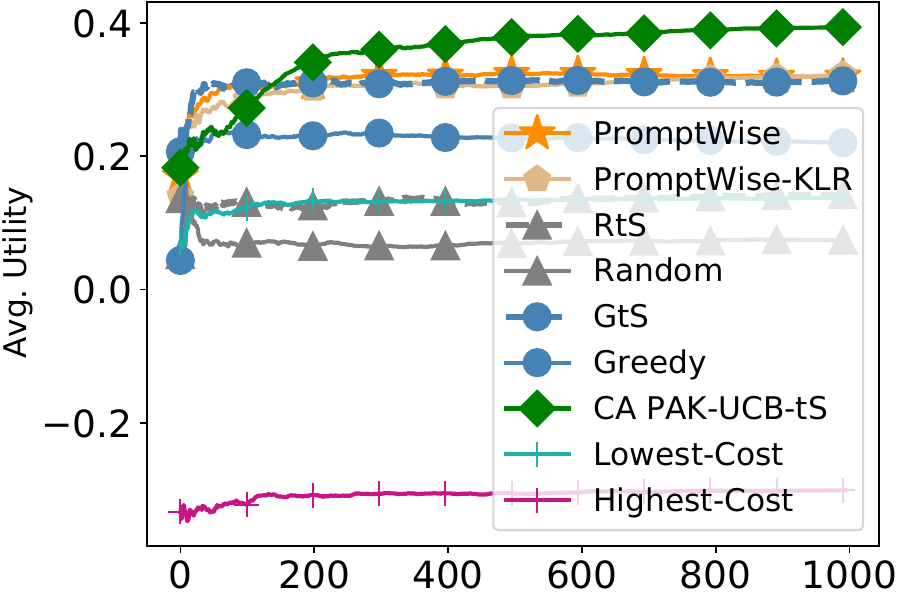}
    \hfill
    \includegraphics[width=0.40\textwidth]{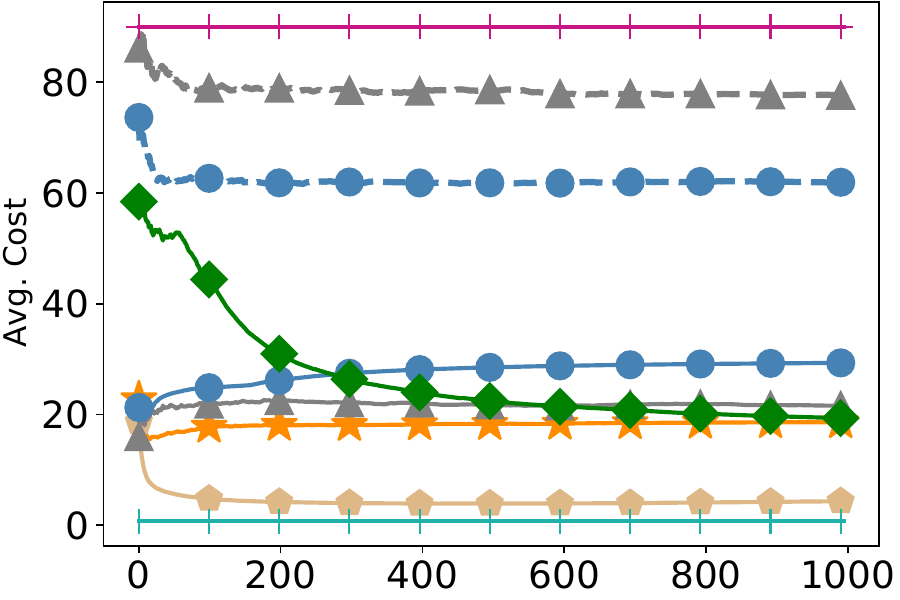} \\
    \vspace{10pt}
    \includegraphics[width=0.40\textwidth]{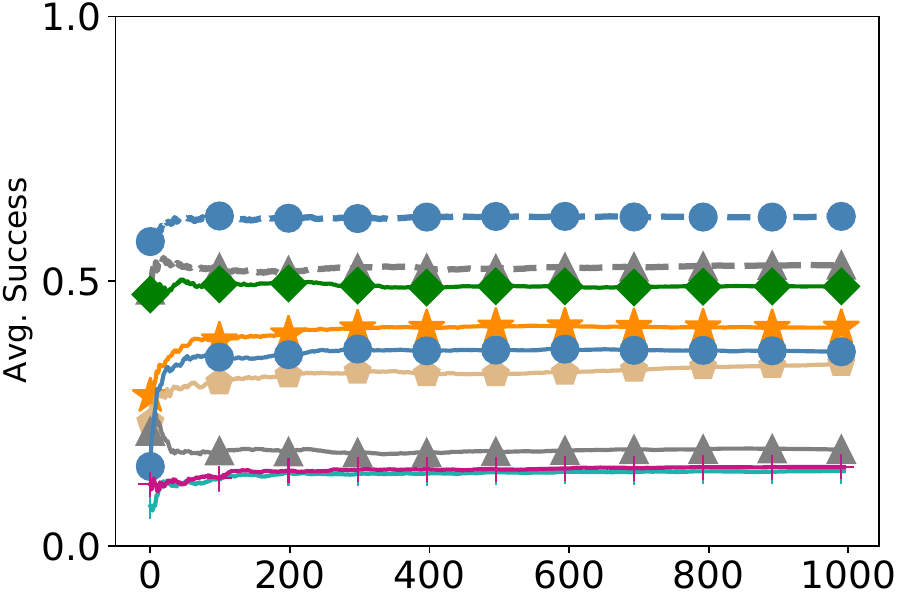}
    \caption{Chess Puzzle-Solving (Task 1): Gemini-2.5-Flash-preview, Deepseek-Chat, Qwen-Plus, GPT-4o, and Claude-Opus-4. Results are averaged over \num{20} trials.}
    \label{fig:chess-puzzle-full}
\end{figure}

\begin{figure}[!ht]
    \centering
    \includegraphics[width=0.40\textwidth]{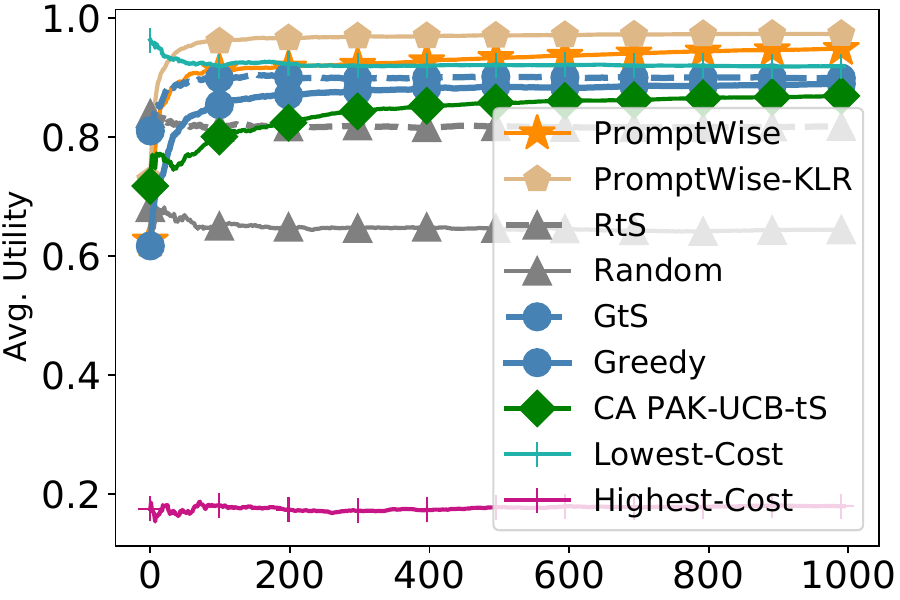}
    \hfill
    \includegraphics[width=0.40\textwidth]{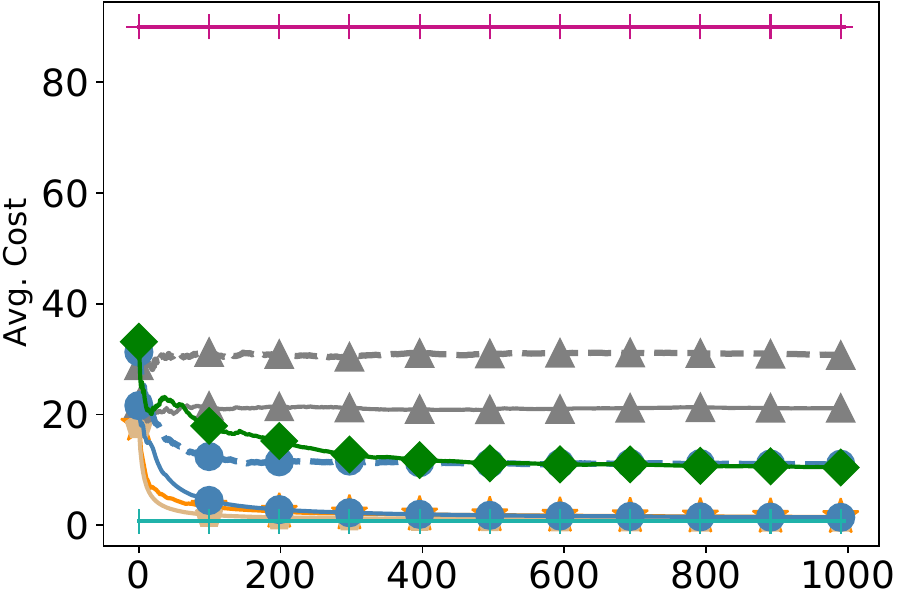} \\
    \vspace{10pt}
    \includegraphics[width=0.40\textwidth]{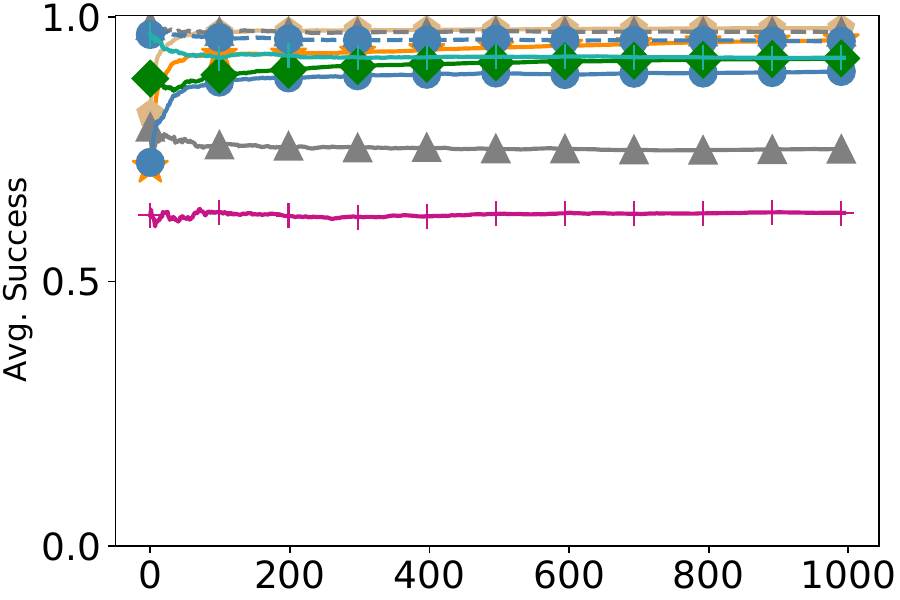}
    \caption{Code completion on the HumanEval benchmark~(Task 2): GPT-4o-mini, Claude-3.5-Haiku, Gemini-2.5-Flash-preview, Deepseek-Chat, and Qwen-Plus.}
    \label{fig:humaneval-full}
\end{figure}

\begin{figure}[!ht]
    \centering
    \includegraphics[width=0.40\textwidth]{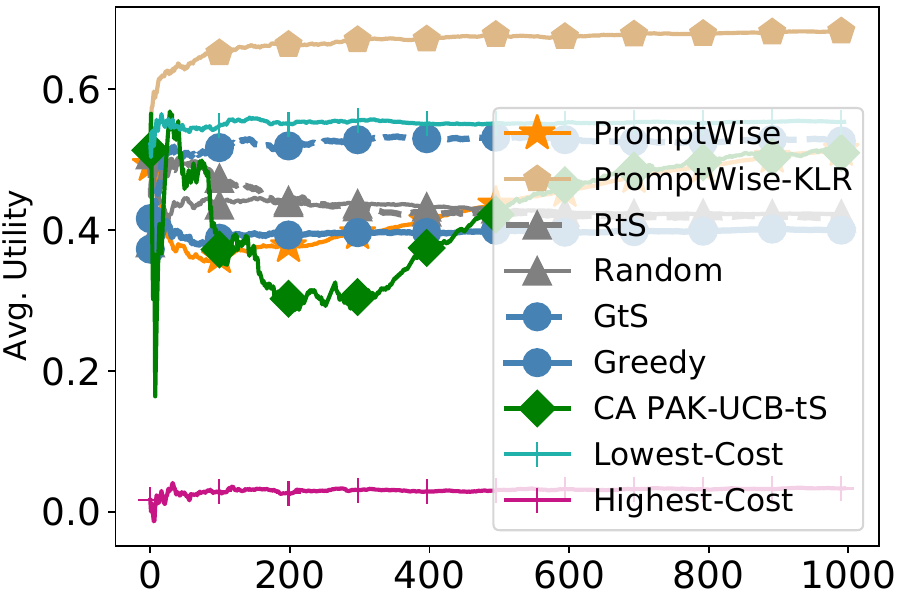}
    \hfill
    \includegraphics[width=0.40\textwidth]{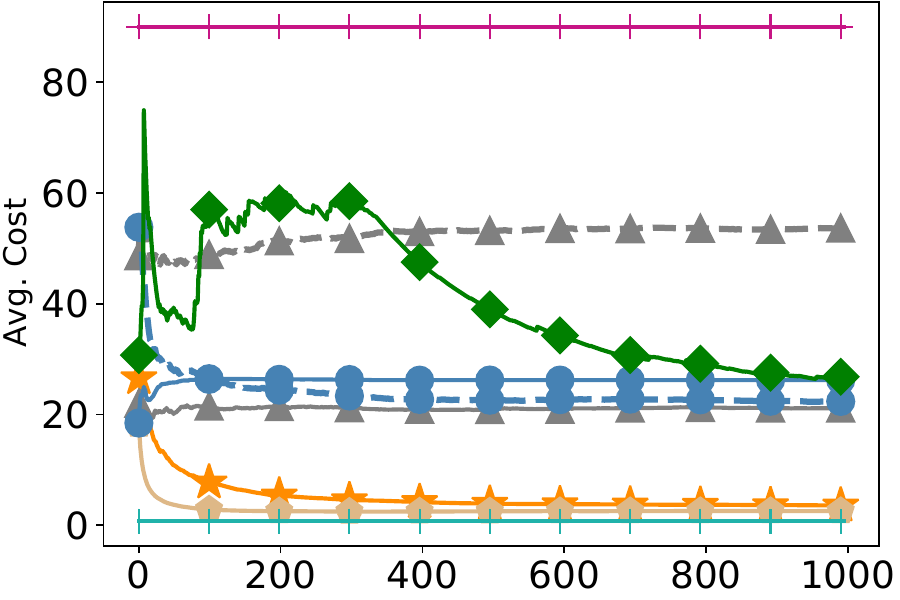} \\
    \vspace{10pt}
    \includegraphics[width=0.40\textwidth]{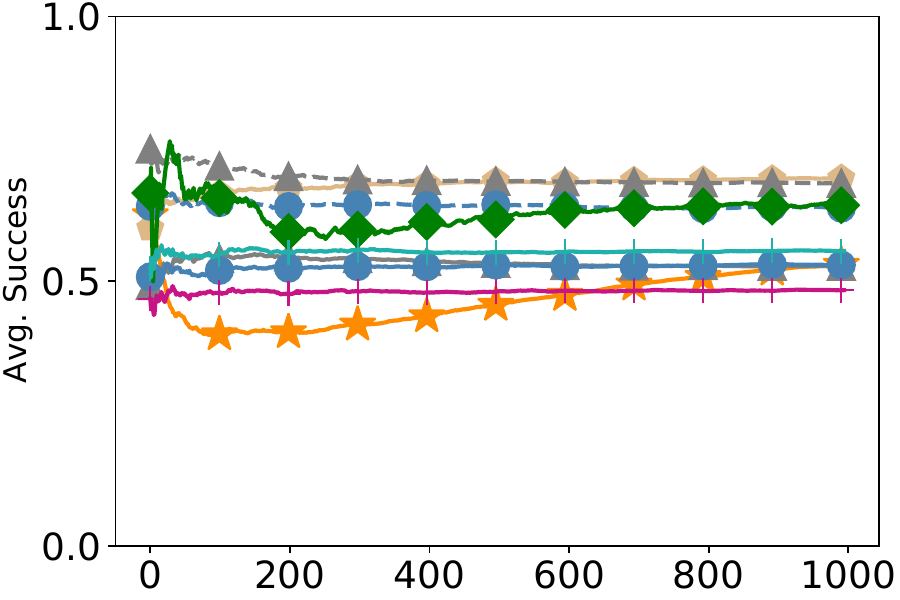}
    \caption{Code completion on BigCodeBench~(Task 2): Gemini-2.5-Flash-preview, Deepseek-Chat, Qwen-Plus, GPT-4o, and Claude-Opus-4. Results are averaged over \num{20} trials.}
    \label{fig:bigcodebench-full}
\end{figure}

\begin{figure}[!ht]
    \centering
    \includegraphics[width=0.40\textwidth]{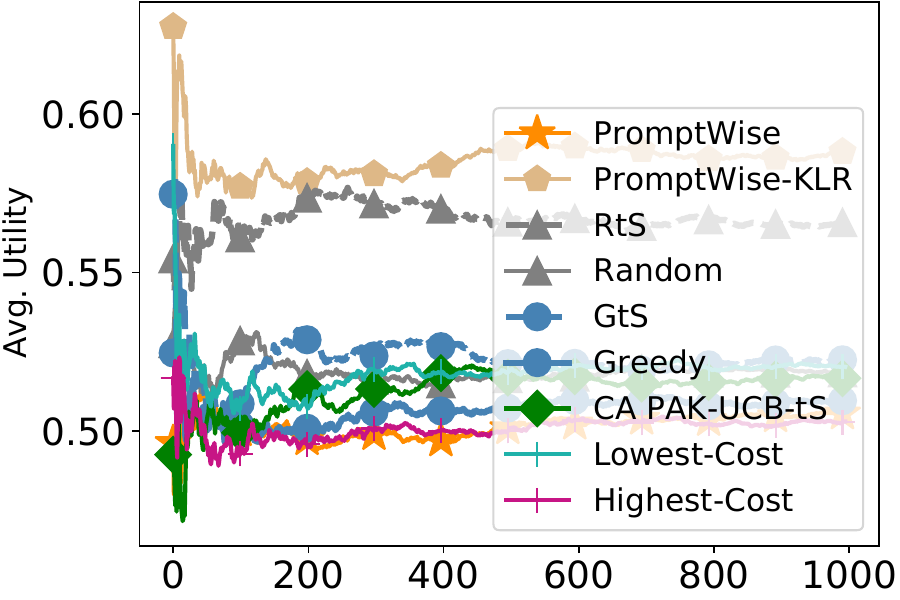}
    \hfill
    \includegraphics[width=0.40\textwidth]{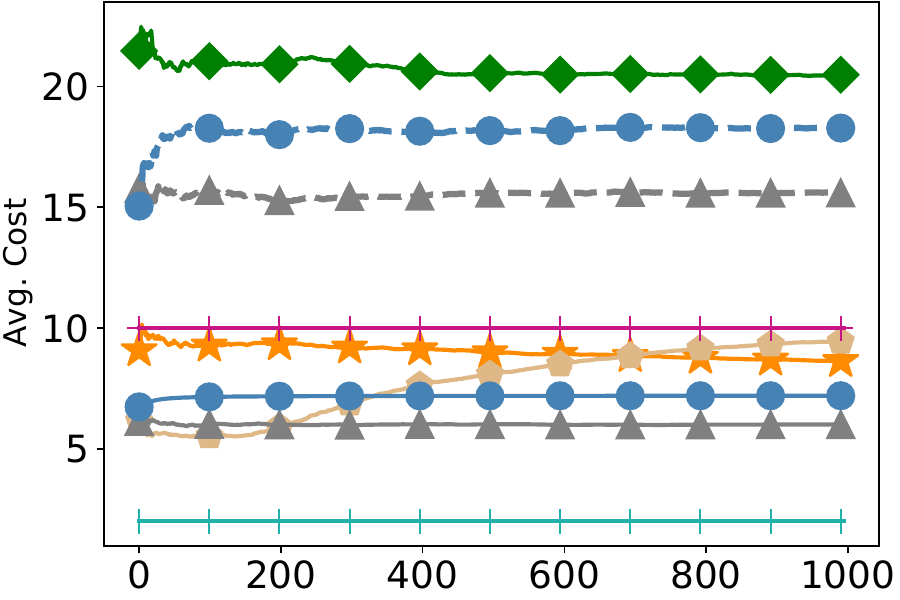} \\
    \vspace{10pt}
    \includegraphics[width=0.40\textwidth]{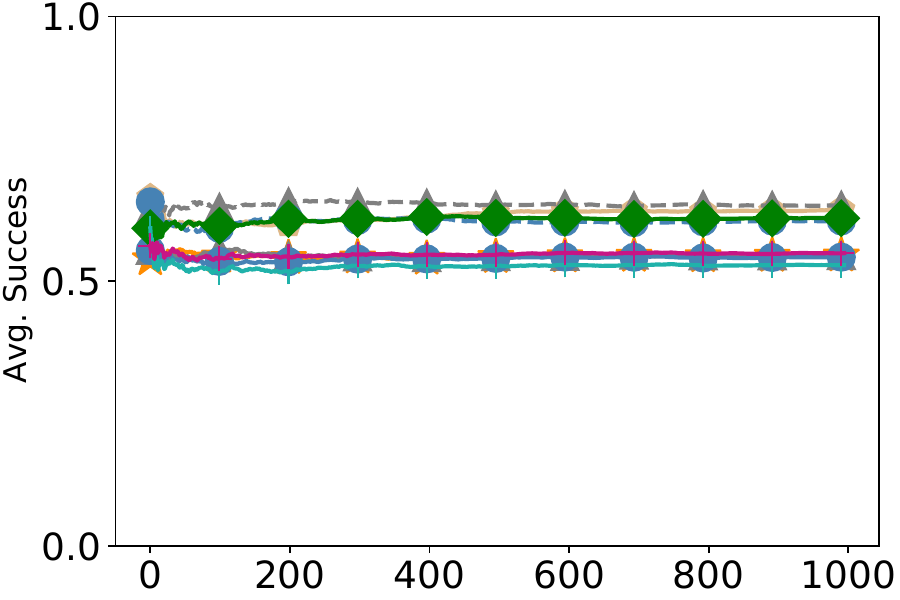}
    \caption{Code completion on BigCodeBench~(Task 2): GPT-4o-mini, Claude-3.5-Haiku, Deepseek-Chat, and Claude-Opus-4. Results are averaged over \num{20} trials.}
    \label{fig:bigcodebench-alt}
\end{figure}

\begin{figure}[!ht]
    \centering
    \includegraphics[width=0.40\textwidth]{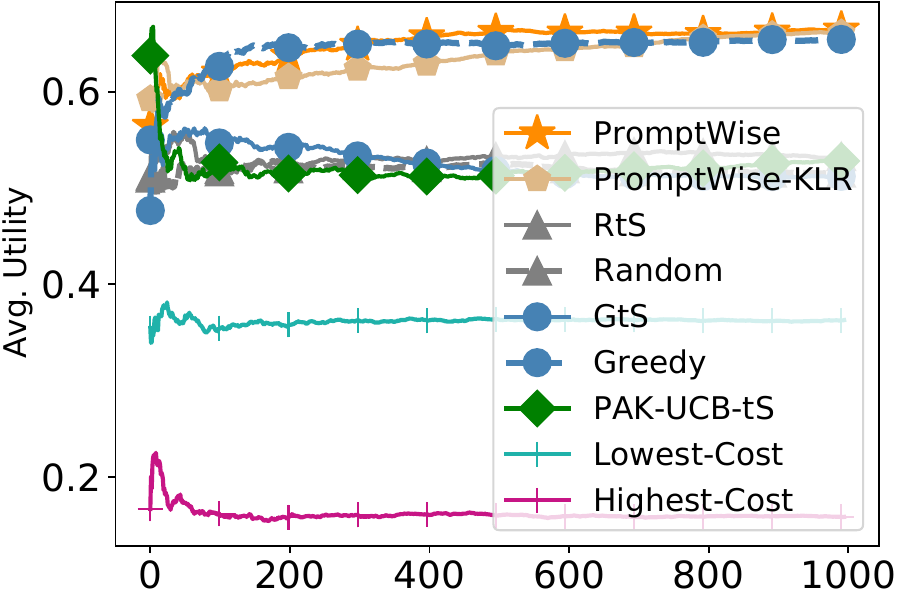}
    \hfill
    \includegraphics[width=0.40\textwidth]{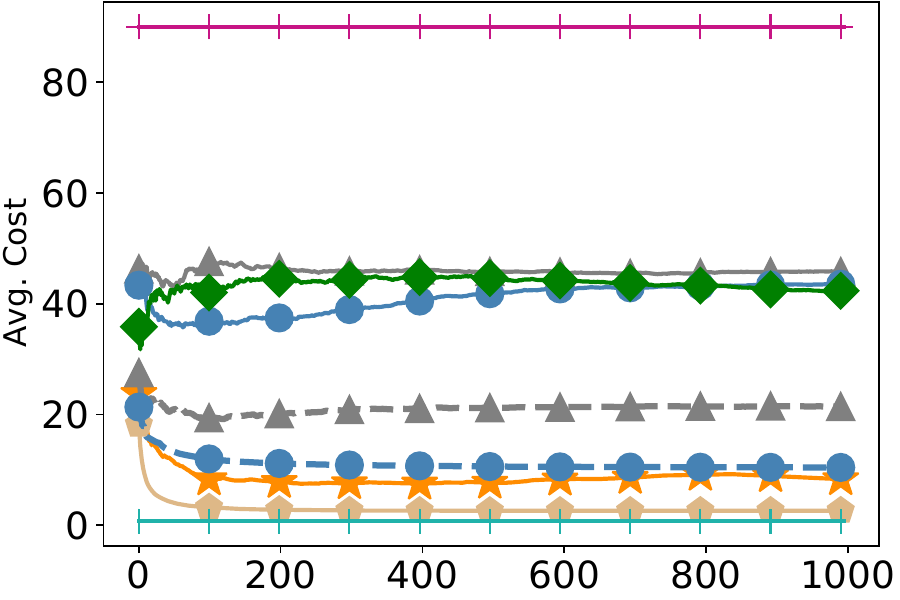} \\
    \vspace{10pt}
    \includegraphics[width=0.40\textwidth]{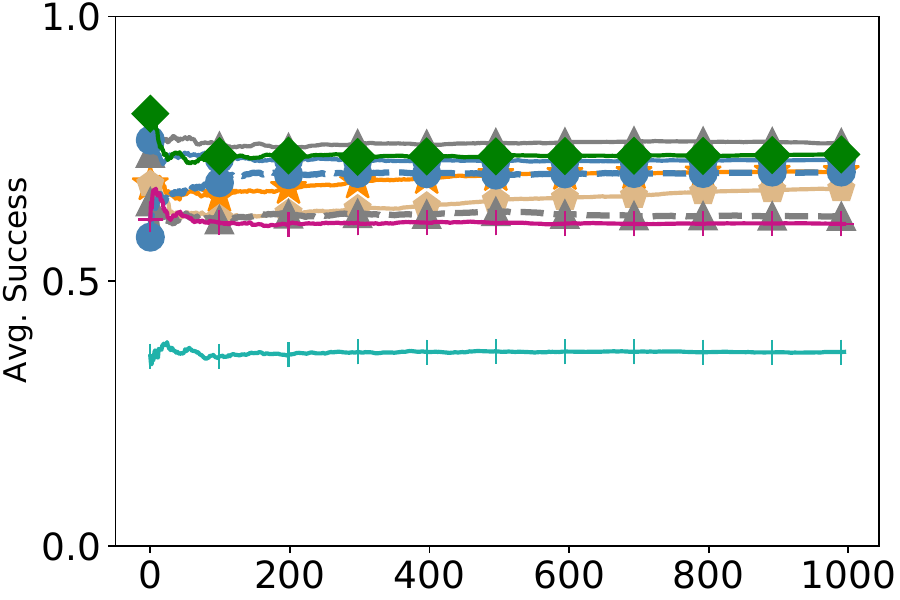}
    \caption{Code translation on the HumanEval-X benchmark~(Task 3): Gemini-2.5-Flash-preview, Deepseek-Chat, Qwen-Plus, GPT-4o, and Claude-Opus-4. Results are averaged over \num{20} trials.}
    \label{fig:humaneval-x-full}
\end{figure}

\begin{figure}[!ht]
    \centering
    \includegraphics[width=0.40\textwidth]{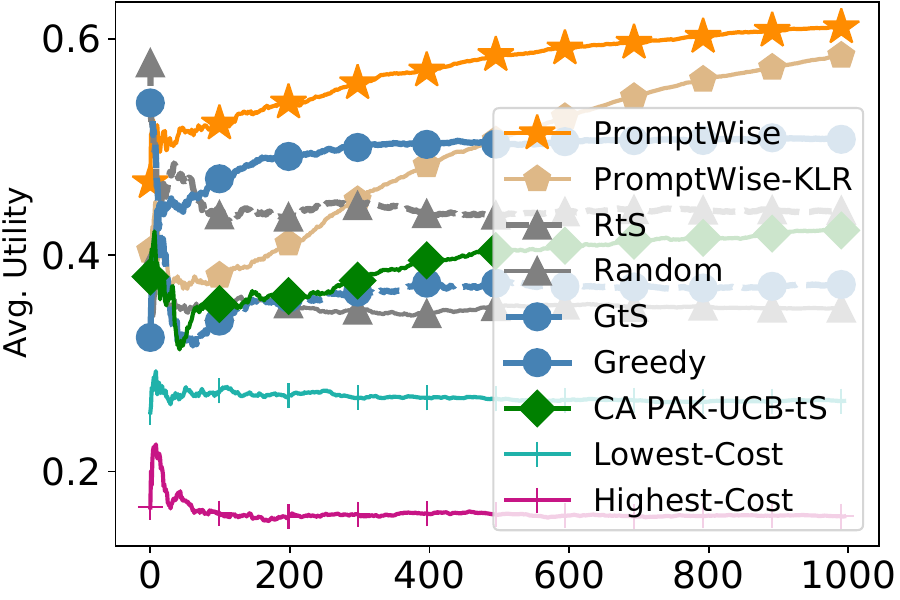}
    \hfill
    \includegraphics[width=0.40\textwidth]{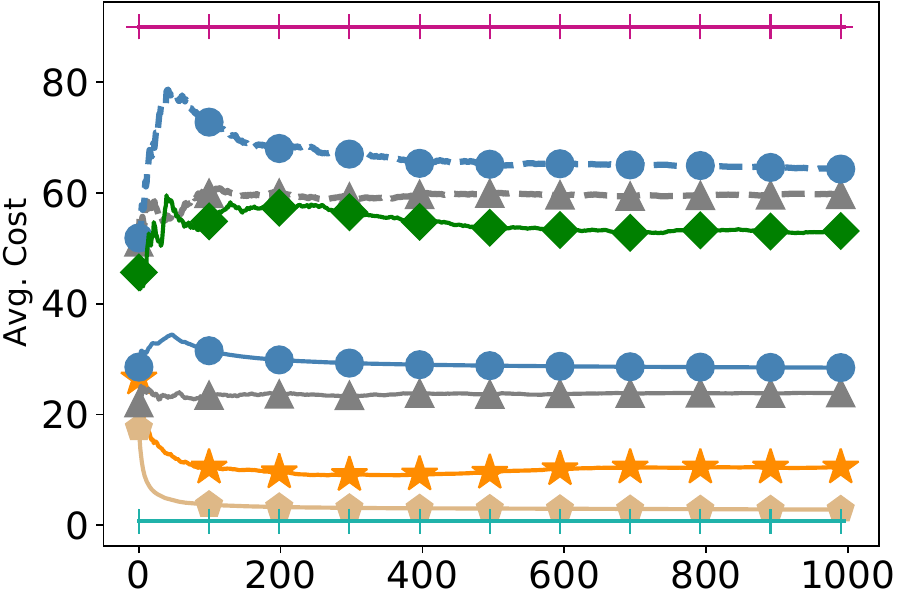} \\
    \vspace{10pt}
    \includegraphics[width=0.40\textwidth]{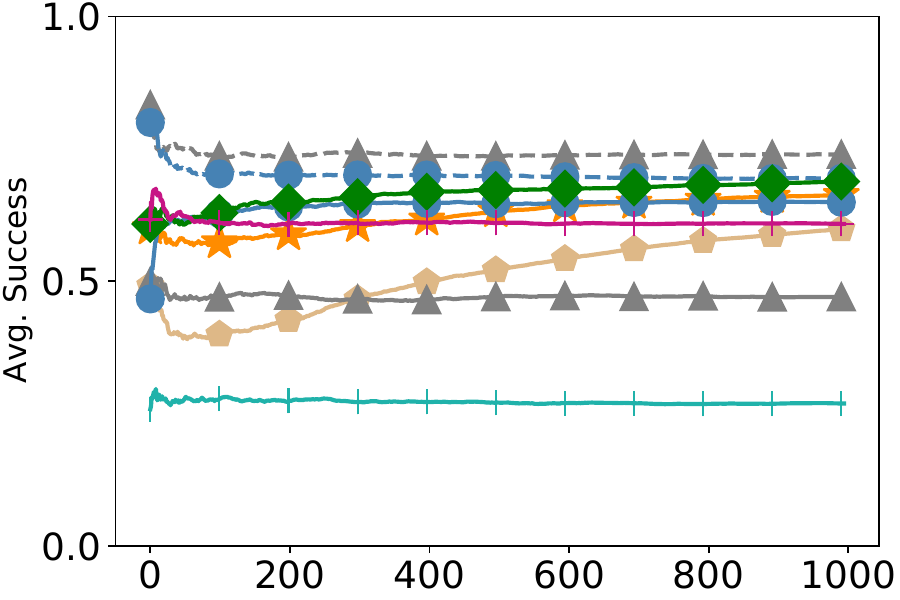}
    \caption{Code translation on the HumanEval-X benchmark~(Task 3): GPT-4o-mini, Deepseek-Chat, Claude-3.5-Haiku, and Claude-Opus-4. Results are averaged over \num{20} trials.}
    \label{fig:humaneval-x-alt}
\end{figure}

\begin{figure}[!ht]
    \centering
    \begin{tabular}{c | c@{\hskip 8pt}c@{\hskip 8pt} c@{\hskip 8pt} c@{\hskip 8pt} c}
        \toprule
        \makecell[c]{Prompt Type \\ (MS-COCO)} & \textbf{Expert 1} & \textbf{Expert 2} & \textbf{Expert 3} & \textbf{Expert 4} & \textbf{Expert 5} \\
        \midrule
        \textbf{Cost} & \num{0.75} & \num{1.37} & \num{1.6} & \num{12.5} & \num{90.0} \\
        \midrule
        carrot & \adjustbox{valign=c}{\includegraphics[width=1.0cm, height=1.0cm,cfbox=green 1pt 1pt]{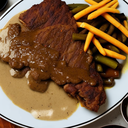}} & \adjustbox{valign=c}{\includegraphics[width=1.0cm, height=1.0cm]{figs/sync-T2I/type3_gen_img.png}} & \adjustbox{valign=c}{\includegraphics[width=1.0cm, height=1.0cm]{figs/sync-T2I/type3_gen_img.png}} & \adjustbox{valign=c}{\includegraphics[width=1.0cm, height=1.0cm]{figs/sync-T2I/type3_gen_img.png}} & \adjustbox{valign=c}{\includegraphics[width=1.0cm, height=1.0cm]{figs/sync-T2I/type3_gen_img.png}} \\
        
        dog & \adjustbox{valign=c}{\includegraphics[width=1.0cm, height=1.0cm]{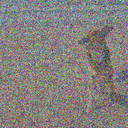}} & \adjustbox{valign=c}{\includegraphics[width=1.0cm, height=1.0cm,cfbox=green 1pt 1pt]{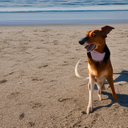}} & \adjustbox{valign=c}{\includegraphics[width=1.0cm, height=1.0cm]{figs/sync-T2I/type1_gen_img.png}} & \adjustbox{valign=c}{\includegraphics[width=1.0cm, height=1.0cm]{figs/sync-T2I/type1_gen_img.png}} & \adjustbox{valign=c}{\includegraphics[width=1.0cm, height=1.0cm]{figs/sync-T2I/type1_gen_img.png}} \\
        
        cake & \adjustbox{valign=c}{\includegraphics[width=1.0cm, height=1.0cm]{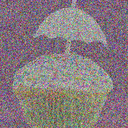}} & \adjustbox{valign=c}{\includegraphics[width=1.0cm, height=1.0cm]{figs/sync-T2I/type4_noisy_gen_img.png}} & \adjustbox{valign=c}{\includegraphics[width=1.0cm, height=1.0cm,cfbox=green 1pt 1pt]{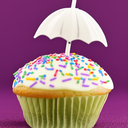}} & \adjustbox{valign=c}{\includegraphics[width=1.0cm, height=1.0cm]{figs/sync-T2I/type4_gen_img.png}} & \adjustbox{valign=c}{\includegraphics[width=1.0cm, height=1.0cm]{figs/sync-T2I/type4_gen_img.png}} \\
        
        car & \adjustbox{valign=c}{\includegraphics[width=1.0cm, height=1.0cm]{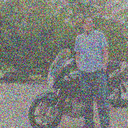}} & \adjustbox{valign=c}{\includegraphics[width=1.0cm, height=1.0cm]{figs/sync-T2I/type2_noisy_gen_img.png}} & \adjustbox{valign=c}{\includegraphics[width=1.0cm, height=1.0cm]{figs/sync-T2I/type2_noisy_gen_img.png}} & \adjustbox{valign=c}{\includegraphics[width=1.0cm, height=1.0cm,cfbox=green 1pt 1pt]{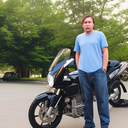}} & \adjustbox{valign=c}{\includegraphics[width=1.0cm, height=1.0cm]{figs/sync-T2I/type2_gen_img.png}} \\
        
        bowl & \adjustbox{valign=c}{\includegraphics[width=1.0cm, height=1.0cm]{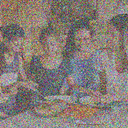}} & \adjustbox{valign=c}{\includegraphics[width=1.0cm, height=1.0cm]{figs/sync-T2I/type5_noisy_gen_img.png}} & \adjustbox{valign=c}{\includegraphics[width=1.0cm, height=1.0cm]{figs/sync-T2I/type5_noisy_gen_img.png}} & \adjustbox{valign=c}{\includegraphics[width=1.0cm, height=1.0cm]{figs/sync-T2I/type5_noisy_gen_img.png}} & \adjustbox{valign=c}{\includegraphics[width=1.0cm, height=1.0cm,cfbox=green 1pt 1pt]{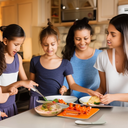}} \\
        \bottomrule
    \end{tabular}
    \caption{Arms in synthetic T2I generation: We set the model costs to align with Tasks 1-3. Prompts are randomly drawn from five categories in the MS-COCO dataset~\citep{lin2015microsoftcococommonobjects}, including 'carrot', 'dog', 'cake', 'car', and 'bowl'. Each row and column displays the generated images from an arm/expert conditioned to the prompt type. The $i$-th expert is constructed such that it outputs a (clean) image from Stable Diffusion v2 for the first $i$ prompt types, while outputting a white-noise-perturbed image with probability \num{0.5} for the rest of the prompt types.}
    \label{tab: Sync-T2I}
\end{figure}

\begin{figure}[!ht]
    \centering
    \includegraphics[width=0.40\textwidth]{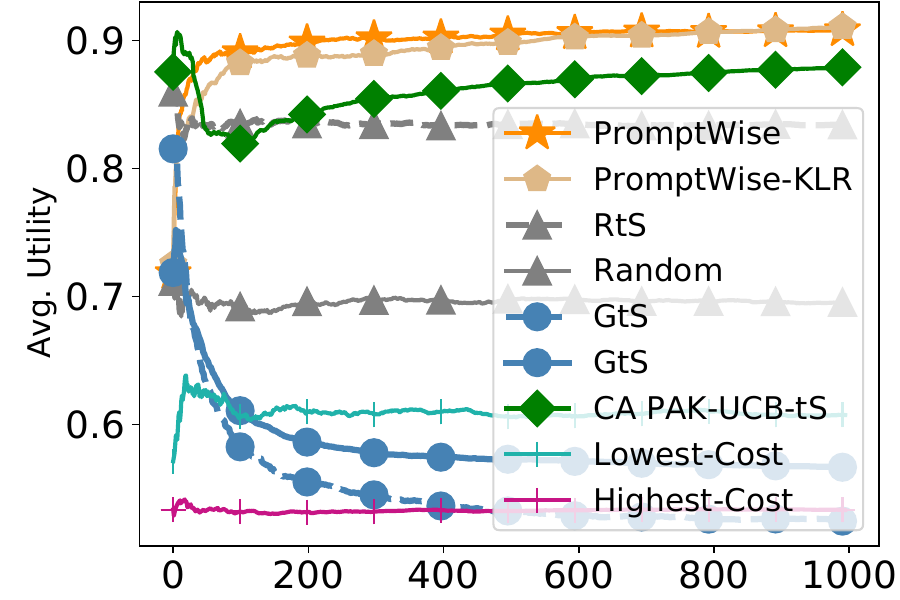}
    \hfill
    \includegraphics[width=0.40\textwidth]{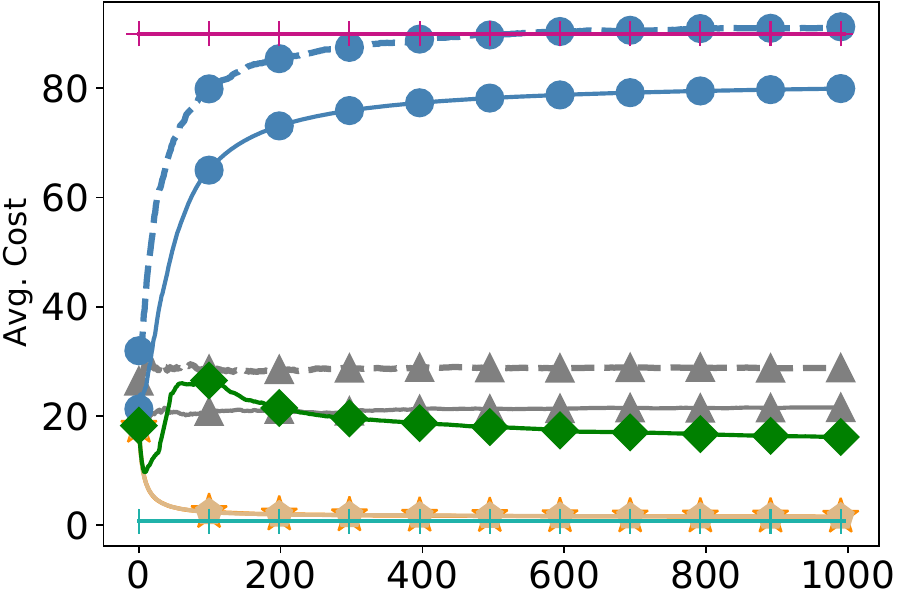} \\
    \vspace{10pt}
    \includegraphics[width=0.40\textwidth]{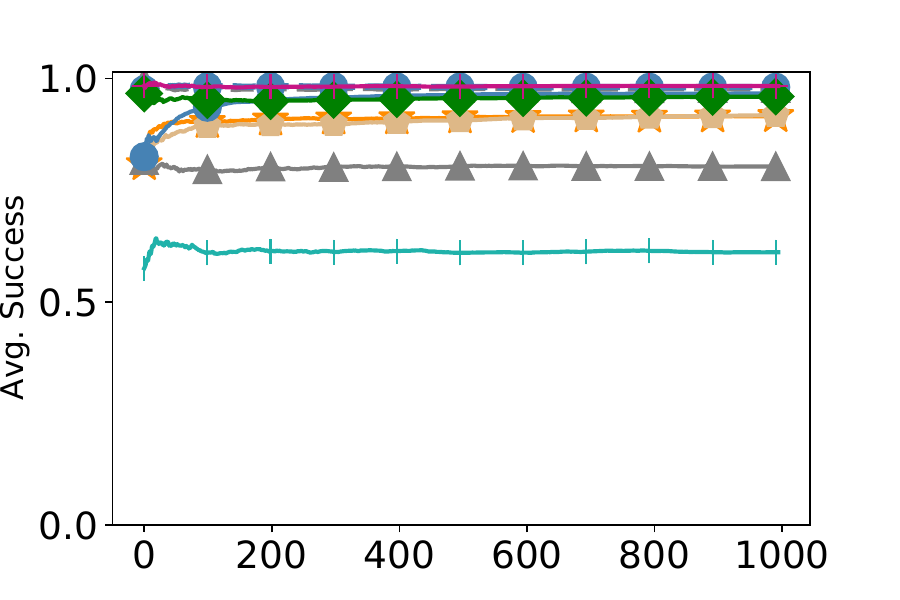}
    \caption{Synthetic Text-to-Image (T2I) Generation (Task 4): The synthetic arms are visualized in Figure~\ref{tab: Sync-T2I}, where $i$-th arm is an expert in generating images for prompts of the first $i$ types. The reward for an generated image $I$ is defined to be $\mathbbm{1}[\text{CLIPScore}(I) \ge 30]$. Results are averaged over 20 trials.}
    \label{fig:sync-T2I-full}
\end{figure}

\subsection{Testing Adaptability of PromptWise.}

We tested {\UCBabbv} on the code translation task under a scenario where new models and prompts are introduced. This scenario is common in practice due to version updates and the release of new models. On the other hand, the interests of a typical user may vary from time to time. In the first setup~(Figure~\ref{fig:new-model-full}), the model is asked to translate between C++ and Java code. Initially, the learner can access two models, i.e., Qwen-Plus and Gemini-2.5-Flash-preview. In the process, GPT-4o, Deepseek-Chat, and Claude-Opus-4 are added to the pool after each \num{200} steps. In the second setup~(Figure~\ref{fig:new-prompt-full}), new translation tasks are introduced in the model selection process. Specifically, the task pool initially contains Java-to-C++ translation problems. Later on, C++-to-Java and Python-to-C++ translation tasks are introduced. Our results show that the proposed {\UCBabbv} can adapt to new models and prompts in the model selection process and significantly outperform the baselines.

\begin{figure}[!ht]
    \centering
    \includegraphics[width=0.40\textwidth]{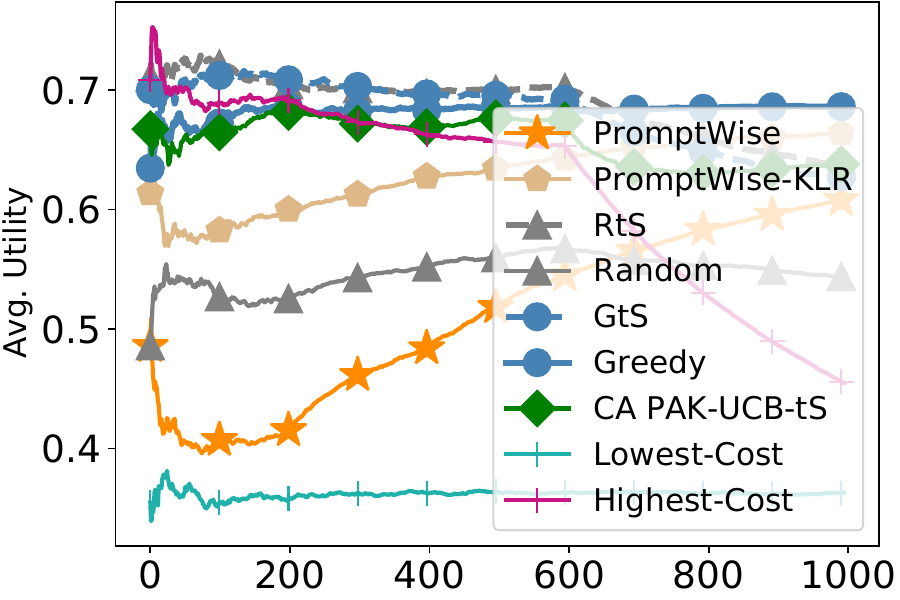}
    \hfill
    \includegraphics[width=0.40\textwidth]{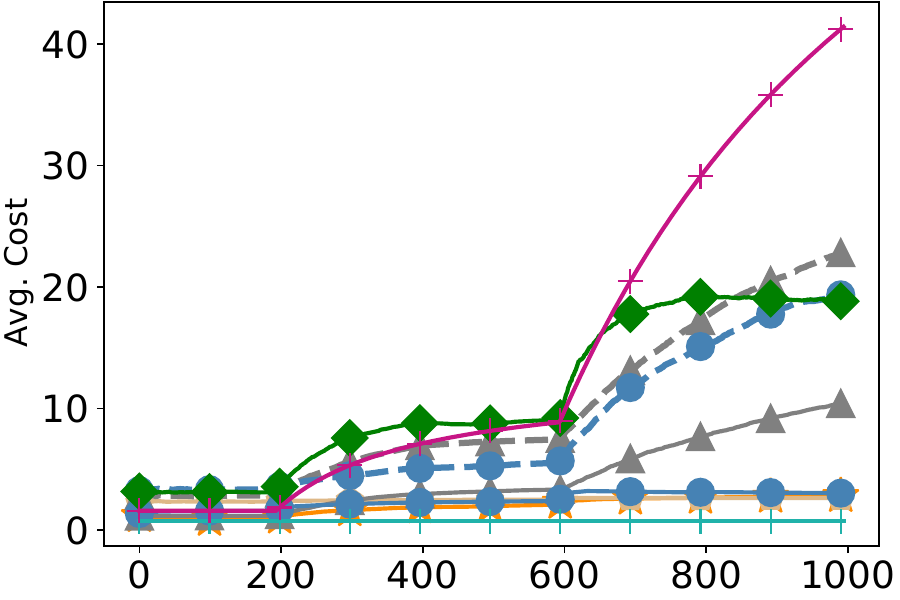} \\
    \vspace{10pt}
    \includegraphics[width=0.40\textwidth]{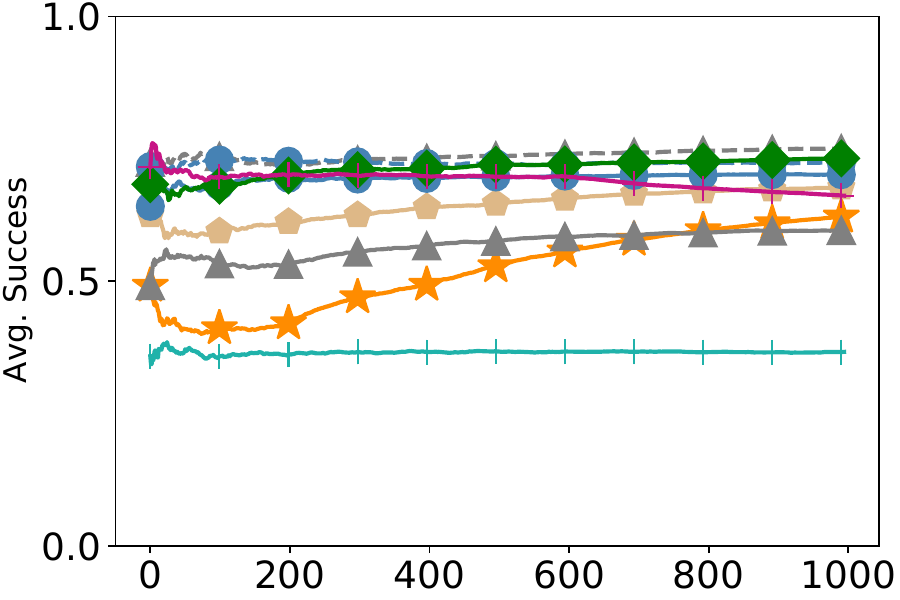}
    \caption{Adaptation to new models on code translation task: Gemini-2.5-Flash-preview, Deepseek-Chat, Qwen-Plus, GPT-4o, and Claude-Opus-4. The model pool contains two LLMs initially, i.e., Qwen-Plus and Gemini-2.5-Flash-preview. In the process, GPT-4o, Deepseek-Chat, and Claude-Opus-4 are added to the pool after each \num{200} steps. Results are averaged over 20 trials.}
    \label{fig:new-model-full}
\end{figure}

\begin{figure}[!ht]
    \centering
    \includegraphics[width=0.40\textwidth]{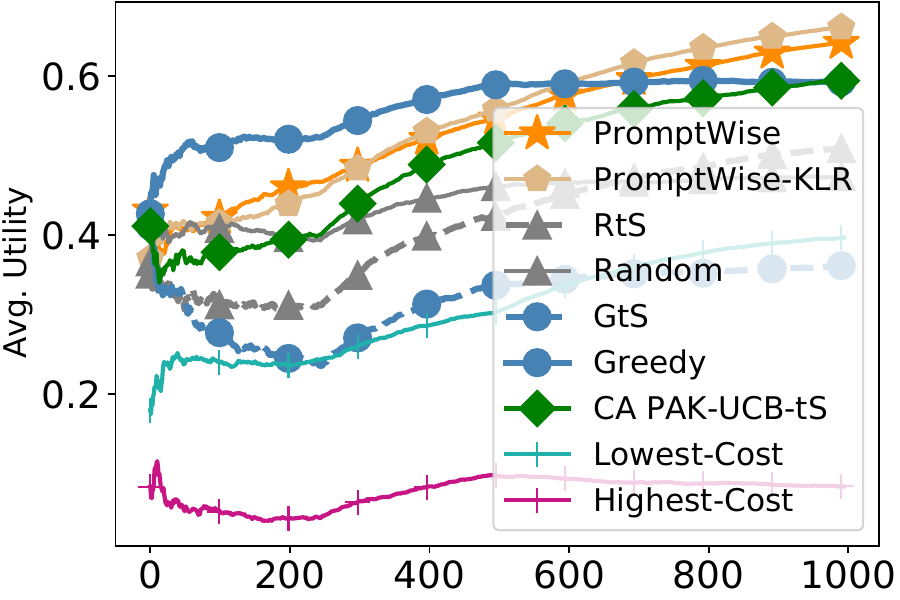}
    \hfill
    \includegraphics[width=0.40\textwidth]{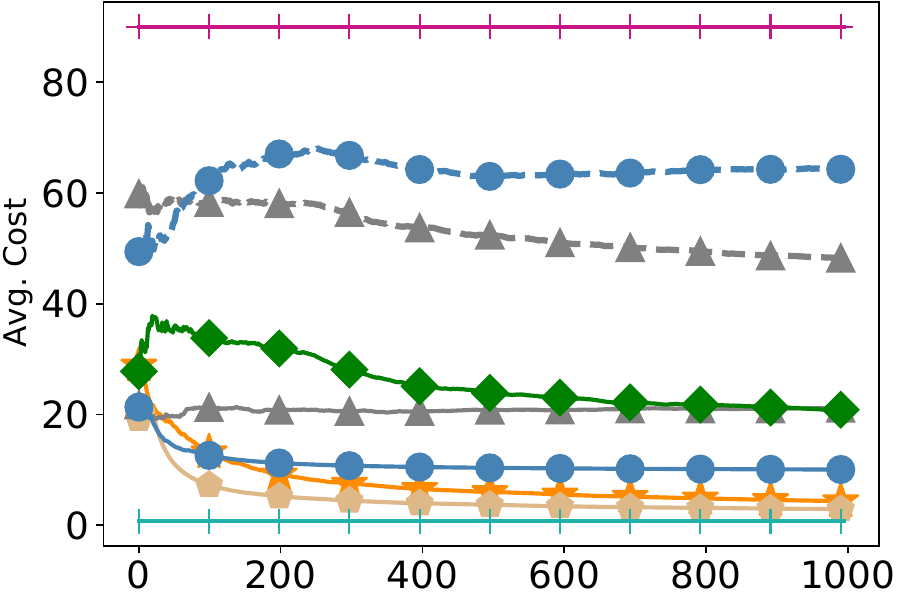} \\
    \vspace{10pt}
    \includegraphics[width=0.40\textwidth]{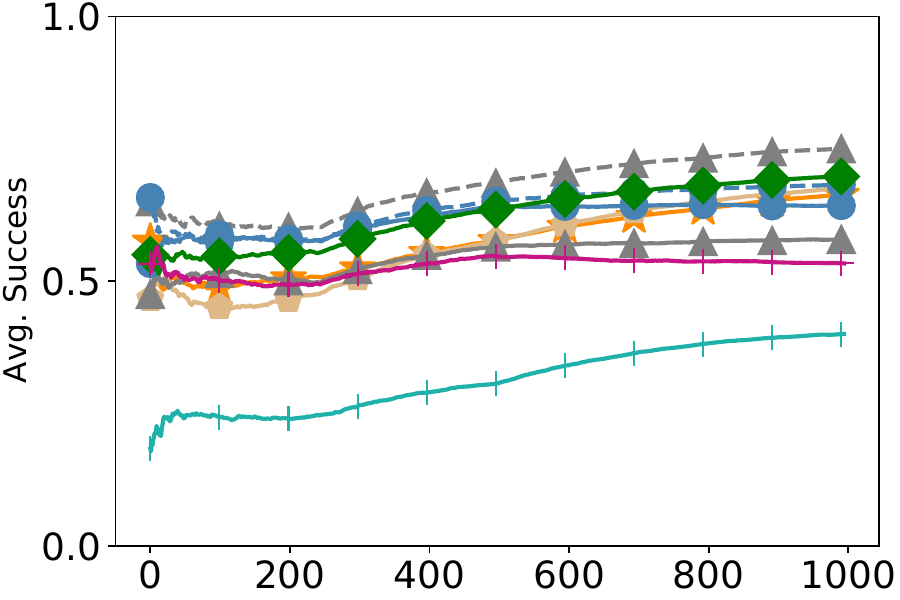}
    \caption{Adaptation to New Prompts: Gemini-2.5-Flash-preview, Deepseek-Chat, Qwen-Plus, GPT-4o, and Claude-Opus-4. Initially, the LLM is asked to solve C++-to-Java translation. Then, Java-to-C++ and Python-to-C++ tasks are introduced after each \num{250} steps. Results are averaged over 20 trials.}
    \label{fig:new-prompt-full}
\end{figure}

\begin{center}
\begin{minipage}{1.0\textwidth}
\lstset{style=mystyle}
\captionof{lstlisting}{Sample prompt for chess puzzle-solving}
\begin{lstlisting}[language=Python, label={lst:chess-puzzle-prompt}]
"""
You are a very strong chess engine. Given the FEN 'r5k1/Rp3p1p/2b2qp1/3pr3/8/4P2P/2PN1PP1/Q3K2R b K - 0 19', it is black's turn to move. Provide the single best *legal* move in Standard Algebraic Notation (SAN). Examples: 'Nf3', 'O-O', 'Rxe5', 'b8=Q'. Do not include commentary, checks ('+'), or checkmates ('#'). Output only the move."
"""
\end{lstlisting}
\end{minipage}
\end{center}

\subsection{The {\UCBabbv}-KLR Algorithm}

We present the {\UCBabbv}-KLR in Algorithm~\ref{alg:ucb-klr} as an extension of {\UCBabbv} to kernel-based prediction functions.

\begin{algorithm}[!t]
\begin{algorithmic}[1]
\caption{{\UCBabbv}-KLR}
\Require step $T$, cost parameter $\lambda > 0$, regression dataset $\gD_a \leftarrow \varnothing$ for $a \in \gA$, $V_a \leftarrow \bm{0}_{d \times d}$, round budget $\tau_{\max}$, tuning parameter $\tau_\textup{exp}$ and $\alpha$, regularization parameter $\beta$, bonus $\gB$
\For{$a \in \gA$}
    \State Collect $\gD_a \leftarrow \gD_a \cup \{ (x_s, r_s) \}_{s = 1}^{\tau_{\exp}}$ by pulling arm $a$ (and moving on to the next context) upon observing contexts $\{x_s\}_{s = 1}^{\tau_{\exp}}$ drawn from the environment.
    \State Calculate the weight $\hatw_a$ by minimizing the regularized negative 
    \begin{equation}
    \label{mle-2-klr}
        \hatw_a \leftarrow \argmin_{w \in \sR^{|\gD_a|}} \left\{ - \sum_{(X,R) \in \gD_a} \left( R \cdot \log \hatq_a(X, w|\gD_a) + (1 - R) \cdot \log (1 - \hatq_a(X, w|\gD_a)) \right) + \beta \| w \|^2_{ \bm{K}_{\gD_a} } \right\}.
    \end{equation}
\EndFor
\For{$t=|\gA|\tau_\textup{exp} + 1, \cdots, T$}
    \State Observe context $x_t$ and compute the estimated success probability for each arm $a \in \gA$:
    \begin{equation}
    \label{opt-hatq-klr}
        \hatq_a \leftarrow \mu\left( \sum_{(X,R) \in \gD_a} \hatw_a^X k(X, x_t) + \alpha \cdot \gB (x_t | \gD_a) \right).
    \end{equation}
    \While{not playing null action $a_0$}
        \If{$\max_{a \in \gA} \{\hatq_a - \lambda\cdot c_a \} \le 0$ \textbf{or} hitting the round budget \textbf{or} receiving a reward of 1}
        \State Play null action $\hata \leftarrow a_0$ and move on to the next step.
        \Else
        \State Play arm $\hata \leftarrow \argmin \frac{c_a}{\hatq_a}$ and receive reward $r$.
        \State Update the regression dataset $\gD_{\hata} \leftarrow \gD_{\hata} \cup \{ (x_t, r) \}$.
        \State Recompute the MLE $\hattheta_a$ and the estimate $\hatq_{\hata}$ by Equations~(\ref{mle-2}) and~(\ref{opt-hatq}), respectively.
        \EndIf
    \EndWhile
\EndFor
\label{alg:ucb-klr}
\end{algorithmic}
\end{algorithm}

\end{document}